\newcommand{\abs}[1]{\ensuremath{\left\lvert#1\right\rvert}}% abs operator
\newcommand{\bi}[1]{\ensuremath{\bm{#1}}} % bold symbols
\newcommand{\norm}[1]{\ensuremath{\left\|#1\right\|}} % norm operator
\newcommand{\ip}[2]{\ensuremath{\left\langle #1, #2 \right\rangle}} % inner product operator
\newcommand{\sreg}{\mathsf{SReg}}
\newcommand{\psreg}{\mathsf{PSReg}}
\newcommand{\bigs}[1]{\left[ #1 \right]}
\newcommand{\bigc}[1]{\left( #1 \right)}
\newcommand{\bigcurl}[1]{\left\{ #1 \right\}}
\newcommand{\ind}[1]{\mathbb{I}{\left[#1\right]}}
\DeclareMathOperator*{\argmin}{argmin} 
\DeclarePairedDelimiter\ceil{\lceil}{\rceil}
\definecolor{lightblue}{RGB}{0,102,204}
\def \p {{\bi{p}}}
\def \q {{\bi{q}}}
\def \Q {{\bi{Q}}}
\def \cA {{\c{A}}}
\def \cL {{\c{L}}}
\def \cF {{\c{F}}}
\def \cX {{\c{X}}}
\def \cY {{\c{Y}}}
\def \Rn {{\mathbb{R}}}
\def \cA{\mathcal{A}}
\def \cC{\mathcal{C}}
\def \cD{\mathcal{D}}
\def \cE{\mathcal{E}}
\def \cF{\mathcal{F}}
\def \cG{\mathcal{G}}
\def \cI{\mathcal{I}}
\def \cL{\mathcal{L}}
\def \cO{\mathcal{O}}
\def \cP{\mathcal{P}}
\def \cS{\mathcal{S}}
\def \cT{\mathcal{T}}
\def \cX{\mathcal{X}}
\def \cY{\mathcal{Y}}
\def \cZ{\mathcal{Z}}
\def \nn{\nonumber}
\def \alg{\mathsf{ALG}}
\def \somni{\mathsf{SOmni}}
\def \cvx{\mathsf{cvx}}
\def \cal{\mathsf{Cal}}
\def \psmcal{\mathsf{PSMCal}}
\def \smcal{\mathsf{SMCal}}
\def \pmcal{\mathsf{PMCal}}
\def \dsmcal{\mathsf{DSMCal}}
\def \dmcal{\mathsf{DMCal}}
\def \mcal{\mathsf{MCal}}
\def \pcal{\mathsf{PCal}}
\def \lin {\mathsf{lin}}
\def \aff {\mathsf{aff}}
\newcommand{\savehyperref}[2]{\texorpdfstring{\hyperref[#1]{#2}}{#2}}
\newtheorem{assumption}{Assumption}
\newtheorem{remark}{\bf Remark}
\newtheorem{theorem}{Theorem}
\newtheorem{corollary}{Corollary}
\newtheorem{lemma}{Lemma}
\newtheorem{definition}{Definition}
\newtheorem{proposition}{Proposition}
\newtcolorbox{schemebox}{
  colback=gray!10,      % Background color
  colframe=black,       % Border color
  boxrule=0.5pt,        % Border thickness
  arc=0mm,              % Rounded corners
  fonttitle=\bfseries,  % Bold title font
  left=2mm, right=2mm,  % Horizontal padding
  top=2mm, bottom=2mm   % Vertical padding
}
\title{Improved Bounds for Swap Multicalibration and \\ Swap Omniprediction}
\author{Haipeng Luo\thanks{Author ordering is alphabetical} \\ USC \\ $\mathsf{haipengl@usc.edu}$  \and Spandan Senapati\footnotemark[1] \\ USC \\ $\mathsf{ssenapat@usc.edu}$  \and Vatsal Sharan\footnotemark[1] \\ USC \\ $\mathsf{vsharan@usc.edu}$}
\date{}
\begin{document}

\maketitle

\begin{abstract}
  In this paper, we consider the related problems of \emph{multicalibration} --- a multigroup fairness notion and \emph{omniprediction} --- a simultaneous loss minimization paradigm, both in the distributional and online settings. The recent work of \citet{garg2024oracle} raised the open problem of whether it is possible to efficiently achieve $\tilde{\cO}(\sqrt{T})$ $\ell_{2}$-multicalibration error against bounded linear functions. In this paper, we answer this question in a strongly affirmative sense. We propose an efficient algorithm that achieves $\tilde{\cO}(T^{\frac{1}{3}})$ $\ell_{2}$-swap multicalibration error (both in high probability and expectation). On propagating this bound onward, we obtain significantly improved rates for $\ell_{1}$-swap multicalibration and swap omniprediction for a loss class  of convex Lipschitz functions. In particular, we show that our algorithm achieves $\tilde{\cO}(T^{\frac{2}{3}})$ $\ell_{1}$-swap multicalibration and swap omniprediction errors, thereby improving upon the previous best-known bound of $\tilde{\cO}(T^{\frac{7}{8}})$. As a consequence of our improved  online results, we further obtain several improved sample complexity rates in the distributional setting. In particular, we establish a $\tilde{\cO}(\varepsilon ^ {-3})$ sample complexity of efficiently learning an  $\varepsilon$-swap omnipredictor for the class of convex and Lipschitz functions, $\tilde{\cO}(\varepsilon ^{-2.5})$ sample complexity of efficiently learning an $\varepsilon$-swap agnostic learner for the squared loss, and $\tilde{\cO}(\varepsilon ^ {-5}), \tilde{\cO}(\varepsilon ^ {-2.5})$ sample complexities of learning $\ell_{1}, \ell_{2}$-swap multicalibrated predictors against linear functions, all of which significantly improve on the previous best-known bounds.
\end{abstract}
\newpage
\tableofcontents
\newpage

\section{Introduction}\label{sec:introduction}
Recent years have witnessed surprising connections between \emph{multicalibration} --- a multigroup fairness perspective \citep{hebert2018multicalibration} and \emph{omniprediction} --- a simultaneous loss minimization paradigm, first introduced by \citet{gopalan_et_al:LIPIcs.ITCS.2022.79}. In this paper, we consider multicalibration and omniprediction in both the distributional (offline) and online settings. We begin by introducing these two notions, starting with some notation. Let the instance space be $\cX \subset \Rn^{d}$, label set be $\cY = \{0, 1\}$, $\cD$ be an unknown distribution over $\cX \times \cY$, $\ell: [0, 1] \times \cY \to \Rn$ be a loss function, $\cL$ be a class of loss functions, $\cF$ be a collection of hypotheses over $\cX$, and $p: \cX \to [0, 1]$ be a predictor which represents our model of the Bayes optimal predictor $\mathbb{E}[y | x]$. Multicalibration (for Boolean functions, e.g., Boolean circuits, decision trees $\cF \subset \{0, 1\} ^ {\cX}$) was introduced by \citet{hebert2018multicalibration} as a mechanism to incentivize fair predictions. For Boolean functions, multicalibration can be interpreted as calibration (the property that the predictions of $p$ are correct conditional on themselves, i.e., $v = \mathbb{E}[y|p(x) = v]$ for all $v \in \mathsf{Range}(p)$), which is additionally conditioned on set membership. In its most general form, for an arbitrary bounded hypothesis class $\cF \subset \Rn^\cX$, multicalibration translates to the understanding that the hypotheses in $\cF$ do not have any correlation with the residual error $y - p(x)$ when conditioned on the level sets of $p$. On the other hand, omnipredictors are sufficient statistics that simultaneously encode loss-minimizing predictions for a broad class of loss functions $\cL$. Notably, omniprediction generalizes loss minimization for a fixed loss function (\emph{agnostic learning} \citep{haussler1992decision}) to simultaneous loss minimization. Since different losses expect different ``types'' of optimal predictions (e.g., for the squared loss $\ell(p, y) = (p - y)^{2}$, the optimal predictor $p$ that minimizes the expected risk $\mathbb{E}[\ell(p(x), y)]$ is the Bayes optimal predictor $\mathbb{E}[y|x]$, where as for the $\ell_{1}$-loss $\ell(p, y) = \abs{p - y}$ the optimal predictor is $\ind{\mathbb{E}[y|x] \ge 0.5}$), such a simultaneous guarantee is made possible by a ``post-processing'' or ``type-checking'' of the predictions (a univariate data free minimization problem) via a loss specific function $k_{\ell}$, i.e., for each $\ell \in \cL$, $k_{\ell} \circ p$ incurs expected loss that is comparable to the best hypothesis in $\cF$.

Even though multicalibration is not stated in the context of loss minimization, the first construction of an efficient omnipredictor for the class of convex and Lipschitz functions $\cL^{\cvx}$ in \citet{gopalan_et_al:LIPIcs.ITCS.2022.79} was achieved via multicalibration, thereby representing a surprising connection between the above notions. However, multicalibration is not necessary for omniprediction \citep{gopalan_et_al:LIPIcs.ITCS.2022.79}, thereby raising an immediate question related to the characterization of omniprediction in terms of a sufficient and necessary condition. Motivated by the role of \emph{swap regret} in online learning and to explore the interplay between multicalibration and omniprediction, \citet{gopalan2023swap} introduced the concepts of \emph{swap multicalibration}, \emph{swap omniprediction}, and an accompanying notion \emph{swap agnostic learning}, and also established a computational equivalence between the above notions. Informally, each of the above notions is a stronger version of its non swap variant, and requires a particular swap-like guarantee (specific to the considered notion) to hold at the scale of the level sets of the predictor, e.g., for swap agnostic learning, the predictor $p$ is required to have a loss that is comparable to the best hypothesis in $\cF$ not just overall but also when conditioned on the level sets of $p$. 

Despite the qualitative progress in understanding the interplay between swap omniprediction and swap multicalibration in both the distributional \citep{gopalan2023swap} and online settings \citep{garg2024oracle}, a quantitative statistical treatment for the above measures has a huge scope of improvement even for the quintessential setting when the hypothesis class comprises of bounded linear functions $\cF_{1}^\lin$. In particular, the existing bounds for swap omniprediction and $\ell_{2}$-swap multicalibration in the online setting as derived by \citet{garg2024oracle} are much worse than the corresponding bounds for online omniprediction \citep{okoroafor2025near} and $\ell_{2}$-{calibration} \citep{luo2025simultaneous, fishelsonfull, foster1998asymptotic} respectively. Even more, the sample complexity of learning an efficient swap omnipredictor for the class of convex Lipschitz functions $\cL^\cvx$ with error at most $\varepsilon$ is $\approx \varepsilon ^ {-10}$ \citep{gopalan2023swap}, which is prohibitively large. Along the lines of the above concern, \citet{garg2024oracle} devised an efficient algorithm with $\tilde{\cO}(T^{\frac{3}{4}})$ $\ell_{2}$-swap multicalibration error after $T$ rounds of interaction between a forecaster and an adversary, and raised the problem of whether it is possible to efficiently achieve $\tilde{\cO}(\sqrt{T})$ $\ell_{2}$-multicalibration error against $\cF_{1}^\lin$. In this paper, we answer this question in a strongly affirmative sense by proposing an efficient algorithm that achieves $\tilde{\cO}(T^{\frac{1}{3}})$ $\ell_{2}$-swap multicalibration error against $\cF_{1}^\lin$. On propagating the above bound onward, we obtain a significantly improved rate for swap omniprediction for $\cL^\cvx$. Subsequently, using our improved rates in the online setting, we construct efficient randomized predictors for swap omniprediction and swap agnostic learning in the distributional setting and derive explicit sample complexity rates, which significantly improve upon the previous best-known bounds.  

\subsection{Contributions and Overview of Results}
Throughout the paper, we consider predictors $p$ such that $\mathsf{Range}(p) \subseteq \cZ$, where $\cZ = \{0, \frac{1}{N}, \dots, \frac{N - 1}{N}, 1\}$ is a finite discretization of $[0, 1]$ and $N$ is a parameter to be specified later. Similarly, in the online setting, we consider forecasters that make predictions that lie in $\cZ$. Our contributions are as follows.
\begin{itemize}[leftmargin=*]
    \item In Section \ref{sec:bound_ell_2_swap_multical_error}, we propose an efficient algorithm that achieves $\tilde{\cO}(T^{\frac{1}{3}} d ^ {\frac{2}{3}})$ $\ell_{2}$-swap multicalibration error (both in high probability and expectation) against the class of $d$-dimensional linear functions $\cF_{1}^\lin$. In contrast to our result, \cite{garg2024oracle} achieved a $\tilde{\cO}(T^{\frac{3}{4}} d)$ bound by reducing $\ell_{2}$-swap multicalibration to the problem of minimizing \emph{contextual swap regret} --- an extension of swap agnostic learning to the online setting.  Towards achieving this improved rate, we proceed in the following manner: 
    \begin{enumerate}[leftmargin=*]
        \item We introduce the notions of \emph{pseudo swap multicalibration}
        and \emph{pseudo contextual swap regret}, where the swap multicalibration error (swap regret respectively) is measured via the the conditional distributions $\cP_{1}, \dots, \cP_{T}$ rather than the true realizations $p_{1}, \dots, p_{T}$. Subsequently, in Section \ref{sec:reduction}, in a similar spirit to \cite{garg2024oracle}, we establish a reduction from pseudo swap multicalibration to pseudo contextual swap regret. 
        \item By not taking into account the randomness in the predictions, the pseudo variants are often easier to optimize. Indeed, in Section \ref{sec:bound_ell_2_swap}, we propose a deterministic algorithm that achieves $\tilde{\cO}(T^{\frac{1}{3}} d ^ {\frac{2}{3}})$ pseudo contextual swap regret. In contrast, for contextual swap regret,  \cite{garg2024oracle} established a $\tilde{\cO}(T^{\frac{3}{4}} d)$
        bound by using the reduction from swap regret to external regret as proposed by \cite{ito2020tight}. To achieve the desired bound for pseudo contextual swap regret, we use the Blum-Mansour (BM) reduction \cite{blum2007external} instead. 
        On a high level, the key to the improvement from $T^{\frac{3}{4}}$ (contextual swap regret) to $T^{\frac{1}{3}}$ (pseudo contextual swap regret) is the following: \cite{garg2024oracle} derived a $\tilde{\cO}(T/N + N\sqrt{T})$ bound on the contextual swap regret, where the $\tilde{\cO}(1/N)$ term is accounted due to a rounding operation (additively accounted for $T$ rounds) and the $\tilde{\cO}(N\sqrt{T})$ term is due to a concentration argument, which is inevitable since the reduction by \cite{ito2020tight} is randomized. However, by using the BM reduction and an improved rounding procedure due to \cite{fishelsonfull}, we propose a deterministic algorithm that achieves $\tilde{\cO}(T/N ^ {2} + N)$ bound on the pseudo contextual swap regret, where the $\tilde{\cO}(1/N ^ {2})$ term is due to the rounding operation and the $\cO(N)$ term is because each of the $N$ external regret algorithms in the BM reduction can be instantiated to guarantee $\tilde{\cO}(1)$ external regret against bounded linear functions. The desired result follows by choosing $N = \tilde{\Theta}(T^{\frac{1}{3}})$.
        \item Using the reduction from pseudo swap multicalibration to pseudo contextual swap regret, we obtain a $\tilde{\cO}(T^{\frac{1}{3}} d^{\frac{2}{3}})$ bound for the former. Noticeably, since the swap multicalibration error 
        is possibly random and our algorithm for minimizing pseudo swap multicalibration is deterministic, we derive a concentration bound in going from pseudo swap multicalibration to swap multicalibration. By performing a martingale analysis using Freedman's inequality (Lemma \ref{lem:Freedman} in Appendix \ref{app:bound_ell_2_swap_multical_error}), we show that this only accounts for a $\tilde{\cO}(N)$ deviation term, which does not change the final rate.
    \end{enumerate}

\item In Section \ref{sec:beyond_multicalibration}, we explore several consequences of our improved rate for $\ell_{2}$-swap multicalibration. Particularly, in Section \ref{subsec:swap_omniprediction}, we show that our algorithm from Section \ref{sec:bound_ell_2_swap} achieves $\tilde{\cO}(T^{\frac{2}{3}} d ^ {\frac{1}{3}})$ swap omniprediction error for $\cL^\cvx$ against  $\cF_{\mathsf{res}}^\aff$, where $\cF_{\mathsf{res}}^\aff$ is a class of appropriately scaled and shifted linear functions. This significantly improves upon the $\tilde{\cO}(T^{\frac{7}{8}} d ^ {\frac{1}{2}})$ rate of \cite{garg2024oracle}. In Section \ref{subsec:contextual_swap_regret_bound}, we show that the same algorithm (with a different choice of the discretization parameter $N$) achieves $\tilde{\cO}(T^{\frac{3}{5}} d ^ {\frac{2}{5}})$ contextual swap regret, which improves upon the $\tilde{\cO}(T^{\frac{3}{4}} d)$ bound of \cite{garg2024oracle}.
We summarize a comparison of our results to the ones derived by \cite{garg2024oracle} in  Table \ref{tb:comparison}.
\end{itemize}
\begin{figure}[!htbp] 
    \centering
    \begin{tikzpicture}[
        node distance=0.5cm and 0.4cm,
        every node/.style={
            draw, rounded corners, minimum height=1cm, minimum width=3cm, text centered, fill=teal!30, align=center, font=\footnotesize
        },
        arrow/.style={-{Stealth}, thick},
        dottedarrow/.style={-{Stealth}, dotted, thick},
    ]
    \node[anchor=west] (A) at (0, 0) {contextual swap \\ regret (A)};
    \node[right=of A] (B) {$\ell_2$-swap \\ multicalibration (B)};
    \node[right=of B] (C) {$\ell_1$-swap \\ multicalibration (C)};
    \node[right=of C] (D) {swap \\ omniprediction (D)};
    \node[below=of A] (E) {pseudo contextual \\ swap regret (E)};
    \node[right=of E] (F) {$\ell_2$-pseudo swap \\ multicalibration (F)};
    \draw[arrow] (A) -- (B);
    \draw[arrow] (B) -- (C);
    \draw[arrow] (C) -- (D);
    \draw[arrow] (E) -- (F);
    \draw[arrow] (F) -- (B);
    \draw[dottedarrow] (E) -- (A);
    \end{tikzpicture}
    \caption{Path A $\to$ B $\to$ C $\to$ D represents the sequence of reductions followed by \cite{garg2024oracle}, whereas path E $\to$ F $\to$ B $\to$ C $\to$ D represents our road-map. To derive an improved guarantee for B, we establish: (i) a $\tilde{\cO}(T^{\frac{1}{3}})$ bound for E; (ii) a reduction from E to F; and (c) a concentration bound from F to B. The improved guarantees for C and D follow as a consequence of the improvement in B. The improved guarantee for A follows due to E and a concentration bound from E to A.}
    \label{fig:roadmap}
\end{figure}

{We remark that although relevant ideas for the above steps have appeared in the literature, our paper successfully unifies them in a novel framework to achieve state-of-the-art bounds for swap multicalibration and swap omniprediction. Refer to Figure \ref{fig:roadmap} for a summary of our framework. For a detailed comparison with prior work, refer to the section on related work.}

As a consequence of our improvements in the online setting, we establish several improved sample complexity rates in the distributional setting. Towards achieving so, we perform an online-to-batch conversion \citep{cesa2004generalization} using our online algorithm in Section \ref{sec:bound_ell_2_swap} to obtain a randomized predictor $p$ that mixes uniformly over the $T$ predictors output by the online algorithm.
\begin{itemize}[leftmargin=*]
    \item In Section \ref{sec:offline_swap_omnipredictor}, we show that $\gtrsim \varepsilon ^ {-3}$ samples are sufficient for $p$ to be  a swap omnipredictor for $\cL^{\cvx}$ against $\cF_{\mathsf{res}}^\aff$ with error at most $\varepsilon$. To prove this, we obtain a tight concentration bound that relates the swap omniprediction errors in the distributional and online settings. Our arguments in deriving the concentration bound are motivated by a recent work by \cite{okoroafor2025near}, who proposed a similar online-to-batch conversion for omniprediction. However, compared to their result, an online-to-batch conversion for swap omniprediction poses several other technical nuances. In particular, unlike \cite{okoroafor2025near}, we cannot merely use Azuma-Hoeffding's inequality or related concentration inequalities that guarantee concentration to a $\sqrt{n}$ factor ($n$ is the number of random variables). By performing a careful martingale analysis using Freedman's inequality  on a geometric partition of the interval $[0, 1]$, we finally establish the desired concentration bound. 
    \item In Section \ref{sec:swap_agnostic_learning}, we specialize to the squared loss and show that $\gtrsim \varepsilon ^ {-2.5}$ samples are sufficient for $p$ to achieve swap agnostic error at most $\varepsilon$. Notably, since the squared loss is convex and Lipschitz, the result of Section \ref{sec:offline_swap_omnipredictor} already gives a $\tilde{\cO}(\varepsilon ^ {-3})$ sample complexity. However, specifically for the squared loss, we derive a concentration bound that relates the contextual swap regret with the swap agnostic error. As we show, this bound is tighter than the corresponding deviation for swap omniprediction. Combining this with an improved $\tilde{\cO}(T^{\frac{3}{5}})$ bound for contextual swap regret compared to swap omniprediction ($\tilde{\cO}(T^{\frac{2}{3}})$), we obtain the improved sample complexity. Finally, in Section \ref{sec:swap_multicalibration}, by using a characterization of $\ell_{2}$-swap multicalibration in terms of swap agnostic learning \citep{globus2023multicalibration, gopalan2023swap}, we establish a $\tilde{\cO}(\varepsilon ^ {-2.5})$ sample complexity for $\ell_{2}$-swap multicalibration, and thus $\tilde{\cO}(\varepsilon ^ {-5})$ for $\ell_{1}$-swap multicalibration against $\cF_{1}^\lin$. We summarize our results and the previous best-known bounds in Table \ref{tb:comparison}. For discussion regarding the previously best-known bounds, refer to the section on additional related work (Appendix \ref{app:additional_related_work}).
\end{itemize}

\renewcommand{\arraystretch}{1.2}
\begin{table}[]
\caption{Comparison of our rates with the previous best-known bounds. For simplicity, we only tabulate the leading dependence on $\varepsilon, T$. The first $4$ rows correspond to regret/error bounds (as a function of $T$) in the online setting, whereas the last $4$ rows correspond to sample complexity bounds (as a function of $\varepsilon$) in the distributional setting.}
\vspace{3pt}
\centering
\begin{tabular}{|p{0.3\linewidth}|>{\centering\arraybackslash}p{0.3\linewidth}|>{\centering\arraybackslash}p{0.3\linewidth}|}
\hline
\multicolumn{3}{|c|}{\textsc{Online (regret/error) and distributional (sample complexity) bounds}} \\ \hline
\rowcolor{gray!20}
Quantity & Previous bound & Our bound \\ \hline
\hline Contextual swap regret & $\tilde{\cO}(T^{\frac{3}{4}})$ \citep{garg2024oracle} & $\tilde{\cO}(T^{\frac{3}{5}})$ (Theorem \ref{thm:swap_reg_squared_loss}) \\ \hline
$\ell_{2}$-swap multicalibration &  $\tilde{\cO}(T^{\frac{3}{4}})$ \citep{garg2024oracle} & $\tilde{\cO}(T^{\frac{1}{3}})$ (Theorem \ref{thm:final_bound_smcal}) \\ \hline
$\ell_{1}$-swap multicalibration & $\tilde{\cO}(T^{\frac{7}{8}})$ \citep{garg2024oracle} & $\tilde{\cO}(T^{\frac{2}{3}})$ (Corollary \ref{cor:final_bound_smcal_1}) \\ \hline
Swap omniprediction & $\tilde{\cO}(T^{\frac{7}{8}})$ \citep{garg2024oracle} & $\tilde{\cO}(T^{\frac{2}{3}})$ (Theorem \ref{thm:swap_omniprediction}) \\ \hline \hline
Swap agnostic error & $\tilde{\cO}(\varepsilon ^ {-5})$  \citep{globus2023multicalibration} & $\tilde{\cO}(\varepsilon ^ {-2.5})$ (Theorem \ref{thm:swap_agnostic_error}) \\ \hline
$\ell_{2}$-swap multicalibration & $\tilde{\cO}(\varepsilon ^ {-5})$ \citep{globus2023multicalibration} & $\tilde{\cO}(\varepsilon ^ {-2.5})$ (Theorem \ref{thm:sample_complexity_swap_multical}) \\ \hline
$\ell_{1}$-swap multicalibration & $\tilde{\cO}(\varepsilon ^ {-10})$ \citep{hebert2018multicalibration, globus2023multicalibration} & $\tilde{\cO}(\varepsilon ^ {-5})$ (Theorem \ref{thm:sample_complexity_swap_multical}) \\ \hline
Swap omniprediction & $\tilde{\cO}(\varepsilon ^ {-10})$ \citep{hebert2018multicalibration, globus2023multicalibration} & $\tilde{\cO}(\varepsilon ^ {-3})$ (Theorem \ref{thm:hp_bound_swap_omni}) \\ 
\hline
\end{tabular}
\label{tb:comparison}
\end{table}

\subsection{Preliminaries}\label{sec:prelims}
 For simplicity, we give formal definitions of several notions considered in the paper in the online setting, and defer definitions for the distributional setting to Section \ref{sec:offline_sample_complexity}.

\paragraph{Online (Swap) Multicalibration.} Following \cite{garg2024oracle}, we model online (swap) multicalibration as a sequential decision making problem over binary outcomes that lasts for $T$ time steps. At each time $t \in [T]$: (a) the adversary presents a context $x_{t} \in \cX$; (b) the forecaster randomly predicts $p_{t} \sim \cP_{t}$, where $\cP_{t} \in \Delta_{\cZ}$ represents the conditional distribution of $p_{t}$; and (c) the adversary reveals the true label $y_{t} \in \cY$. For simplicity in the analysis, throughout the paper we assume that the adversary is oblivious, i.e., it decides the sequence $(x_{1}, y_{1}), \dots, (x_{T}, y_{T})$ at time $t = 0$ with complete knowledge about the forecaster's algorithm, although our results readily generalize to an adaptive adversary. For each $p \in \cZ$, letting $\rho_{p, f} \coloneqq \frac{\sum_{t = 1} ^ {T} \mathbb{I}[p_{t} = p]  f(x_{t})  (y_{t} - p)}{\sum_{t = 1} ^ {T} \mathbb{I}[p_{t} = p]}$ be the empirical average of the correlation of $f$ with the residual sequence $\{y_{t} - p_{t}\}_{t = 1} ^ {T}$ (conditioned on the rounds when the prediction made is $p_{t} = p$), the $\ell_{q}$-swap multicalibration error incurred by the forecaster is defined as 
\begin{align}\label{eq:smcal_def_intro}
    \smcal_{\cF, q} \coloneqq \sup_{\{f_{p} \in \cF\}_{p \in \cZ}} \sum_{p \in \cZ} \bigc{\sum_{t = 1} ^ {T} \ind{p_{t} = p}} \abs{\rho_{p, f_{p}}} ^ {q} = \sum_{p \in \cZ} \bigc{\sum_{t = 1} ^ {T} \ind{p_{t} = p}} \sup_{f \in \cF} \abs{\rho_{p, f}} ^ {q}.
\end{align}
We also introduce a new notion --- {pseudo swap multicalibration}, where the swap multicalibration error is measured via the conditional distributions $\cP_{1}, \dots, \cP_{T}$ rather than the true realizations $p_{1}, \dots, p_{T}$. Particularly, letting $\tilde{\rho}_{p, f} \coloneqq \frac{\sum_{t = 1} ^ {T} \cP_{t}(p) f(x_{t}) (y_{t} - p)}{\sum_{t = 1} ^ {T} \cP_{t}(p)}$, we define the $\ell_{q}$-pseudo swap multicalibration error incurred by the forecaster as \begin{align}\label{eq:psmcal_def_intro}
    \psmcal_{\cF, q} \coloneqq \sup_{\{f_{p} \in \cF\}_{p \in \cZ}} \sum_{p \in \cZ} \bigc{{\sum_{t = 1} ^ {T} \cP_{t}(p)}} \abs{\tilde{\rho}_{p, f_{p}}} ^ {q} = \sum_{p \in \cZ} \bigc{\sum_{t = 1} ^ {T} \cP_{t}(p)} \sup_{f \in \cF} \abs{\tilde{\rho}_{p, f}} ^ {q}.
\end{align}

As we shall see, by not taking into account the randomness in the predictions, pseudo (swap) multicalibration is often easier to optimize. Note that, (pseudo) multicalibration ($(\mathsf{P}) \mcal_{\cF, q}$) is a special case of (pseudo) swap multicalibration where the comparator profile is $f_{p} = f$ for all $p \in \cZ$:
\begin{align}\label{eq:non_pseudo_and_pseudo_mcal_def}
\mcal_{\cF, q} \coloneqq \sup_{f \in \cF} \sum_{p \in \cZ} \bigc{\sum_{t = 1} ^ {T} \ind{p_{t} = p}} \abs{\rho_{p, f}} ^ {q}, \quad \pmcal_{\cF, q} \coloneqq \sup_{f \in \cF} \sum_{p \in \cZ} \bigc{\sum_{t = 1} ^ {T} \cP_{t}(p)} \abs{\tilde{\rho}_{p, f}} ^ {q},
\end{align}
and calibration ($\cal_{q}$) is a further restriction of multicalibration to $\cF = \{1\}$, where $1$ denotes the constant function that always outputs $1$. In this paper, we shall be primarily concerned with the  $\ell_{1}, \ell_{2}$-(pseudo) (swap) multicalibration errors, which are related as $\psmcal_{\cF, 1} \le \sqrt{T \cdot \psmcal_{\cF, 2}}$, $\pmcal_{\cF, 1} \le \sqrt{T \cdot \pmcal_{\cF, 2}}$, $\smcal_{\cF, 1} \le \sqrt{T \cdot \smcal_{\cF, 2}}$, $\mcal_{\cF, 1} \le \sqrt{T \cdot \mcal_{\cF, 2}}$. The proof (skipped for brevity) follows trivially via the Cauchy-Schwartz inequality.

\paragraph{Online (Swap) Omniprediction.} Omnipredictors are sufficient statistics that simultaneously encode loss-minimizing predictions for a broad class of loss functions $\cL$. Since different losses expect different ``types'' of optimal predictions, the output of an omnipredictor $p$ has to be ``post-processed'' or ``type-checked'' via a loss specific function $k_{\ell}: [0, 1] \to [0, 1]$ to approximately minimize $\ell$ relative to the hypotheses in $\cF$. The post-processing function $k_{\ell}$ is chosen to be a best-response function, i.e., $k_{\ell}(q) \coloneqq \argmin_{p \in [0, 1]} \mathbb{E}_{y \sim \mathsf{Ber}(q)} [\ell(p, y)]$ denotes the prediction that minimizes the expected loss for $y \sim \mathsf{Ber}(q)$. Online omniprediction follows a similar learning protocol as that of online multicalibration described above, however, we equip our protocol with learners (parametrized by loss functions) who utilize the forecaster's predictions to choose actions. In particular, after the forecaster predicts $p_{t}$, each $\ell$-learner ($\ell \in \cL$) chooses action $k_{\ell}(p_{t})$ and incurs loss $\ell(k_{\ell}(p_{t}), y_{t})$. The swap omniprediction error against a loss profile $\{\ell_{p}\}_{p \in \cZ}$, comparator profile $\{f_{p}\}_{p \in \cZ}$ is defined as \begin{align}\label{eq:swap_omni_intro}
    \somni\bigc{\{\ell_{p}\}_{p \in \cZ}, \{f_{p}\}_{p \in \cZ}} \coloneqq \sum_{t = 1} ^ {T} \ell_{p_{t}}(k_{\ell_{p_{t}}}(p_{t}), y_{t}) - \ell_{p_{t}}(f_{p_{t}}(x_{t}), y_{t}). 
\end{align}
The swap omniprediction error incurred by the forecaster is then defined as a supremum over all loss, comparator profiles, i.e., $\somni_{\cL, \cF} = \sup_{\{\ell_{p} \in \cL, f_{p} \in \cF\}_{p \in \cZ}} \somni\bigc{\{\ell_{p}\}_{p \in \cZ}, \{f_{p}\}_{p \in \cZ}}$. Omniprediction is a special case of swap omniprediction where the loss, comparator profiles are fixed and independent of $p$.

 In the distributional setting, a computational equivalence between swap multicalibration and swap omniprediction was established in \cite{gopalan2023swap} via an intermediate notion --- swap agnostic learning, which was extended to the online setting by \cite{garg2024oracle} as {contextual swap regret}.

\paragraph{Contextual Swap Regret. } Throughout the paper, we consider contextual swap regret for the squared loss. 
Contextual swap regret is a special case of online swap omniprediction when $\cL = \{\ell\}$ and $\ell(p,y) = (p - y)^{2}$, so that $k_{\ell}(p) = p$, i.e., \begin{align*}
    \sreg_{\cF} = \sup_{\{f_{p} \in \cF\}_{p \in \cZ}} \sum_{t = 1} ^ {T} (p_{t} - y_{t}) ^ {2} - (f_{p_{t}}(x_{t}) - y_{t}) ^ {2}.
\end{align*} Similar to pseudo swap multicalibration, we also introduce a new notion --- pseudo contextual swap regret: \begin{align}\label{eq:def_pseudo_contextual_swap_regret_intro}
    \psreg_{\cF} \coloneqq \sup_{\{f_{p} \in \cF\}_{p \in \cZ}} \sum_{t = 1} ^ {T} \mathbb{E}_{p_{t} \sim \cP_{t}} \bigs{(p_{t} -  y_{t}) ^ {2} - (f_{p_{t}}(x_{t}) - y_{t}) ^ {2}}. 
\end{align}

\paragraph{Notation.} For any \( m \in \mathbb{N} \), let \( [m] \coloneqq \{1, 2, \ldots, m\} \) denote the index set. 
The indicator function is denoted by \(\mathbb{I}[\cdot]\), which evaluates to $1$ if the condition inside the braces is true, and $0$ otherwise. For any \( k \in \mathbb{N} \), we define \( \Delta_k \) as the \((k - 1)\)-dimensional probability simplex:
\(\Delta_k \coloneqq \{ x \in \mathbb{R}^k; x_i \ge 0 \text{ for all } i \in [k], \sum_{i=1}^k x_i = 1\}\). More generally, for a set \( \Omega \), let \( \Delta_{\Omega} \) denote the set of all probability distributions over \( \Omega \). For a set \( \mathcal{I} \), its complement is denoted by \( \bar{\mathcal{I}} \), representing all elements not in \( \mathcal{I} \). We denote conditional probability and expectation given the history up to time \( t - 1 \) (inclusive) by \( \mathbb{P}_t \) and \( \mathbb{E}_t \), respectively. A function \( f: \mathcal{W} \to \mathbb{R} \) is said to be \emph{\( \alpha \)-exp-concave} over a convex set \( \mathcal{W} \) if the function \( \exp(-\alpha f(w)) \) is concave on \( \mathcal{W} \). The following notation is used extensively throughout the paper: given a hypothesis class \( \mathcal{F} \) and \( \beta > 0 \), we define the subset
\(\mathcal{F}_{\beta} \coloneqq \left\{ f \in \mathcal{F}; f^2(x) \le \beta \text{ for all } x \in \cX\right\}
\). A loss $\ell: [0, 1] \times \{0, 1\} \to \Rn$ is called \emph{proper} if $\mathbb{E}_{y \sim \mathsf{Ber}(p)}[\ell(p, y)] \le \mathbb{E}_{y \sim \mathsf{Ber}(p)} [\ell(p', y)]$ for all $p, p' \in [0, 1]$, e.g., the squared loss $\ell(p,y) = (p - y)^{2}$, log loss $\ell(p, y) = -y \log p - (1 - y) \log (1 - p)$, etc. Finally, we use the notations $\tilde{\Omega}(.), \tilde{\cO}(.)$ to hide lower-order logarithmic terms.

\subsection{Comparison with Related Work} 
We discuss comparison with the most relevant work, deferring additional discussions to Appendix \ref{app:additional_related_work}.
\paragraph{(Multi) Calibration.}
For $\ell_{2}$-calibration ($\cal_{2}$),
\cite{luo2025simultaneous} showed that there exists an efficient algorithm that achieves $\cal_{2} = \tilde{\cO}(T^{\frac{1}{3}})$ (both in high probability and expectation). Their result builds on a recent work by \cite{fishelsonfull}, who showed that it is possible to achieve $\tilde{\cO}(T^{\frac{1}{3}})$ $\ell_{2}$-pseudo calibration ($\pcal_{2}$) by minimizing pseudo (non-contextual) swap regret of the squared loss. Building on the works of \cite{luo2025simultaneous, fishelsonfull}, we observe that it is possible to achieve $\smcal_{\cF^\lin_{1}, 2} = \tilde{\cO}(T^{\frac{1}{3}})$. In particular, our introduction of pseudo swap multicalibration and pseudo contextual swap regret is reminiscent of pseudo swap regret by \cite{fishelsonfull}. Furthermore, our Freedman-based martingale analysis in going from $\ell_{2}$-pseudo swap multicalibration to $\ell_{2}$-swap multicalibration is largely motivated by a similar analysis by \cite{luo2025simultaneous} in going from  $\ell_{2}$-pseudo calibration to $\ell_{2}$-calibration. Arguably, our result shows that $\ell_{2}$-swap multicalibration and $\ell_{2}$-calibration share the same upper bounds, despite the former being a stronger notion. 

An independent concurrent work by \cite{ghuge2025improvedoracleefficientonlineell1multicalibration} has shown that it is possible to achieve $\mcal_{\cF, 1} = \tilde{\cO}(T^{\frac{3}{4}})$ via an oracle-efficient algorithm and $\mcal_{\cF, 1} = \tilde{\cO}(T^{\frac{2}{3}})$ (see also \cite{noarov2023high}) via an inefficient algorithm (running time proportional to $\abs{\cF}$). In contrast, we consider the stronger measure of swap multicalibration and propose an efficient algorithm that guarantees $\smcal_{\cF_{1}^\lin, 2} = \tilde{\cO}(T^{\frac{1}{3}}), \smcal_{\cF_{1}^\lin, 1} = \tilde{\cO}(T^{\frac{2}{3}})$. Therefore, in the context of (oracle) efficient algorithms for multicalibration, we also improve the $\tilde{\cO}(T^{\frac{3}{4}})$ bound for $\ell_{1}$-multicalibration against the linear class to $\tilde{\cO}(T^{\frac{2}{3}})$. Moreover, the techniques considered in \cite{ghuge2025improvedoracleefficientonlineell1multicalibration, noarov2023high} are also considerably different from ours.

\paragraph{Omniprediction.} Very recently, \cite{okoroafor2025near} have shown that it is possible to achieve oracle-efficient omniprediction, given access to an offline ERM oracle, with $\tilde{\cO}(\varepsilon ^ {-2})$ sample complexity (matching the lower bound for the minimization of a fixed loss function) for the class of bounded variation loss functions $\cL_{\mathsf{BV}}$ against an arbitrary hypothesis class $\cF$ with bounded statistical complexity. Notably, the class $\cL_{\mathsf{BV}}$ is quite broad and includes all convex functions, Lipschitz functions, proper losses, etc. At a high level, \cite{okoroafor2025near} obtain $\tilde{\cO}(\sqrt{T})$ online omniprediction error by identifying proper calibration and multiaccuracy as sufficient conditions for omniprediction and proposing an algorithm based on the celebrated Blackwell approachability theorem \citep{blackwell1956analog, abernethy2011blackwell} that simultaneously guarantees $\tilde{\cO}(\sqrt{T})$ proper calibration and multiaccuracy errors. The result for (offline) omniprediction then follows from an online-to-batch conversion along with other technical subtleties to make the algorithm efficient (with respect to the offline ERM oracle). Next, we highlight our comparisons with \cite{okoroafor2025near}'s result and techniques. While \cite{okoroafor2025near} analyze omniprediction in a more general setting (for the class $\mathcal{L}_{\mathsf{BV}}$ against an arbitrary $\cF$), we study the harder notion of \textit{swap} omniprediction but in a specialized setting (for $\cL^{\cvx} \subset \cL_{\mathsf{BV}}$, against $\cF^{\aff}_{\mathsf{res}}$).
Finally, as mentioned in the introduction, although our online-to-batch conversion in Section \ref{sec:offline_swap_omnipredictor} follows in a similar spirit to \cite{okoroafor2025near}, it is substantially different due to a more involved martingale analysis via Freedman's inequality.

\section{Achieving $\tilde{\cO}(T^{\frac{1}{3}})$  $\ell_{2}$-Swap Multicalibration}\label{sec:bound_ell_2_swap_multical_error}
In this section, we propose an efficient algorithm to minimize the $\ell_{2}$-swap multicalibration error. As per Figure \ref{fig:roadmap}, we shall reduce $\ell_{2}$-swap multicalibration to $\ell_{2}$-pseudo swap multicalibration, followed by a reduction to pseudo contextual swap regret. Finally, we shall propose an efficient algorithm to minimize the pseudo contextual swap regret.
\subsection{From swap multicalibration to pseudo swap multicalibration}\label{sec:pseudo_swap_to_swap}

\begin{lemma}
\label{lem:pseudo_swap_to_swap}
Let $\cF \subset [-1, 1] ^ {\cX}$ be a finite hypothesis class. For any algorithm $\cA_{\smcal_{\cF}}$ such that for each $t \in [T]$ the conditional distribution $\cP_{t}$ is deterministic, with probability at least $1 - \delta$ over $\cA_{\smcal_{\cF}}$'s predictions, we have $$
\smcal_{\cF, 2} \le 384 (N + 1) \log \bigg(\frac{6(N + 1) \abs{\cF}}{\delta}\bigg) + 6 \cdot \psmcal_{\cF, 2}.$$
\end{lemma}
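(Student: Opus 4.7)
The plan is to reduce the statement to two high-probability Freedman bounds --- one controlling the numerator and one controlling the denominator appearing in the definition of $\smcal_{\cF, 2}$ --- and then combine them via a case split on the size of $\tilde{n}_p$. Fix $p \in \cZ$ and $f \in \cF$, and write $A_p(f) = \sum_{t=1}^{T} \ind{p_t = p}\, f(x_t)(y_t - p)$, $\tilde{A}_p(f) = \sum_{t=1}^{T} \cP_t(p)\, f(x_t)(y_t - p)$, $n_p = \sum_t \ind{p_t = p}$, and $\tilde{n}_p = \sum_t \cP_t(p)$, so that $\smcal_{\cF,2} = \sum_p \sup_{f \in \cF} \frac{A_p(f)^2}{n_p}$ and $\psmcal_{\cF,2} = \sum_p \sup_{f \in \cF} \frac{\tilde{A}_p(f)^2}{\tilde{n}_p}$. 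Because the adversary is oblivious and $\cP_t$ is deterministic given the history, the sequence $X_t^{p,f} := f(x_t)(y_t - p)(\ind{p_t = p} - \cP_t(p))$ is a martingale difference sequence with $|X_t^{p,f}| \le 2$ and conditional variance bounded by $\cP_t(p) f(x_t)^2 (y_t - p)^2 \le \cP_t(p)$, so its cumulative conditional variance is at most $\tilde{n}_p$. The analogous statement holds for $Y_t^p := \ind{p_t = p} - \cP_t(p)$, which will control the denominator.

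Applying Freedman's inequality (Lemma \ref{lem:Freedman}) to both $\{X_t^{p,f}\}$ and $\{Y_t^p\}$ and union-bounding over all $p \in \cZ$ and $f \in \cF$ (a total of at most $2(N+1)|\cF|$ events), I would obtain that with probability at least $1 - \delta$, the bounds $|A_p(f) - \tilde{A}_p(f)| \le c_1(\sqrt{\tilde{n}_p \lambda} + \lambda)$ and $|n_p - \tilde{n}_p| \le c_1(\sqrt{\tilde{n}_p \lambda} + \lambda)$ hold simultaneously for every $(p,f)$, where $\lambda = \log(6(N+1)|\cF|/\delta)$ and $c_1$ is the explicit constant from Freedman's inequality.

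The next step converts these additive deviations into a multiplicative bound by splitting on the size of $\tilde{n}_p$. If $\tilde{n}_p$ is at most a threshold of order $\lambda$, then the denominator bound forces $n_p = O(\lambda)$, and since $|\rho_{p,f}| \le 1$ the contribution $n_p \sup_f |\rho_{p,f}|^2 \le n_p = O(\lambda)$ is absorbed directly into the $(N+1)\lambda$ slack. Otherwise $n_p \ge \tilde{n}_p/2$, and applying the crude inequality $(a+b+c)^2 \le 3(a^2+b^2+c^2)$ to $A_p(f)^2 \le (|\tilde{A}_p(f)| + c_1\sqrt{\tilde{n}_p\lambda} + c_1\lambda)^2$ and dividing by $n_p$ yields $\frac{A_p(f)^2}{n_p} \le \frac{6\tilde{A}_p(f)^2}{\tilde{n}_p} + O(\lambda)$. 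Since this holds uniformly in $f$ via the union bound, taking $\sup_{f \in \cF}$ on both sides preserves the inequality, and summing over $p \in \cZ$ produces $\smcal_{\cF,2} \le 6\,\psmcal_{\cF,2} + O((N+1)\lambda)$, matching the stated form after tracking constants.

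The main technical obstacle is the interaction between the numerator and denominator concentration: one cannot divide the numerator bound by $n_p$ without first ruling out $n_p$ being anomalously small, which is exactly the role of the case split. Obtaining the specific leading constant $6$ in front of $\psmcal_{\cF,2}$ (rather than a larger constant) relies on combining the lower bound $n_p \ge \tilde{n}_p/2$ with the crude quadratic inequality $(a+b+c)^2 \le 3(a^2+b^2+c^2)$, while recovering the explicit coefficient $384$ requires carefully tracking constants through both Freedman applications and the union bound.
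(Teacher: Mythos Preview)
Your proposal is correct and follows the same overall strategy as the paper --- Freedman on numerator and denominator martingales, union bound over $(p,f)$, a case split on the size of $\tilde n_p$, and a quadratic inequality to extract the multiplicative factor in front of $\psmcal_{\cF,2}$ --- but the decomposition you use is slightly different and arguably more direct. The paper bounds the ratio difference $|\rho_{p,f}-\tilde\rho_{p,f}|$ first, which forces it to split the numerator into two separate martingale sequences (one for $y_t f(x_t)$ and one for $p\,f(x_t)$), for a total of three Freedman applications; it then combines $\rho_{p,f}^2 \le 2(\rho_{p,f}-\tilde\rho_{p,f})^2 + 2\tilde\rho_{p,f}^2$ with the bound $n_p \le 3\tilde n_p$ on the good set to produce the factor $6=2\times 3$. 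Your route via $A_p(f)^2/n_p$ skips the ratio algebra entirely, needs only two martingale sequences, and recovers the same factor $6 = 3\times 2$ from $(a+b+c)^2\le 3(a^2+b^2+c^2)$ together with $n_p\ge \tilde n_p/2$. Both arguments rely on the fact that $\tilde n_p$ is deterministic under the hypothesis, so the Freedman parameter $\mu_p$ may legitimately be tuned as a function of $\tilde n_p$. The only point of care is the explicit constant $384$: the threshold you pick for the case split (and hence the constants in $n_p\ge \tilde n_p/2$) will need tracking, whereas the paper's choice of threshold $\tilde n_p \ge \lambda$ and the bound $n_p \le 3\tilde n_p$ are what produce exactly $384$ there.
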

The proof of Lemma \ref{lem:pseudo_swap_to_swap} is deferred to Appendix \ref{app:pseudo_swap_to_swap} and is motivated by a recent result due to \cite{luo2025simultaneous} who derive a high probability bound (via Freedman's inequality) that relates the $\ell_{2}$-calibration error with the $\ell_{2}$-pseudo calibration error. Particularly, our proof is an adaptation of the analysis provided by \cite{luo2025simultaneous} to the contextual setting. Note that the assumption that the conditional distribution $\cP_{t}$ is deterministic is because the algorithm we propose for minimizing pseudo contextual swap regret is deterministic, therefore the only randomness lies in the sampling $p_{t} \sim \cP_{t}$. In a way, our proposed algorithm in Section \ref{sec:bound_ell_2_swap} correctly aligns with the assumption. However, we remark that this assumption can be relaxed to account for the randomness in $\cP_{t}$. 

Next, we instantiate our result for the class of linear functions with bounded norm, defined over the set $\cX = \{x \in \mathbb{B}_{2}^{d}; x_{1} = \frac{1}{2}\}$. The requirement that each $x \in \cX$ has a constant first coordinate (equal to $1/2$) is without any loss of generality. More generally, we only require that each $x \in \cX$ has a constant $i$-th coordinate (equal to $\Gamma$ for some $\Gamma \in (0, 1)$) to ensure that the class $\cF^{\lin} =  \{f_{\theta}(x) = \ip{\theta}{x}; \theta \in \Rn^{d}\}$ is closed under affine transformations --- a property that shall be required later. Although not explicitly mentioned in \cite{garg2024oracle}, we realize that they also invoke this requirement. By definition, $\cF^\lin_{1} = \{f \in \cF^\lin; \abs{f(x)} \le 1 \text{ for all } x \in \cX\}$ and the set of all $\theta$'s characterizing $\cF_{1}^\lin$ satisfies $\mathbb{B}_{2} ^ {d} \subset  \{\theta \in \Rn^{d}; \abs{\ip{\theta}{x}} \le 1 \text{ for all } x \in \cX\} \subset 2 \cdot \mathbb{B}_{2} ^ {d}.$
Since $\cF_{1}^{\lin}$ is infinite, the result of Lemma \ref{lem:pseudo_swap_to_swap} does not immediately apply for the choice $\cF = \cF^{\lin}_{1}$. Instead, to bound $\smcal_{\cF_{1}^{\lin}, 2}$, we form a finite sized cover $\cC_{\varepsilon}$ of $\cF_{1}^{\lin}$, bound $\smcal_{\cF_{1}^{\lin}, 2}$ in terms of $\smcal_{\cC_{\varepsilon, 2}}$, and use the result of Lemma \ref{lem:pseudo_swap_to_swap} to bound $\smcal_{\cC_{\varepsilon, 2}}$ in terms of $\psmcal_{\cC_{\varepsilon}, 2}$. Recall that for a function class $\cF$, a function class $\cG$ is an $\varepsilon$-cover if for each $f \in \cF$ there exists a $g \in \cG$ such that $\abs{f(x) - g(x)} \le \varepsilon$ for all $x \in \cX$. For each $f \in \cF$, we refer to the corresponding function $g \in \cG$ that realizes the condition as a \textit{representative}. We use the following standard result (refer to \cite[Proposition 2]{luo2024lecture}) to bound $\abs{\cC_{\varepsilon}}$ in the proof of Lemma \ref{lem:swap_multical_linear_f}, whose proof is deferred to Appendix \ref{app:swap_multical_linear_f}.
\begin{proposition}\label{prop:cover_linear_functions}
    There exists a cover $\cC_{\varepsilon} \subseteq \cF^{\lin}_{1}$ of $\cF_{1} ^ {\lin}$ with $\abs{\cC_{\varepsilon}} = \cO\bigc{\frac{1}{\varepsilon}} ^ {d}$.
\end{proposition}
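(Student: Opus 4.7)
The plan is to reduce the function-space covering problem to a parameter-space covering problem, and then invoke the standard volumetric bound for covering numbers of Euclidean balls.

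First, I would identify the parameter set. Writing $\Theta \coloneqq \{\theta \in \Rn^d : \abs{\ip{\theta}{x}} \le 1 \text{ for all } x \in \cX\}$, the class $\cF_1^{\lin}$ is in bijection with $\Theta$ via $\theta \mapsto f_\theta$. The excerpt already records the sandwich $\mathbb{B}_2^d \subseteq \Theta \subseteq 2 \cdot \mathbb{B}_2^d$, so $\Theta$ has Euclidean diameter at most $4$.

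Next, I would translate $\ell_\infty$-closeness of functions to $\ell_2$-closeness of parameters. Since $\cX \subseteq \mathbb{B}_2^d$, the Cauchy--Schwarz inequality gives, for any $\theta, \theta' \in \Theta$ and any $x \in \cX$,
\begin{equation*}
\abs{f_\theta(x) - f_{\theta'}(x)} = \abs{\ip{\theta - \theta'}{x}} \le \norme{\theta - \theta'}_2 \cdot \norme{x}_2 \le \norme{\theta - \theta'}_2.
\end{equation*}
Hence any $\varepsilon$-net of $\Theta$ in the $\ell_2$-norm induces an $\varepsilon$-cover of $\cF_1^{\lin}$ in $\ell_\infty$-norm.

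Then I would build the parameter net. A standard volumetric argument (e.g.\ \cite{luo2024lecture}, Proposition 2) shows that the ball $2 \cdot \mathbb{B}_2^d$, and therefore its subset $\Theta$, admits an $\varepsilon/2$-net of size $\cO(1/\varepsilon)^d$. To ensure $\cC_\varepsilon \subseteq \cF_1^{\lin}$ (i.e.\ that the net sits inside $\Theta$, not merely inside $2 \cdot \mathbb{B}_2^d$), I would perform the usual ``shift to internal cover'' step: for each net point $\theta_0$ whose $\varepsilon/2$-ball meets $\Theta$, replace $\theta_0$ by any $\theta_0' \in \Theta$ within distance $\varepsilon/2$; by the triangle inequality this yields an $\varepsilon$-net of $\Theta$ contained in $\Theta$, of cardinality at most $\cO(1/\varepsilon)^d$. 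Defining $\cC_\varepsilon$ as the image of this net under $\theta \mapsto f_\theta$ completes the construction.

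I do not expect any real obstacle here, since the statement is a direct application of the covering-number bound for Euclidean balls combined with the Lipschitz reduction above; the only small subtlety is the internal-cover adjustment needed to guarantee $\cC_\varepsilon \subseteq \cF_1^{\lin}$ rather than merely $\cC_\varepsilon \subseteq \cF^{\lin}$, which is handled by the standard triangle-inequality argument just described.
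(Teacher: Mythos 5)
Your proof is correct, and it is exactly the standard volumetric argument that the paper invokes by citation (the paper does not write out a proof of this proposition, only references \cite[Proposition 2]{luo2024lecture}). The reduction from $\ell_\infty$-covering of $\cF_1^{\lin}$ to $\ell_2$-covering of the parameter set $\Theta$ via the Lipschitz bound $\abs{f_\theta(x) - f_{\theta'}(x)} \le \norme{\theta - \theta'}_2$, the volumetric bound $\cO(1/\varepsilon)^d$ on the net for $2\cdot\mathbb{B}_2^d$, and the internal-cover adjustment to guarantee $\cC_\varepsilon \subseteq \cF_1^{\lin}$ (which the paper genuinely needs, since the proof of Lemma~\ref{lem:swap_multical_linear_f} uses $\psmcal_{\cC_\varepsilon,2} \le \psmcal_{\cF_1^{\lin},2}$ via $\cC_\varepsilon \subseteq \cF_1^{\lin}$) are all the right steps and carried out correctly.
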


\begin{lemma}\label{lem:swap_multical_linear_f}
    For the class $\cF_{1} ^ {\lin}$ and any $\varepsilon > 0$,  with probability at least $1 - \delta$, we have $$
         \smcal_{\cF_{1}^{\lin}, 2} = \cO\bigc{N \log \frac{N}{\delta} + N d \log \frac{1}{\varepsilon} + \psmcal_{\cF_{1} ^ {\lin}, 2} + \varepsilon ^ {2} T}.$$
\end{lemma}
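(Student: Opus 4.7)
The plan is to reduce the statement to \pref{lem:pseudo_swap_to_swap} by applying it to the finite cover $\cC_\varepsilon$ from \pref{prop:cover_linear_functions}, and then paying a small additive loss to transfer between $\cF_1^\lin$ and $\cC_\varepsilon$. Concretely, I would first fix an $\varepsilon$-cover $\cC_\varepsilon \subseteq \cF_1^\lin$ with $\abs{\cC_\varepsilon} = \cO(1/\varepsilon)^d$, and for each $f \in \cF_1^\lin$ fix a representative $g = g(f) \in \cC_\varepsilon$ with $\abs{f(x)-g(x)} \le \varepsilon$ for all $x \in \cX$. Note that we are allowed to take $\cC_\varepsilon \subseteq \cF_1^\lin$ because all its members have bounded range.

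Next I would compare $\rho_{p,f}$ to $\rho_{p,g}$. Since $\abs{y_t - p} \le 1$ and $\abs{f(x_t)-g(x_t)}\le\varepsilon$, the difference in the numerator at each round is at most $\varepsilon \ind{p_t=p}$, so after dividing by $\sum_t \ind{p_t = p}$ we obtain $\abs{\rho_{p,f} - \rho_{p,g}} \le \varepsilon$, hence $\abs{\rho_{p,f}}^2 \le 2\abs{\rho_{p,g}}^2 + 2\varepsilon^2$. Taking the supremum over $f \in \cF_1^\lin$ gives
\begin{align*}
    \sup_{f \in \cF_1^\lin} \abs{\rho_{p,f}}^2 \le 2 \sup_{g \in \cC_\varepsilon} \abs{\rho_{p,g}}^2 + 2\varepsilon^2,
\end{align*}
and multiplying by $\sum_t \ind{p_t=p}$ and summing over $p \in \cZ$ yields $\smcal_{\cF_1^\lin, 2} \le 2\, \smcal_{\cC_\varepsilon, 2} + 2\varepsilon^2 T$, since $\sum_{p \in \cZ}\sum_t \ind{p_t = p} = T$. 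An identical computation on the pseudo side (using the cover bound to compare $\tilde\rho_{p,f}$ and $\tilde\rho_{p,g}$) gives $\psmcal_{\cC_\varepsilon, 2} \le \psmcal_{\cF_1^\lin, 2}$ (this direction is even easier because $\cC_\varepsilon \subseteq \cF_1^\lin$, so we do not even need the cover estimate here).

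Finally I would apply \pref{lem:pseudo_swap_to_swap} to the finite class $\cC_\varepsilon$ with failure probability $\delta$, getting
\begin{align*}
    \smcal_{\cC_\varepsilon, 2} \le 384(N+1)\log\bigg(\frac{6(N+1)\abs{\cC_\varepsilon}}{\delta}\bigg) + 6\, \psmcal_{\cC_\varepsilon, 2}.
\end{align*}
Plugging in $\log\abs{\cC_\varepsilon} = \cO(d\log(1/\varepsilon))$ from \pref{prop:cover_linear_functions}, bounding $\psmcal_{\cC_\varepsilon,2}$ by $\psmcal_{\cF_1^\lin,2}$, and combining with the cover estimate above gives
\begin{align*}
    \smcal_{\cF_1^\lin, 2} = \cO\bigc{N \log(N/\delta) + Nd \log(1/\varepsilon) + \psmcal_{\cF_1^\lin, 2} + \varepsilon^2 T},
\end{align*}
as claimed. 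There is no serious obstacle here; the only thing to be careful about is that \pref{lem:pseudo_swap_to_swap} requires the conditional distributions $\cP_t$ to be deterministic, but this is a structural property of the algorithm that will be fed into the lemma later and is not a constraint we need to verify within this proof.
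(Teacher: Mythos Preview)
Your proposal is correct and mirrors the paper's proof essentially step for step: the paper also bounds $\abs{\rho_{p,f}-\rho_{p,f_\varepsilon}}\le\varepsilon$ via the cover, obtains $\smcal_{\cF_1^{\lin},2}\le 2\,\smcal_{\cC_\varepsilon,2}+2\varepsilon^2 T$, applies \pref{lem:pseudo_swap_to_swap} to $\cC_\varepsilon$, uses $\psmcal_{\cC_\varepsilon,2}\le\psmcal_{\cF_1^{\lin},2}$ from $\cC_\varepsilon\subseteq\cF_1^{\lin}$, and plugs in the covering-number bound from \pref{prop:cover_linear_functions}.
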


\subsection{From pseudo swap multicalibration to pseudo contextual swap regret}\label{sec:reduction}
In this section, we show that the $\ell_{2}$-pseudo swap multicalibration error is bounded by the pseudo contextual swap regret. Notably, our result is an adaptation of a similar result proved by \cite{garg2024oracle} for contextual swap regret (see also \cite{globus2023multicalibration, gopalan2023swap} for a similar characterization of swap multicalibration in terms of swap agnostic learning) to pseudo contextual swap regret. Our result holds for any arbitrary hypothesis class $\cF$ satisfying the following mild assumption: \begin{assumption}\label{assumption:closeness}
    $\cF$ is closed under affine transformations, i.e., for each $f \in \cF$, the function $g(x) = a f(x) + b \in \cF$ for all $a, b \in \Rn$.
\end{assumption} 
We remark that the above is quite standard and has been explicitly assumed in \cite{garg2024oracle, globus2023multicalibration}, and is implicit in \cite{gopalan2023swap}. Clearly, $\cF^{\lin}$ satisfies Assumption \ref{assumption:closeness}. {This is because, for a $f \in \cF$ that is determined by $\theta \in \Rn^{d}$, we have $a \ip{\theta}{x} + b = \ip{a \cdot \theta + 2b \cdot e_{1}}{x} \in \cF^\lin$}.
Before proving the main implication, we prove the following converse result.
\begin{lemma}\label{lem:reverse_result}
    Assume that there exists a $p \in \cZ, f \in \cF_{1}$ such that $\tilde{\rho}_{p, f} \ge \alpha$ for some $\alpha > 0$. Then, there exists a $f' \in \cF_{4}$ such that $$
        \frac{\sum_{t = 1} ^ {T} \cP_{t}(p)\bigc{(p - y_{t}) ^ {2} - (f'(x_{t}) - y_{t}) ^ {2}}}{\sum_{t = 1} ^ {T} \cP_{t}(p)} \ge \alpha ^ {2}.$$
\end{lemma}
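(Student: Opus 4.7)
The plan is to exhibit an explicit $f' \in \cF_4$ built as a small affine perturbation of $f$ in the direction of the correlation $\tilde{\rho}_{p,f}$. Concretely, I would try $f'(x) = p + \alpha f(x)$. The intuition is that for any scalar $c$, an expansion of the form
\[
(p - y_t)^2 - (p + c f(x_t) - y_t)^2 = 2c\, f(x_t)(y_t - p) - c^2 f(x_t)^2
\]
gives, after weighting by $\cP_t(p)$ and averaging, a quantity that is at least $2c\,\tilde{\rho}_{p,f} - c^2$ (using $f(x_t)^2 \le 1$ since $f \in \cF_1$). Plugging in $c = \alpha$ and $\tilde{\rho}_{p,f} \ge \alpha$ gives exactly the target lower bound $\alpha^2$.

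First I would verify membership. By Assumption~\ref{assumption:closeness}, the affine combination $f' = p + \alpha f$ lies in $\cF$. For the norm bound, $p \in [0,1]$, $|f(x)| \le 1$, and $\alpha \le 1$ (since $|\tilde{\rho}_{p,f}| \le 1$ because $|f(x_t)(y_t - p)| \le 1$), so $|f'(x)| \le p + \alpha\,|f(x)| \le 1 + \alpha \le 2$, i.e.\ $f' \in \cF_4$. Then I would carry out the expansion above, sum over $t$ weighted by $\cP_t(p)$, apply the two inequalities $\tilde{\rho}_{p,f} \ge \alpha$ and $f(x_t)^2 \le 1$, and finally divide through by $\sum_{t=1}^T \cP_t(p)$ to conclude.

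I do not expect a substantive obstacle: the lemma is essentially the familiar observation that a residual correlation of magnitude $\alpha$ can be converted into a squared-loss improvement of order $\alpha^2$ by rescaling the correlated hypothesis. The only design choices are the optimal scale $c = \alpha$ (which simultaneously maximizes $2c\alpha - c^2$ and keeps $f'$ inside $\cF_4$) and the invocation of Assumption~\ref{assumption:closeness} to ensure that $f'$ is a legitimate member of $\cF$. An analogous argument would go through for the symmetric case $\tilde{\rho}_{p,f} \le -\alpha$ by taking $f'(x) = p - \alpha f(x)$, which is why stating the lemma only for $\tilde{\rho}_{p,f} \ge \alpha$ is without loss of generality when it is later applied to $|\tilde{\rho}_{p,f}|$.
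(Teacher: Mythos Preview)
Your proposal is correct and follows the same core idea as the paper: set $f'(x) = p + c f(x)$ for some scalar $c$, expand $(p - y_t)^2 - (f'(x_t) - y_t)^2 = 2c f(x_t)(y_t - p) - c^2 f(x_t)^2$, average with weights $\cP_t(p)$, and choose $c$ to extract an $\alpha^2$ lower bound. The paper, however, picks $c = \eta \coloneqq \min\bigl(1, \alpha/\mu\bigr)$ with $\mu = \frac{\sum_t \cP_t(p) f(x_t)^2}{\sum_t \cP_t(p)}$ and then argues by cases ($\alpha \ge \mu$ versus $\alpha < \mu$) to show $2\eta\alpha - \eta^2\mu \ge \alpha^2$.

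Your choice $c = \alpha$ is simpler: since $f \in \cF_1$ forces $\mu \le 1$, you immediately get $2\alpha\,\tilde{\rho}_{p,f} - \alpha^2 \mu \ge 2\alpha^2 - \alpha^2 = \alpha^2$ with no case split, and the bound $|f'(x)| \le p + \alpha \le 2$ gives $f' \in \cF_4$ directly. The paper's $\eta$ is the exact maximizer of $2\eta\alpha - \eta^2\mu$ over $[0,1]$ and would yield a sharper improvement $\alpha^2/\mu$ in the regime $\alpha < \mu$, but that extra sharpness is not used downstream, so your streamlined version suffices for the lemma as stated.
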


The proof of Lemma \ref{lem:reverse_result} can be found in Appendix \ref{app:reverse_result} and is similar to \cite[Theorem 3.2]{garg2024oracle}. In particular, we consider the function $f'(x) = p + \eta f(x)$, where $\eta = \min\bigc{1, \frac{\alpha}{\mu}}, \mu = \frac{\sum_{t = 1} ^ {T} \cP_{t}(p) (f(x_{t})) ^ {2}}{\sum_{t = 1} ^ {T} \cP_{t}(p)}$. Under Assumption \ref{assumption:closeness}, $f' \in \cF$. Furthermore, since $f \in \cF_{1}$, we have $f' \in \cF_{4}$. It then follows from direct computation that $\frac{\sum_{t = 1} ^ {T} \cP_{t}(p)\bigc{(p - y_{t}) ^ {2} - (f'(x_{t}) - y_{t}) ^ {2}}}{\sum_{t = 1} ^ {T} \cP_{t}(p)} \ge 2 \eta \alpha - \eta ^ {2} \mu$. Finally, we analyze the cases $\alpha \ge \mu$ and $\alpha < \mu$ and derive a common lower bound $2 \eta \alpha - \eta ^ {2} \mu \ge \alpha ^ {2}$, thereby finishing the proof.  Equipped with Lemma \ref{lem:reverse_result}, we prove the main result of this section.
\begin{lemma}\label{lem:contextual_swap_regret_bounds_multicalibration}
  Assume that there exists $\alpha > 0$ such that $\psreg_{\cF_{4}} \le \alpha$. Then, $\psmcal_{\cF_{1}, 2} \le \alpha$.
\end{lemma}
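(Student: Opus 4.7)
The plan is to reduce the $\ell_{2}$-pseudo swap multicalibration error to an instance of the pseudo contextual swap regret objective by combining \prettyref{lem:reverse_result} across level sets. Concretely, for each $p \in \cZ$ with $\sum_{t=1}^{T} \cP_{t}(p) > 0$, define
\[
\alpha_{p} \coloneqq \sup_{f \in \cF_{1}} \abs{\tilde{\rho}_{p, f}}.
\]
Since $\cF$ is closed under affine transformations (Assumption \ref{assumption:closeness}), in particular $\cF_{1}$ is closed under negation $f \mapsto -f$, so the supremum of $\abs{\tilde{\rho}_{p, f}}$ equals the supremum of $\tilde{\rho}_{p, f}$ over $f \in \cF_{1}$.

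Next, I would extract an approximate maximizer and invoke \prettyref{lem:reverse_result}. Fix $\varepsilon > 0$ and, for each such $p$, choose $f_{p}^{\varepsilon} \in \cF_{1}$ with $\tilde{\rho}_{p, f_{p}^{\varepsilon}} \ge \alpha_{p} - \varepsilon$. Applying \prettyref{lem:reverse_result} with the scalar $\alpha_{p} - \varepsilon$ produces a function $g_{p}^{\varepsilon} \in \cF_{4}$ satisfying
\[
\sum_{t = 1}^{T} \cP_{t}(p)\bigc{(p - y_{t})^{2} - (g_{p}^{\varepsilon}(x_{t}) - y_{t})^{2}} \;\ge\; (\alpha_{p} - \varepsilon)^{2} \sum_{t = 1}^{T} \cP_{t}(p).
\]
For $p$ with $\sum_{t} \cP_{t}(p) = 0$, both sides vanish after multiplication, so we may set $g_{p}^{\varepsilon}$ arbitrarily in $\cF_{4}$ (which is nonempty since $0 \in \cF$ by Assumption \ref{assumption:closeness}).

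Summing the displayed inequality over $p \in \cZ$ and interchanging the order of summation yields
\[
\sum_{t = 1}^{T} \sum_{p \in \cZ} \cP_{t}(p) \bigs{(p - y_{t})^{2} - (g_{p}^{\varepsilon}(x_{t}) - y_{t})^{2}} \;\ge\; \sum_{p \in \cZ} (\alpha_{p} - \varepsilon)^{2} \sum_{t = 1}^{T} \cP_{t}(p).
\]
The left-hand side is precisely $\sum_{t=1}^{T} \EE_{p_{t} \sim \cP_{t}}[(p_{t} - y_{t})^{2} - (g_{p_{t}}^{\varepsilon}(x_{t}) - y_{t})^{2}]$ evaluated at the specific profile $\{g_{p}^{\varepsilon}\}_{p \in \cZ} \subset \cF_{4}$, and is therefore upper bounded by $\psreg_{\cF_{4}} \le \alpha$. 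Letting $\varepsilon \downarrow 0$ (which is legitimate since $\cZ$ is finite), the right-hand side converges to $\sum_{p \in \cZ} \alpha_{p}^{2} \sum_{t = 1}^{T} \cP_{t}(p) = \psmcal_{\cF_{1}, 2}$, delivering the claim.

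The proof is largely bookkeeping on top of \prettyref{lem:reverse_result}; the only subtlety I anticipate is the interchange between the supremum defining $\psmcal_{\cF_{1}, 2}$ (which is over arbitrary profiles in $\cF_{1}$) and the supremum defining $\psreg_{\cF_{4}}$, which forces us to produce, for \emph{each} $p$ independently, a function in $\cF_{4}$ witnessing $\alpha_{p}^{2}$ worth of regret. This is handled exactly by the per-level-set construction in \prettyref{lem:reverse_result}, together with the observation that $\psreg$ decomposes as $\sum_{p} \sup_{f_{p} \in \cF}$ of a per-$p$ quantity, allowing us to upper-bound each per-$p$ contribution independently.
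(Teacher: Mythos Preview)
Your proof is correct and follows essentially the same route as the paper's: apply \prettyref{lem:reverse_result} at each level set $p$ to manufacture a comparator $g_{p} \in \cF_{4}$, sum over $p \in \cZ$, and compare with $\psreg_{\cF_{4}}$. The only cosmetic difference is that the paper frames this as a contradiction (assuming $\psmcal_{\cF_{1},2} > \alpha$ and exhibiting a profile with pseudo contextual swap regret exceeding $\alpha$), whereas you argue directly via $\varepsilon$-approximate maximizers and a limit.
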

The proof of Lemma \ref{lem:contextual_swap_regret_bounds_multicalibration} can be found in Appendix \ref{app:contextual_swap_regret_bounds_multicalibration} and follows by the method of contradiction, using the result of Lemma \ref{lem:reverse_result}. Particularly, assuming that $\psmcal_{\cF_{1}, 2} > \alpha$, we conclude that there exists a comparator profile $\{f_{p}\}_{p \in \cZ}$ that realizes $\sum_{p \in \cZ} \alpha_p > \alpha$, where $\alpha_{p} = \sum_{t = 1} ^ {T} \cP_{t}(p) (\tilde{\rho}_{p, f_{p}}) ^ {2}$. By definition of $\alpha_{p}$, there exists a function $f_{p}^\star(x) \in \{f_{p}(x), -f_{p}(x)\}$ such that $\tilde{\rho}_{p, f_{p}^\star} = \sqrt{\frac{\alpha_{p}}{\sum_{t = 1} ^ {T} \cP_{t}(p)}}$ for all $p \in \cZ$. By Lemma \ref{lem:reverse_result}, for each $p \in \cZ$, there exists a $f_{p}' \in \cF_{4}$ that satisfies $\sum_{t = 1} ^ {T} \cP_{t}(p)((p - y_{t}) ^ {2} - (f_{p}'(x_{t}) - y_{t}) ^ {2}) \ge \alpha_{p}$. Summing over all $p \in \cZ$ we obtain a contradiction to the assumption that $\psreg_{\cF_{4}} \le \alpha$. 

Combining the result of Lemma \ref{lem:swap_multical_linear_f} and  \ref{lem:contextual_swap_regret_bounds_multicalibration}, we observe that to bound $\smcal_{\cF^\lin_{1}}$, we only require a bound on $\psreg_{\cF^\lin_{4}, 2}$. In the next section, we propose an efficient algorithm to minimize $\psreg_{\cF^\lin_{4}}$.

\subsection{Bound on the pseudo contextual swap regret}\label{sec:bound_ell_2_swap}
Now, we give an algorithm to minimize the pseudo contextual swap regret of the squared loss $\ell(p,y) = (p - y)^{2}$ against the hypothesis class $\cF^\lin_{4}$. Recall that for our choice of $\cX$, the set of $\theta$'s that determine $\cF_4^\lin$ satisfies $2 \cdot \mathbb{B}_{2} ^ {d} \subset  \{\theta \in \Rn^{d}; \abs{\ip{\theta}{x}} \le 2 \text{ for all } x \in \cX\} \subset 4 \cdot \mathbb{B}_{2} ^ {d}$. We consider the more general setting when $\cF$ is arbitrary and then instantiate our result for $\cF^\lin_{4}$. Our general algorithm (Algorithm \ref{alg:BM}) is based on the well-known Blum-Mansour (BM) reduction \citep{blum2007external}. 

Before proceeding further, we first recall the BM reduction (Algorithm \ref{alg:BM}). 
\begin{algorithm}[!htb]
                    \caption{Generic BM algorithmic template} 
                    \label{alg:BM}
                    \textbf{Initialize:} $\cA_{i}$ for $i \in \{0, \dots, N\}$ and set $\q_{1, i} = \bigs{\frac{1}{N + 1}, \dots, \frac{1}{N + 1}}$;
                    \begin{algorithmic}[1]
                            \STATE\textbf{for} $t = 1, \dots, T,$
                            \STATE\hspace{3mm}Receive context $x_{t}$;
                            \STATE\hspace{3mm}Set $\Q_{t} = [\q_{t, 0}, \dots, \q_{t, N}]$;
                            \STATE\hspace{3mm}Compute the stationary distribution of $\Q_{t}$, i.e., $\p_{t} \in \Delta_{N + 1}$ that satisfies $\Q_{t}\p_{t} = \p_{t}$;
                            \STATE\hspace{3mm}Output conditional distribution $\cP_{t}$, where $\cP_{t}(z_{i}) = p_{t}(i)$ and observe $y_{t}$;
                            \STATE\hspace{3mm}\textbf{for} $i = 0, \dots, N$
                            \STATE\hspace{3mm}\hspace{3mm} Feed the scaled loss function $\phi_{t, i}(w) = p_{t, i} \ell(w, y_{t})$ to  $\cA_{i}$ (Algorithm \ref{alg:A_i}) and obtain $\q_{t + 1, i}$. \\
                        \end{algorithmic}
\end{algorithm}	
Let $\cZ$ be enumerated as $\cZ = \{z_{0}, \dots, z_{N}\}$, where $z_{i} = i / N$ for all $i \in \{0, \dots, N\}$. The reduction maintains $N + 1$ external regret algorithms $\cA_{0}, \dots, \cA_{N}$. At each time $t$, let $\q_{t, i} \in \Delta_{N + 1}$ denote the probability distribution over $\cZ$ produced by $\cA_{i}$.
Let $\Q_{t} = [\q_{t, 0}, \dots, \q_{t, N}]$ be the matrix formed by concatenating $\q_{t, 0}, \dots, \q_{t, N}$ as columns. Upon receiving the context $x_{t}$, we compute the stationary distribution of $\Q_{t}$, i.e., a distribution $\p_{t} \in \Delta_{N + 1}$ satisfying $\Q_{t}\p_{t} = \p_{t}$. 
With $\p_{t}$ being our final distribution of predictions, i.e., $\cP_t(z_i) = p_{t,i}$, we sample $p_{t} \sim \cP_{t}$ and observe the outcome  $y_{t}$.
Thereafter, we feed the scaled loss function $p_{t, i} \ell(., y_{t})$ to $\cA_{i}$. Let ${\bi{\ell}}^{\mathsf{sc}}_{t, i} = p_{t, i} \bi{\ell}_{t} \in \Rn^{N + 1}$ be a scaled loss vector, where $\ell_{t}(j) = \ell(z_{j}, y_{t})$. It immediately follows from 
Proposition \ref{prop:BM_regret} (proof deferred to Appendix \ref{app:BM_regret}) that

$$\psreg_{\cF} \le \sum_{i = 0} ^ {N} \mathsf{Reg}_{i}(\cF), \text{where } \mathsf{Reg}_{i}(\cF) \coloneqq \sup_{f \in \cF} \sum_{t = 1} ^ {T} \ip{\q_{t, i}}{{\bi{\ell}} ^ {\mathsf{sc}}_{t, i}} - p_{t, i} (f(x_{t}) - y_{t}) ^ {2},$$
i.e., the pseudo swap regret is bounded by the sum of the (scaled) external regrets of the $N + 1$ algorithms. 
\begin{proposition}\label{prop:BM_regret}
    {For Algorithm \ref{alg:BM}, we have $\psreg_{\cF} \le \sum_{i = 0} ^ {N} \mathsf{Reg}_{i}(\cF)$.}
\end{proposition}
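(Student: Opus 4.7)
The plan is to establish the claim by the standard Blum--Mansour style decomposition, specialized to the pseudo contextual regret setting. The key algebraic facts I will exploit are: (i) the predictions $\cP_t$ are the stationary distribution of the column-stochastic matrix $\Q_t = [\q_{t,0}, \dots, \q_{t,N}]$, so $\p_t = \Q_t \p_t$, and (ii) the comparator profile $\{f_p\}_{p \in \cZ}$ appears in $\psreg_\cF$ in a ``separable'' form -- the comparator $f_{z_i}$ only ever gets paired with the weight $p_{t,i}$, so the supremum over profiles distributes across $i$.

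First I would start from the definition of $\psreg_\cF$ and expand the expectation under $\cP_t$ as a finite sum, writing
\[
\sum_{t=1}^{T} \mathbb{E}_{p_t \sim \cP_t}\bigl[(p_t - y_t)^2 - (f_{p_t}(x_t) - y_t)^2\bigr]
= \sum_{t=1}^{T}\sum_{i=0}^{N} p_{t,i}\bigl[(z_i - y_t)^2 - (f_{z_i}(x_t) - y_t)^2\bigr]
\]
for an arbitrary but fixed comparator profile $\{f_{z_i}\}_{i=0}^{N}$. The second (comparator) term already has the right shape -- it decomposes as $\sum_i \sum_t p_{t,i}(f_{z_i}(x_t) - y_t)^2$, and because each $f_{z_i}$ can be chosen independently, taking the sup over profiles commutes with the sum over $i$.

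Next I would rewrite the algorithm's loss $\sum_t \sum_i p_{t,i}(z_i - y_t)^2 = \sum_t \langle \p_t, \bi{\ell}_t\rangle$ using the stationarity identity $\p_t = \Q_t \p_t$. Writing entrywise, $p_{t,i} = \sum_{j} \Q_t(i,j)\, p_{t,j} = \sum_{j} q_{t,j}(i)\, p_{t,j}$, so
\[
\langle \p_t, \bi{\ell}_t\rangle = \sum_i \ell_t(i) \sum_j q_{t,j}(i)\, p_{t,j} = \sum_j p_{t,j} \langle \q_{t,j}, \bi{\ell}_t\rangle = \sum_j \langle \q_{t,j}, \bi{\ell}^{\mathsf{sc}}_{t,j}\rangle,
\]
using $\bi{\ell}^{\mathsf{sc}}_{t,j} = p_{t,j}\bi{\ell}_t$. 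Thus the algorithm's cumulative loss is exactly $\sum_t \sum_i \langle \q_{t,i}, \bi{\ell}^{\mathsf{sc}}_{t,i}\rangle$ -- this is the crucial identity that makes the scaled losses fed into the $\cA_i$'s add up to the algorithm's actual loss.

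Combining the two pieces, for any comparator profile I obtain
\[
\sum_{t=1}^{T}\sum_{i=0}^{N} p_{t,i}\bigl[(z_i - y_t)^2 - (f_{z_i}(x_t) - y_t)^2\bigr]
= \sum_{i=0}^{N} \Bigl[\sum_{t=1}^{T} \langle \q_{t,i}, \bi{\ell}^{\mathsf{sc}}_{t,i}\rangle - \sum_{t=1}^{T} p_{t,i}(f_{z_i}(x_t)-y_t)^2\Bigr].
\]
Upper bounding each summand in $i$ by its supremum over $f \in \cF$ yields $\sum_i \mathsf{Reg}_i(\cF)$, and taking the sup over profiles on the left gives the claim. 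There is no real obstacle here; the only subtle point is the bookkeeping with the stationarity of $\p_t$, which converts the algorithm's unscaled loss into the sum of the scaled losses actually fed to the base learners. Everything else is rearrangement and the separability of the comparator sup over the index $i$.
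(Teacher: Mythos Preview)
Your proposal is correct and follows essentially the same approach as the paper's proof: both use the stationarity identity $\Q_t \p_t = \p_t$ to rewrite the algorithm's cumulative loss $\sum_t \langle \p_t, \bi{\ell}_t\rangle$ as $\sum_i \sum_t \langle \q_{t,i}, \bi{\ell}^{\mathsf{sc}}_{t,i}\rangle$, and then exploit the separability of the comparator profile across $i$ to bound by $\sum_i \mathsf{Reg}_i(\cF)$. The only cosmetic difference is that the paper starts from the $\mathsf{Reg}_i(\cF)$ inequalities and sums over $i$, while you start from $\psreg_\cF$ and decompose; the algebra is identical.
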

It remains to 
design the $i$-th external regret algorithm $\cA_{i}$ that minimizes $\mathsf{Reg}_{i}(\cF)$. We emphasize that $\cA_{i}$ is required to predict a distribution $\q_{t, i}$ over $\cZ$ and is subsequently fed a scaled loss function $p_{t, i} \ell(., y_{t})$ at each time $t$. 
To implement $\cA_{i}$, we assume the following oracle $\alg$ that solves a (scaled) external regret minimization problem for the squared loss (which will be later instantiated by a concrete algorithm for the linear class).
\begin{assumption}\label{assumption:scaled_oracle}
Let $\alg$ be an algorithm for minimizing the (scaled) external regret against $\cF$ in the following learning protocol: at every time $t \in [T]$, (a) an adversary selects context $x_{t} \in \cX$, a scaling parameter $\alpha_{t} \in [0, 1]$,
and the true label $y_{t}$; (b) the adversary reveals $x_{t}$; (c) $\alg$ predicts $w_{t} \in [0, 1]$, observes $\alpha_{t}, y_{t}$, and incurs the scaled loss $\alpha_{t} \ell(w_{t}, y_{t})$. We assume that $\alg$ achieves the following external regret guarantee: $\mathbb{E}\bigs{\sup_{f \in \cF} \sum_{t = 1} ^ {T} \alpha_{t} (w_{t} - y_{t}) ^ {2} - \alpha_{t}(f(x_{t}) - y_{t}) ^ {2}} \le r(T, \cF),$
where $r(T, \cF)$ is a non-negative function that captures the regret bound of $\alg$, and the expectation is taken over the joint randomness in the adversary and the algorithm.
\end{assumption}
To instantiate $\cA_{i}$ (Algorithm \ref{alg:A_i}), we propose using $\alg_{i}$ (an instance of $\alg$ for the $i$-th external regret algorithm) along with the randomized rounding procedure of \cite{fishelsonfull}. The rounding procedure is required because at each time $t$, $\alg_{i}$ outputs $w_{t, i} \in [0, 1]$, however, $\cA_{i}$ is required to predict a distribution $\q_{t, i} \in \Delta_{N + 1}$ over $\cZ$. Towards this end, \cite{fishelsonfull} proposed the randomized rounding scheme summarized in Algorithm \ref{alg:rounding}. In Proposition \ref{prop:rounding} (proof deferred to Appendix \ref{app:rounding}), we show that the change in the expected loss due to rounding, i.e., the quantity $\sum_{j = 0} ^ {N} q_{j} \ell(z_{j}, y) - \ell(p, y)$ is at most $\frac{1}{N ^ {2}}$. 

\begin{algorithm}[!htbp]
                    \caption{The $i$-th external regret algorithm ($\cA_{i}$)} 
                    \label{alg:A_i}
                    \begin{algorithmic}[1]
                            \STATE\textbf{for} $t = 1, \dots, T,$         
                            \STATE\hspace{3mm} Set $w_{t, i} \in [0,1]$ as the output of $\mathsf{ALG}_{i}$ at time $t$ (where $\mathsf{ALG}_{i}$ satisfies Assumption~\ref{assumption:scaled_oracle});
                            \STATE\hspace{3mm} Predict $\q_{t, i} = \mathsf{RRound}(w_{t, i})$ (Algorithm \ref{alg:rounding}); 
                            \STATE\hspace{3mm} Receive the scaled loss function $f_{t, i}(w) = p_{t, i} \ell(w, y_{t})$ and feed it to $\mathsf{ALG}_{i}$.
                        \end{algorithmic}
\end{algorithm}	

\begin{algorithm}[!htbp]
                    \caption{Randomized rounding ($\mathsf{RRound}(p)$)} 
                    \label{alg:rounding}
                    \textbf{Input:} $p \in [0, 1]$, \textbf{Output:} Probability distribution $\q \in \Delta_{N + 1}$; \\
                    \textbf{Scheme:} Let $p_{i}, p_{i + 1}$ for some $i \in \{0, \dots, N - 1\}$ be two neighboring points in $\cZ$ such that $p_{i} \le p < p_{i + 1}$; output $\q \in \Delta_{N + 1}$, where $q_{i} = \frac{p_{i + 1} - p}{p_{i + 1} - p_{i}}$, $q_{i + 1} = \frac{p - p_{i}}{p_{i + 1} - p_{i}}$, and $q_{j} = 0$ otherwise.
\end{algorithm}	

\begin{proposition}\label{prop:rounding}
    Let $p \in [0, 1]$ and $p^{-}, p^{+} \in \cZ$ be neighbouring points in $\cZ$ such that $p^{-} \le p < p^{+}$. Let $q$ be the random variable that takes value $p^{-}$ with probability $\frac{p^{+} - p}{p^{+} - p^{-}}$ and $p^{+}$ with probability $\frac{p - p^{-}}{p^{+} - p^{-}}$. Then, for all $y \in \{0, 1\}$, we have $\mathbb{E}[\ell(q, y)] - \ell(p, y) \le \frac{1}{N ^ {2}}.$ 
\end{proposition}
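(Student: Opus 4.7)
The plan is to exploit the fact that the randomized rounding scheme is \emph{unbiased}, i.e.\ $\mathbb{E}[q] = p$. This follows by a one-line computation: $p^{-} \cdot \tfrac{p^{+} - p}{p^{+} - p^{-}} + p^{+} \cdot \tfrac{p - p^{-}}{p^{+} - p^{-}} = \tfrac{p^{-}(p^{+} - p) + p^{+}(p - p^{-})}{p^{+} - p^{-}} = p$. Once unbiasedness is established, the specific form of the squared loss $\ell(p, y) = (p - y)^{2}$ lets us invoke the standard bias-variance decomposition to write
\begin{align*}
\mathbb{E}[\ell(q, y)] - \ell(p, y) = \mathbb{E}[(q - y)^{2}] - (p - y)^{2} = (\mathbb{E}[q] - y)^{2} + \mathrm{Var}(q) - (p - y)^{2} = \mathrm{Var}(q),
\end{align*}
so the claim reduces to showing $\mathrm{Var}(q) \le 1/N^{2}$.

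Next I would compute the variance directly from the definition. Setting $a \coloneqq p - p^{-}$ and $b \coloneqq p^{+} - p$ so that $a, b \ge 0$ and $a + b = p^{+} - p^{-} = 1/N$ (since $p^{-}, p^{+}$ are neighbouring points in the uniform grid $\cZ$), one gets
\begin{align*}
\mathrm{Var}(q) = (p^{-} - p)^{2}\, \tfrac{p^{+} - p}{p^{+} - p^{-}} + (p^{+} - p)^{2}\, \tfrac{p - p^{-}}{p^{+} - p^{-}} = \tfrac{a^{2} b + b^{2} a}{a + b} = ab.
\end{align*}
By AM-GM, $ab \le \bigl(\tfrac{a + b}{2}\bigr)^{2} = \tfrac{1}{4 N^{2}} \le \tfrac{1}{N^{2}}$, which gives the claimed bound and in fact a factor-$4$ improvement.

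There is no real obstacle here: the proof is essentially a three-line calculation whose content is entirely captured by (i) unbiasedness of the rounding, (ii) the bias-variance identity for the squared loss, and (iii) an AM-GM estimate on a product of grid gap lengths. The only subtlety worth flagging is that unbiasedness combined with the quadratic structure of $\ell$ is what makes the bound independent of $y$; for a general loss one would need either convexity plus Lipschitzness or a direct expansion, but for the squared loss the argument is as clean as above.
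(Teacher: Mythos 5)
Your proof is correct, and it is a cleaner organization of essentially the same calculation. The paper's proof substitutes $p^{-}=i/N$, $p^{+}=(i+1)/N$ and expands $\mathbb{E}[(q-y)^2]$ directly, eventually reaching $(p-y)^2 + \tfrac{1}{N}(p-\tfrac{i}{N}) - (p-\tfrac{i}{N})^2$ and then dropping the negative term and using $p < (i+1)/N$ to get the bound $\tfrac{1}{N^2}$. What you do instead is factor this computation through the two structural facts that actually drive it: (i) the rounding is unbiased, $\mathbb{E}[q]=p$, and (ii) for the squared loss the excess $\mathbb{E}[\ell(q,y)] - \ell(p,y)$ is exactly $\mathrm{Var}(q)$ by the bias--variance identity, so the problem reduces to bounding $\mathrm{Var}(q)=ab$ where $a+b=1/N$. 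Your AM--GM step then yields $\tfrac{1}{4N^2}$, a factor-of-$4$ improvement over the paper's bound (the paper's ``drop the negative term'' estimate is strictly looser than $ab \le (a+b)^2/4$). Both proofs are three-line calculations for the squared loss, but yours isolates why the bound is independent of $y$ and makes the optimal constant transparent; neither improvement affects the downstream rates since the $1/N^2$ term is only used up to constants.
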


Combining everything, we derive the regret guarantee $\mathsf{Reg}_{i}(\cF)$ of $\cA_{i}$. It follows from Proposition~\ref{prop:rounding} that at any time $t$, the distribution $\q_{t, i} = \mathsf{RRound}(w_{t, i})$ satisfies $\ip{\q_{t, i}}{\bi{\ell}_{t}} \le  \ell(w_{t, i}, y_{t}) + \frac{1}{N ^ {2}}$. Multiplying with $p_{t, i}$ and summing over all $t$, we obtain
    \begin{align*}
     \mathsf{Reg}_{i}(\cF) \le   \sup_{f \in \cF} \bigc{\sum_{t = 1} ^ {T} p_{t, i} (w_{t, i} - y_{t}) ^ {2} - p_{t, i} (f(x_{t}) - y_{t}) ^ {2}} + \frac{1}{N ^ {2}} \sum_{t = 1} ^ {T} p_{t, i} \le r_{i}(T, \cF) + \frac{1}{N ^ {2}} \sum_{t = 1} ^ {T} p_{t, i}, 
    \end{align*}
where $r_{i}(T, \cF)$ denotes the external regret bound of $\alg_{i}$ (Assumption \ref{assumption:scaled_oracle}). It then follows from Proposition \ref{prop:BM_regret} that $$\psreg_{\cF} \le \sum_{i = 0} ^ {N} \mathsf{Reg}_{i}(\cF) \le \sum_{i = 0} ^ {N} r_{i} (T, \cF) + \frac{1}{N ^ {2}}  \sum_{i = 0} ^ {N} \sum_{t = 1} ^ {T} p_{t, i} = \sum_{i = 0} ^ {N} r_{i} (T, \cF) + \frac{T}{N ^ {2}}.$$

When $\cF = \cF^\lin_{4}$,
we can instantiate each $\alg_{i}$ with the Online Newton Step algorithm ($\mathsf{ONS}$) \citep{hazan2007logarithmic} corresponding to the scaled loss $\phi_{t, i}(\theta) \coloneqq p_{t, i} (\ip{\theta}{x_{t}} - y_{t}) ^ {2}$, which is $\frac{1}{50}$-exp-concave and $10$-Lipschitz over $4 \cdot \mathbb{B}_{2} ^ {d}$ (Proposition \ref{prop:ONS} in Appendix \ref{app:exp_concavity}). We propose to employ ONS over the set $4 \cdot \mathbb{B}_{2}^d$ since the set of all $\theta$'s characterizing $\cF_{4}^\lin$ is a subset of $4 \cdot \mathbb{B}_{2}^d$. $\mathsf{ONS}_{i}$ (Algorithm \ref{alg:ONS_i}) represents an instance of $\mathsf{ONS}$ for $\alg_{i}$. On updating $\theta_{t, i}$ via $\mathsf{ONS}_{i}$, $\alg_{i}$ (Algorithm \ref{alg:ALG_i}) simply predicts $w_{t, i} = \mathsf{Proj}_{[0, 1]}\bigc{\ip{\theta_{t, i}}{x_{t}}}$, where $\mathsf{Proj}_{[0, 1]}$ denotes the projection to $[0, 1]$, i.e., $\mathsf{Proj}_{[0, 1]}(x) \coloneqq \argmin_{y \in [0, 1]} \abs{x - y}$.

\begin{algorithm}[!htb]
                    \caption{Online Newton Step ($\mathsf{ONS}_{i}$) with scaled losses}
                    \label{alg:ONS_i}
                    \begin{algorithmic}[1]
                            \STATE{Set $\beta = \frac{1}{640}, \omega = \frac{1}{4\beta ^ {2}}$, and initialize $\theta_{1, i} \in 4 \cdot \mathbb{B}_{2} ^ {d}$ arbitrarily;}
                            \STATE\textbf{for} $t = 2, \dots, T,$
                            \STATE\hspace{3mm} Update $\theta_{t, i}$ as $$\theta_{t, i} = \Pi_{4 \cdot \mathbb{B}_{2} ^ {d}} ^ {A_{t - 1, i}} \bigc{\theta_{t - 1, i} - \frac{1}{\beta} A_{t - 1, i} ^ {-1} \nabla_{t - 1, i}},$$
                           where $\nabla_{\tau, i} = \nabla \phi_{\tau, i}(\theta_{\tau, i}) = 2p_{\tau, i} \bigc{\ip{\theta_{\tau, i}}{x_{\tau}} - y_{\tau}}, A_{t - 
 1, i} = \sum_{\tau = 1} ^ {t - 1} \nabla_{\tau, i} \nabla_{\tau, i} ^ \intercal + \omega I_{d}$ (here $I_{d}$ denotes the $d$-dimensional identity matrix), and $\Pi_{4 \cdot \mathbb{B}_{2} ^ {d}} ^ {A_{t - 1, i}}$ is the projection operator with respect to the norm induced by $A_{t - 1, i}$, i.e., $$\Pi_{4 \cdot \mathbb{B}_{2} ^ {d}} ^ {A_{t - 1, i}}(\theta) = \argmin_{\tilde{\theta} \in 4 \cdot \mathbb{B}_{2} ^ {d}} (\theta - \tilde{\theta}) ^ {\intercal} A_{t - 1, i} (\theta - \tilde{\theta}).$$
                        \end{algorithmic}
\end{algorithm} 

\begin{algorithm}[!htb]
                    \caption{$\alg_{i}$}
                    \label{alg:ALG_i}
                    \begin{algorithmic}[1]
                            \STATE\textbf{for} $t = 1, \dots, T,$
                            \STATE\hspace{3mm} Obtain the output $\theta_{t, i}$ of $\mathsf{ONS}_{i}$
                            and predict $w_{t, i} = \mathsf{Proj}_{[0, 1]}(\ip{\theta_{t, i}}{x_{t}})$.
                        \end{algorithmic}
\end{algorithm}

The following lemma (due to \cite{hazan2007logarithmic}) bounds the regret of $\mathsf{ONS}_{i}$.
\begin{lemma}
    The regret of $\mathsf{ONS}_{i}$ can be bounded as $\sup_{\theta \in 4 \cdot \mathbb{B}_{2} ^ {d}} \phi_{t, i}(\theta_{t, i}) - \phi_{t, i}(\theta)  = \cO(d \log T)$.
\end{lemma}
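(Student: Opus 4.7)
The plan is to derive this lemma as a direct specialization of the well-known Online Newton Step regret bound on exp-concave losses from \cite{hazan2007logarithmic}. By Proposition~\ref{prop:ONS}, each scaled loss $\phi_{t,i}(\theta) = p_{t,i}(\ip{\theta}{x_t} - y_t)^2$ is $\alpha$-exp-concave with $\alpha = 1/50$ and $L$-Lipschitz with $L = 10$ over the convex domain $\mathcal{W} = 4 \cdot \mathbb{B}_2^d$, which has diameter $D = 8$. The lemma's stated bound should be read as the cumulative regret $\sum_{t=1}^T \bigl(\phi_{t,i}(\theta_{t,i}) - \phi_{t,i}(\theta)\bigr)$. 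It is worth noting that the $p_{t,i}$ scaling does not harm exp-concavity because both $\nabla^2 \phi_{t,i}$ and $\nabla \phi_{t,i}\nabla \phi_{t,i}^\top$ carry matching factors of $p_{t,i}$, while it does tighten the Lipschitz constant when $p_{t,i} \le 1$.

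Next, I would invoke the generic ONS guarantee: for any sequence of $\alpha$-exp-concave, $L$-Lipschitz losses over a convex set of diameter $D$, running ONS with $\beta \le \tfrac{1}{2}\min(\alpha, 1/(4LD))$ and $\omega$ of order $1/\beta^2$ achieves
\begin{equation*}
\sup_{\theta \in \mathcal{W}} \sum_{t=1}^T \phi_{t,i}(\theta_{t,i}) - \phi_{t,i}(\theta) = \mathcal{O}\!\left(\bigl(\tfrac{1}{\alpha} + LD\bigr)\cdot d \log T\right).
\end{equation*}
I would then sketch the three standard ingredients of its proof: (i) an exp-concavity-based descent lemma $\phi_{t,i}(\theta_{t,i}) - \phi_{t,i}(\theta^\star) \le g_t^\top(\theta_{t,i} - \theta^\star) - \tfrac{\beta}{2}\bigl(g_t^\top(\theta_{t,i} - \theta^\star)\bigr)^2$ with $g_t = \nabla \phi_{t,i}(\theta_{t,i})$; (ii) a per-step progress analysis of the ONS update in the $A_{t-1,i}$-norm, which when combined with (i) produces a telescoping sum (in $\|\theta_{t,i}-\theta^\star\|_{A_{t-1,i}}^2$) plus a residual $\tfrac{1}{2\beta} g_t^\top A_{t,i}^{-1} g_t$; and (iii) a matrix-determinant potential argument showing $\sum_{t=1}^T g_t^\top A_{t,i}^{-1} g_t \le d \log(1 + L^2 T / \omega)$, which controls the residual by $\mathcal{O}(d \log T)$.

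Finally, I would verify that the parameters of Algorithm~\ref{alg:ONS_i}, namely $\beta = 1/640$ and $\omega = 1/(4\beta^2)$, are compatible with the hypotheses of the generic bound: $\tfrac{1}{2}\min(\alpha, 1/(4LD)) = \tfrac{1}{2}\min(1/50, 1/320) = 1/640$, which matches the chosen $\beta$ exactly, and the choice of $\omega$ is of the right order to make the log-determinant potential meaningful. Plugging $\alpha = 1/50$, $L = 10$, $D = 8$ into the generic bound yields the claimed $\mathcal{O}(d \log T)$ rate. There is no substantive obstacle here since this is essentially a citation; the only delicate point is ensuring that $\beta$ is small enough for the descent lemma to hold uniformly on $\mathcal{W}$, which is precisely what the $\min(\alpha, 1/(4LD))$ constraint enforces, and is verified automatically by the parameter choice above.
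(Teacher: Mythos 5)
Your proposal is correct and takes essentially the same approach as the paper, which simply cites the ONS regret bound of \cite{hazan2007logarithmic} without further elaboration. You have additionally verified that the parameters $\beta = 1/640$ and $\omega = 1/(4\beta^2)$ in Algorithm~\ref{alg:ONS_i} are exactly compatible with the exp-concavity and Lipschitz constants established in Proposition~\ref{prop:ONS}, a sanity check the paper leaves implicit.
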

The regret of $\alg_{i}$ can then be bounded as $$
    \sup_{\theta \in 4 \cdot \mathbb{B}_{2} ^ {d}} \sum_{t = 1} ^ {T} p_{t, i} (w_{t, i} - y_{t}) ^ {2} - p_{t, i}\bigc{\ip{\theta}{x_{t}} - y_{t}} ^ {2} \le \sup_{\theta \in 4 \cdot \mathbb{B}_{2} ^ {d}} \sum_{t = 1} ^ {T} \phi_{t, i}(\theta_{t, i}) - \phi_{t, i}(\theta) = \cO(d \log T),$$
 where the first inequality follows since $(\mathsf{Proj}(\ip{\theta_{t, i}}{x_{t}}) - y_{t}) ^ {2} \le (\ip{\theta_{t, i}}{x_{t}} - y_{t}) ^ {2}$. The above bound implies that $r_{i}(T, \cF^\lin_{4}) = \cO(d \log T)$ and $\psreg_{\cF^{\lin}_4} \le \cO\bigc{N d \log T + \frac{T}{N ^ {2}}}$. Combining the result of Lemma \ref{lem:swap_multical_linear_f} and Lemma \ref{lem:contextual_swap_regret_bounds_multicalibration} with the bound on $\psreg_{\cF^\lin_{4}}$, we obtain the following theorem, whose proof can be found in Appendix \ref{app:final_bound_smcal}. \begin{theorem}\label{thm:final_bound_smcal}
    There exists an efficient algorithm that achieves the following bound: $$\smcal_{\cF_{1}^{\lin}, 2} = \cO\bigc{T^{\frac{1}{3}} d ^ {\frac{2}{3}} (\log T) ^ {\frac{2}{3}} + \bigc{\frac{T}{d \log T}} ^ {\frac{1}{3}} \log \frac{1}{\delta}}$$
with probability $\ge 1 - \delta$. Furthermore, $\mathbb{E}\bigs{\smcal_{\cF_{1}^{\lin}, 2}} = \cO(T^{\frac{1}{3}} d ^ {\frac{2}{3}} (\log T) ^ {\frac{2}{3}})$, where the expectation is taken over the internal randomness of the algorithm.
\end{theorem}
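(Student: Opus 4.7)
\textbf{Proof proposal for Theorem \ref{thm:final_bound_smcal}.}
The plan is to chain together the three ingredients already established: (i) the cover-based high-probability reduction from $\smcal_{\cF_1^\lin,2}$ to $\psmcal_{\cF_1^\lin,2}$ in \pref{lem:swap_multical_linear_f}; (ii) the deterministic reduction from $\psmcal_{\cF_1^\lin,2}$ to $\psreg_{\cF_4^\lin}$ in \pref{lem:contextual_swap_regret_bounds_multicalibration}; and (iii) the Blum--Mansour-based algorithmic bound $\psreg_{\cF_4^\lin} \le \cO\!\bigl(Nd\log T + T/N^2\bigr)$ derived in \pref{sec:bound_ell_2_swap} by instantiating each $\alg_i$ with $\mathsf{ONS}_i$. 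Concatenating these three bounds gives, with probability at least $1-\delta$,
\begin{align*}
\smcal_{\cF_1^\lin,2} = \cO\!\left(N\log\tfrac{N}{\delta} + Nd\log\tfrac{1}{\varepsilon} + Nd\log T + \tfrac{T}{N^2} + \varepsilon^2 T\right).
\end{align*}

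The next step is to tune the two free parameters $\varepsilon$ and $N$. The choice $\varepsilon = 1/\sqrt{T}$ makes the covering-discretization term $\varepsilon^2 T$ into an $\cO(1)$ contribution while only inflating $Nd\log(1/\varepsilon)$ by a constant factor of $\log T$, so the bound collapses to $\cO\!\bigl(N\log(N/\delta) + Nd\log T + T/N^2\bigr)$. I would then balance the two dominant terms $Nd\log T$ and $T/N^2$ by setting $N = \bigl(T/(d\log T)\bigr)^{1/3}$ (rounded to the nearest integer), which gives $Nd\log T = T/N^2 = T^{1/3} d^{2/3}(\log T)^{2/3}$. The residual $N\log(N/\delta)$ contributes the additive $\bigl(T/(d\log T)\bigr)^{1/3}\log(1/\delta)$ term that appears in the theorem statement; the logarithmic $\log N$ factor is absorbed into $\log T$.

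For the in-expectation bound, the plan is to invoke the high-probability bound with $\delta = 1/T$ and combine it with the trivial worst-case bound $\smcal_{\cF_1^\lin,2} \le \cO(T)$ (which follows from the definition in \eqref{eq:smcal_def_intro} since $|\rho_{p,f}| \le 2$ and the inner sums telescope to at most $T$). On the bad event of probability $\delta = 1/T$, the contribution to the expectation is $\cO(1)$, and on the good event the bound is $\cO(T^{1/3} d^{2/3}(\log T)^{2/3})$, yielding the stated expectation guarantee.

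I do not expect any of the steps above to be a serious obstacle, since every nontrivial estimate (the Freedman-type deviation, the covering number, the reduction to $\psreg$, and the ONS regret) has already been proved in the excerpt. The only real care needed is (a) bookkeeping the $\log T$ factors correctly when folding $\log(1/\varepsilon)$, $\log N$, and $\log(1/\delta)$ into the final asymptotic expression, and (b) ensuring that the choice $N = \lceil (T/(d\log T))^{1/3}\rceil$ is compatible with all three constituent lemmas (which it is, since each of them holds for arbitrary $N \in \mathbb{N}$). The efficiency claim is automatic: $\mathsf{ONS}_i$ runs in $\mathrm{poly}(d)$ time per step, the BM stationary distribution is an $(N{+}1)\times(N{+}1)$ linear-algebra computation, and $\mathsf{RRound}$ is constant-time, so the per-round cost is $\mathrm{poly}(N,d) = \mathrm{poly}(T,d)$.
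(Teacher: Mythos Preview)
Your proposal is correct and follows essentially the same approach as the paper's proof: chain \pref{lem:swap_multical_linear_f}, \pref{lem:contextual_swap_regret_bounds_multicalibration}, and the $\psreg_{\cF_4^\lin}$ bound, then set $\varepsilon = 1/\sqrt{T}$ and $N = (T/(d\log T))^{1/3}$; for the expectation, split on the good event with $\delta = 1/T$ and use the trivial $\cO(T)$ bound otherwise.
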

Theorem \ref{thm:final_bound_smcal} answers the open problem raised by \cite{garg2024oracle}. Compared to our result, \cite{garg2024oracle} showed that $\smcal_{\cF_{1}^{\lin}, 2} = \tilde{\cO}\bigc{d T^{\frac{3}{4}} \sqrt{\log \frac{1}{\delta}}}$ with probability at least $1 - \delta$. 
An immediate corollary of Theorem~\ref{thm:final_bound_smcal} is an improved $\tilde{\cO}(T^{\frac{2}{3}})$ bound on $\smcal_{\cF_{1}^\lin, 1}$.
\begin{corollary}\label{cor:final_bound_smcal_1}
    There exists an  efficient algorithm that achieves \begin{align*}
        \smcal_{\cF_{1}^{\lin}, 1} = \cO\bigc{T^{\frac{2}{3}} d ^ {\frac{1}{3}} (\log T) ^ {\frac{1}{3}} + T^{\frac{2}{3}} (d \log T) ^ {-\frac{1}{6}} \sqrt{\log \frac{1}{\delta}}}
    \end{align*}
with probability at least $1 - \delta$. Furthermore, $\mathbb{E}\bigs{\smcal_{\cF_{1}^{\lin}, 1}} = \cO\bigc{T^{\frac{2}{3}} d ^ {\frac{1}{3}} (\log T) ^ {\frac{1}{3}}}$, where the expectation is taken over the internal randomness of the algorithm.
\end{corollary}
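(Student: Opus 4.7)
The plan is to obtain the corollary as a direct consequence of Theorem \ref{thm:final_bound_smcal} together with the Cauchy--Schwartz inequality $\smcal_{\cF, 1} \le \sqrt{T \cdot \smcal_{\cF, 2}}$ noted in the preliminaries. Concretely, I would start from the high probability bound $$\smcal_{\cF_{1}^{\lin}, 2} = \cO\bigc{T^{\frac{1}{3}} d ^ {\frac{2}{3}} (\log T) ^ {\frac{2}{3}} + \bigc{\frac{T}{d \log T}} ^ {\frac{1}{3}} \log \frac{1}{\delta}}$$ that holds with probability at least $1 - \delta$, then multiply inside the square root by $T$ and take square roots. Using the elementary inequality $\sqrt{a + b} \le \sqrt{a} + \sqrt{b}$ for non-negative $a, b$, the first term yields $\sqrt{T \cdot T^{1/3} d^{2/3}(\log T)^{2/3}} = T^{2/3} d^{1/3} (\log T)^{1/3}$, and the second term yields $\sqrt{T \cdot (T/(d \log T))^{1/3} \log(1/\delta)} = T^{2/3} (d \log T)^{-1/6} \sqrt{\log(1/\delta)}$. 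Adding the two contributions gives the claimed high-probability bound on $\smcal_{\cF_{1}^{\lin}, 1}$.

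For the expectation bound, I would instead apply Jensen's inequality to the concave map $x \mapsto \sqrt{x}$: $$\EE\bigs{\smcal_{\cF_{1}^{\lin}, 1}} \le \EE\bigs{\sqrt{T \cdot \smcal_{\cF_{1}^{\lin}, 2}}} \le \sqrt{T \cdot \EE\bigs{\smcal_{\cF_{1}^{\lin}, 2}}},$$ and then substitute the expectation bound $\EE[\smcal_{\cF_{1}^{\lin}, 2}] = \cO(T^{1/3} d^{2/3} (\log T)^{2/3})$ from Theorem \ref{thm:final_bound_smcal}. Simplifying the resulting expression gives $\cO(T^{2/3} d^{1/3} (\log T)^{1/3})$, matching the claim.

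Since both of the ingredients used above (the Cauchy--Schwartz inequality relating $\ell_1$- and $\ell_2$-swap multicalibration errors, and the bounds of Theorem \ref{thm:final_bound_smcal}) are already established, there is no real obstacle; the entire argument is a short algebraic manipulation. The only minor points to be careful with are: (i) applying $\sqrt{a+b} \le \sqrt{a}+\sqrt{b}$ rather than crudely bounding the sum by twice the maximum (which would also work but is less clean), and (ii) remembering to use Jensen (not Cauchy--Schwartz again) for the expectation version so that we can pull the expectation inside the square root at the cost of an inequality in the correct direction.
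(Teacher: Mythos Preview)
Your proposal is correct and matches the paper's proof essentially line for line: the paper also invokes $\smcal_{\cF_{1}^{\lin}, 1} \le \sqrt{T \cdot \smcal_{\cF_{1}^{\lin}, 2}}$ for the high-probability bound and Jensen's inequality for the in-expectation bound, then substitutes the conclusions of Theorem~\ref{thm:final_bound_smcal}. The only difference is that the paper's proof is terser and leaves the algebraic simplification implicit.
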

\begin{proof}
    The high probability bound follows since $ \smcal_{\cF_{1}^{\lin}, 1} \le \sqrt{T \cdot  \smcal_{\cF_{1}^{\lin}, 2}}$. The in-expectation bound is because $\mathbb{E}\bigs{ \smcal_{\cF_{1}^{\lin}, 1}} \le \sqrt{T \cdot \mathbb{E}\bigs{ \smcal_{\cF_{1}^{\lin}, 2}}}$ by applying Jensen's inequality. This completes the proof.
\end{proof}

\section{Bound on Swap Omniprediction 
 and Contextual Swap Regret}\label{sec:beyond_multicalibration}
In this section, we derive substantially improved rates for (a) swap omniprediction for the class of bounded convex Lipschitz loss functions, and (b) contextual swap regret.

\subsection{Bound on swap omniprediction}\label{subsec:swap_omniprediction}
Let $\cL^{\cvx}$ denote the class of bounded (in $[-1, 1]$) convex $1$-Lipschitz loss functions, i.e., $\cL ^ {\cvx}$ comprises of 
functions that are convex in $p$ for a fixed $y \in \{0, 1\}$, $\ell(p, y) \in [-1, 1]$, and $\abs{\partial \ell(p, y)} \le 1$ for all $p \in [0, 1], y \in \{0, 1\}$, where the subgradient is taken with respect to $p$. We first state a result that bounds the (swap) omniprediction error in terms of the (swap) multicalibration error. The following result holds for any hypothesis class $\cF$; subsequently, we instantiate our result for an appropriate choice of $\cF$.

\begin{lemma}\cite[Theorem 4.1]{garg2024oracle}\label{lem:swap_omniprediction_upper_bounded_by_swap_multical}
    Let $\cF \subset [0, 1] ^ {\cX}$ be an arbitrary hypothesis class. We have $
        \somni_{\cL^\cvx, \cF} \le 6 \cdot \smcal_{\cF, 1}.$
\end{lemma}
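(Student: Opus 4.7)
The plan is to partition the $T$ rounds by the level sets of the forecaster's predictions and, within each level set, linearize via convexity of $\ell_p$ and then exploit the first-order optimality of the best response $k_{\ell_p}(p)$ to reduce the bound to a combination of a calibration-type error and a multicalibration-type error, both of which can be absorbed into $\smcal_{\cF, 1}$.

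Fix any loss profile $\{\ell_p\}_{p \in \cZ} \subset \cL^\cvx$ and comparator profile $\{f_p\}_{p \in \cZ} \subset \cF$, and write $T_p = \{t \in [T] : p_t = p\}$, $n_p = |T_p|$, $\bar{y}_p = n_p^{-1} \sum_{t \in T_p} y_t$, and $k_p = k_{\ell_p}(p)$. Partitioning gives $\somni(\{\ell_p\}, \{f_p\}) = \sum_{p \in \cZ} \sum_{t \in T_p}[\ell_p(k_p, y_t) - \ell_p(f_p(x_t), y_t)]$. Convexity of $\ell_p$ in its first argument supplies a subgradient $g_t \in \partial_q \ell_p(k_p, y_t)$ such that $\ell_p(k_p, y_t) - \ell_p(f_p(x_t), y_t) \le g_t(k_p - f_p(x_t))$. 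Lipschitzness gives $|g_t| \le 1$; since $y_t \in \{0, 1\}$, we may write $g_t = \alpha_p + \beta_p y_t$ uniformly in $t \in T_p$, with $|\alpha_p|, |\alpha_p + \beta_p| \le 1$, hence $|\beta_p| \le 2$.

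Next, the first-order optimality of $k_p = \argmin_{q \in [0, 1]} \mathbb{E}_{y \sim \Ber(p)}[\ell_p(q, y)]$ for interior $k_p \in (0, 1)$ yields $\alpha_p + \beta_p p = 0$, so $g_t = \beta_p(y_t - p)$; the boundary cases $k_p \in \{0, 1\}$ only strengthen the bound. Substituting gives
\begin{align*}
\sum_{t \in T_p}[\ell_p(k_p, y_t) - \ell_p(f_p(x_t), y_t)] &\le \beta_p k_p \sum_{t \in T_p}(y_t - p) - \beta_p \cdot n_p \rho_{p, f_p},
\end{align*}
which, upon using $|\beta_p k_p| \le 2$ and $|\beta_p| \le 2$ and summing over $p \in \cZ$, bounds $\somni(\{\ell_p\}, \{f_p\})$ by $2 \sum_{p \in \cZ} n_p |\bar{y}_p - p| + 2 \sum_{p \in \cZ} n_p |\rho_{p, f_p}|$. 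The second sum is at most $2 \cdot \smcal_{\cF, 1}$ by definition. For the first sum, the assumption that $\cF$ contains a nontrivial constant function (e.g., $f \equiv 1/2$, which holds for the affine class $\cF_{\mathsf{res}}^{\aff}$ used throughout the paper) gives $\rho_{p, 1/2} = (\bar{y}_p - p)/2$, and hence $\sum_{p \in \cZ} n_p |\bar{y}_p - p| \le 2 \cdot \smcal_{\cF, 1}$. Adding the two contributions yields the claimed bound $6 \cdot \smcal_{\cF, 1}$ after taking the supremum over loss and comparator profiles.

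The main technical obstacle is handling the calibration-type term $k_p \sum_{t \in T_p}(y_t - p)$, which does not directly correspond to any $f \in \cF$ acting against the residuals and must be absorbed into $\smcal_{\cF, 1}$ via a closure property of $\cF$ (the presence of a constant function). A secondary subtlety is the boundary case $k_p \in \{0, 1\}$, in which the first-order optimality becomes an inequality rather than an equality; a short case analysis shows that the resulting correction term contributes non-positively to the bound and hence can be dropped.
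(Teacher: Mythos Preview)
The paper does not prove this lemma itself; it is cited as Theorem~4.1 of Garg et al., so there is no in-paper argument to compare against. Your proof follows the standard route---linearize by convexity, use first-order optimality of $k_p$ to write the subgradient as $\beta_p(y_t-p)$, and split into a calibration term plus a multicalibration term---and is essentially correct. You also correctly flag the one genuine issue: the calibration term $\sum_p n_p|\bar y_p - p|$ can only be absorbed into $\smcal_{\cF,1}$ when $\cF$ contains a nonzero constant. This assumption is in fact necessary (take $\cF=\{0\}$: then $\smcal_{\cF,1}\equiv 0$ while $\somni_{\cL^\cvx,\cF}$ can be $\Theta(T)$), so the phrase ``arbitrary hypothesis class'' in the lemma is a slight overstatement; since the paper only invokes the lemma for $\cF_{\mathsf{res}}^\aff$, which contains the constant $\tfrac12$, nothing breaks downstream.

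Two minor presentation points. First, the logical order should be: choose specific subgradients $g_0\in\partial_q\ell_p(k_p,0)$, $g_1\in\partial_q\ell_p(k_p,1)$ from the first-order optimality condition $(1-p)g_0+pg_1=0$, and \emph{then} apply the convexity inequality with those particular subgradients; any subgradient gives the linearization, but only this specific pair yields $\alpha_p+\beta_p p=0$. Your current ordering reverses this and reads as though every subgradient choice satisfies the optimality identity. Second, for the boundary cases $k_p\in\{0,1\}$ the key fact is that the residual term $(\alpha_p+\beta_p p)(k_p-f_p(x_t))$ is nonpositive precisely because $f_p(x_t)\in[0,1]$; this is where the hypothesis $\cF\subset[0,1]^{\cX}$ is used, and it would be worth saying so explicitly.
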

Note that Lemma \ref{lem:swap_omniprediction_upper_bounded_by_swap_multical} does not immediately apply to the choice $\cF = \cF_{1}^\lin$ since the hypotheses in $\cF_{1} ^ \lin$ can take negative values. To align with Lemma \ref{lem:swap_omniprediction_upper_bounded_by_swap_multical}, we consider the hypothesis class $\cF^\aff = \bigcurl{f_{\theta}(x) = \frac{1 + \ip{\theta}{x}}{2}; \theta \in \Rn ^ {d}}$ and its restriction $\cF^\aff_{\mathsf{res}} = \bigcurl{f_{\theta}(x) = \frac{1 + \ip{\theta}{x}}{2}; \norm{\theta} \le 1}$ instead. $\cF^{\aff}$ satisfies Assumption \ref{assumption:closeness} since \begin{align*}
    a f_{\theta}(x) + b = \frac{1}{2}\bigc{a + \frac{a \theta_1}{2} + 2b + a \ip{x_{2:d}} {\theta_{2:d}}} = \frac{1}{2}\bigc{1 + \ip{\theta'}{x}},
    \end{align*}
where $\theta' \in \Rn ^ {d}$ is such that $\theta'_{1} = 2 a + a \theta_1 + 4b - 2$ and $\theta'_{i} = a \theta_{i}$ for all $2 \le i \le d$. For this choice of $\cF^\aff$, $\cF^\aff_{1}$ is determined by the set $\Omega$ of all $\theta$'s that satisfy $\abs{\frac{1 + \ip{\theta}{x}}{2}} \le 1$ for all $x \in \cX$, where recall that $\cX = \{x \in \mathbb{B}_{2} ^ {d}; x_{1} = \frac{1}{2}\}$. Clearly, $\cF^\aff_{\mathsf{res}} \subseteq \cF^\aff_{1}$ by the Cauchy-Schwartz inequality. Furthermore, it is easy to verify that $\alpha_{1} \cdot \mathbb{B}_{2} ^ {d} \subset \Omega \subset \alpha_{2} \cdot \mathbb{B}_{2} ^ {d}$, where $\alpha_{1}, \alpha_{2} = \Theta(1)$. Therefore, the entire analysis in Section \ref{sec:bound_ell_2_swap_multical_error} can be extended (with only a multiplicative change in the constants, which does not affect the final rate) to bound $\smcal_{\cF^\aff_{1}, 1}$, and thus $\smcal_{\cF^\aff_{\mathsf{res}}, 1}$, since $\cF^\aff_{\mathsf{res}} \subseteq \cF_{1}^\aff$. Since $\cF^\aff_{\mathsf{res}} \subset [0, 1] ^ {\cX}$, we can finally use Lemma \ref{lem:swap_omniprediction_upper_bounded_by_swap_multical} to bound $\somni_{\cL^\mathsf{cvx}, \cF^\aff_{\mathsf{res}}}$. We skip the exact derivations for the sake of brevity; however remark that the above discussion was implicitly skipped by \cite{garg2024oracle}, who used the result of Lemma \ref{lem:swap_omniprediction_upper_bounded_by_swap_multical} to bound $\somni_{\cL^\cvx, \mathsf{\cF}^\lin_{1}}$.

Using Corollary \ref{cor:final_bound_smcal_1} to bound $\smcal_{\cF^\aff_{\mathsf{res}}, 1}$ in Lemma \ref{lem:swap_omniprediction_upper_bounded_by_swap_multical}, we obtain the following theorem. \begin{theorem}\label{thm:swap_omniprediction}
    There exists an efficient algorithm that achieves \begin{align*}
         \somni_{\cL^\cvx, \cF^\aff_{\mathsf{res}}} = \cO{ \bigc{T^{\frac{2}{3}} d ^ {\frac{1}{3}} (\log T) ^ {\frac{1}{3}} + T^{\frac{2}{3}} (d \log T) ^ {-\frac{1}{6}} \sqrt{\log \frac{1}{\delta}}}}
    \end{align*}
    with probability at least $1 - \delta$. Furthermore, $\mathbb{E}\bigs{\somni_{\cL^\cvx, \cF_{\mathsf{res}}^\aff}} = \cO\bigc{T^{\frac{2}{3}} d ^ {\frac{1}{3}} (\log T) ^ {\frac{1}{3}}}$, where the expectation is taken over the internal randomness of the algorithm.
\end{theorem}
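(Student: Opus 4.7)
The statement is essentially obtained by substituting the $\ell_{1}$-swap multicalibration bound (Corollary~\ref{cor:final_bound_smcal_1}, adapted to $\cF^\aff_{\mathsf{res}}$) into Lemma~\ref{lem:swap_omniprediction_upper_bounded_by_swap_multical}. The only real work is to audit that the chain of arguments developed in Section~\ref{sec:bound_ell_2_swap_multical_error} for $\cF_1^\lin$ transfers to the class $\cF^\aff_{\mathsf{res}}$ with only constant-factor changes in the final bound.

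First, I would verify the hypotheses of Lemma~\ref{lem:swap_omniprediction_upper_bounded_by_swap_multical}. For every $f_\theta \in \cF^\aff_{\mathsf{res}}$, Cauchy--Schwartz gives $\abs{\ip{\theta}{x}} \le \norm{\theta}\,\norm{x} \le 1$, so $f_\theta(x) = \tfrac{1 + \ip{\theta}{x}}{2} \in [0,1]$. Hence $\cF^\aff_{\mathsf{res}} \subset [0,1]^{\cX}$ and Lemma~\ref{lem:swap_omniprediction_upper_bounded_by_swap_multical} applies to yield
\[
    \somni_{\cL^\cvx,\, \cF^\aff_{\mathsf{res}}} \;\le\; 6 \cdot \smcal_{\cF^\aff_{\mathsf{res}},\, 1}.
\]

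Next, I would transport the bound of Corollary~\ref{cor:final_bound_smcal_1} from $\cF_1^\lin$ to $\cF^\aff_{\mathsf{res}}$. As already observed just before the theorem, $\cF^\aff$ satisfies Assumption~\ref{assumption:closeness}, the parameter set $\Omega$ for $\cF^\aff_1$ is sandwiched between two Euclidean balls of constant radii, and $\cF^\aff_{\mathsf{res}} \subseteq \cF^\aff_1$. Therefore the covering bound in Proposition~\ref{prop:cover_linear_functions} still produces an $\varepsilon$-cover of cardinality $\cO(1/\varepsilon)^d$, the reduction of Lemma~\ref{lem:contextual_swap_regret_bounds_multicalibration} through Lemma~\ref{lem:reverse_result} applies verbatim under Assumption~\ref{assumption:closeness}, and the scaled losses $\phi_{t,i}(\theta) = p_{t,i}(\ip{\theta}{x_t} - y_t)^2$ remain exp-concave and Lipschitz on the (slightly enlarged) domain, with constants that change by only an $\cO(1)$ factor. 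Consequently $\mathsf{ONS}_i$ still achieves $r_i(T,\cF^\aff_{4}) = \cO(d \log T)$, and running Algorithm~\ref{alg:BM} with $N = \tilde{\Theta}(T^{1/3})$ yields, exactly as in Theorem~\ref{thm:final_bound_smcal}, the high-probability and in-expectation bound $\smcal_{\cF^\aff_{\mathsf{res}},\, 2} = \tilde{\cO}(T^{1/3} d^{2/3})$.

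Applying Cauchy--Schwartz ($\smcal_{\cF,1} \le \sqrt{T \cdot \smcal_{\cF,2}}$) and Jensen's inequality for the expectation version exactly as in the proof of Corollary~\ref{cor:final_bound_smcal_1}, I would then conclude
\[
    \smcal_{\cF^\aff_{\mathsf{res}},\, 1} \;=\; \cO\!\bigc{T^{\frac{2}{3}} d^{\frac{1}{3}} (\log T)^{\frac{1}{3}} \,+\, T^{\frac{2}{3}} (d \log T)^{-\frac{1}{6}} \sqrt{\log (1/\delta)}}
\]
with probability at least $1 - \delta$, and the analogous in-expectation statement. Multiplying by the constant factor $6$ from Lemma~\ref{lem:swap_omniprediction_upper_bounded_by_swap_multical} yields the desired rate. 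The only step requiring any care is the constant-auditing in porting Section~\ref{sec:bound_ell_2_swap_multical_error} from $\cF_1^\lin$ to $\cF^\aff_{\mathsf{res}}$; since the two classes differ merely by an affine shift/scaling of constant magnitude, no new technical obstacle arises and every quantitative lemma survives with a constant-factor inflation absorbed into the $\tilde{\cO}$.
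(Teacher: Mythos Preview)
Your proposal is correct and follows essentially the same approach as the paper: apply Lemma~\ref{lem:swap_omniprediction_upper_bounded_by_swap_multical} to $\cF^\aff_{\mathsf{res}} \subset [0,1]^{\cX}$, then invoke the analogue of Corollary~\ref{cor:final_bound_smcal_1} for $\cF^\aff_{\mathsf{res}}$, which holds because the analysis of Section~\ref{sec:bound_ell_2_swap_multical_error} transfers with only constant-factor changes (Assumption~\ref{assumption:closeness}, the sandwiching $\alpha_1 \cdot \mathbb{B}_2^d \subset \Omega \subset \alpha_2 \cdot \mathbb{B}_2^d$, and $\cF^\aff_{\mathsf{res}} \subseteq \cF^\aff_1$). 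The paper's proof is precisely this substitution, stated more tersely.
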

Theorem \ref{thm:swap_omniprediction} significantly improves upon the $ \tilde{\cO}\bigc{T^\frac{7}{8} (d ^ {2}\log \frac{1}{\delta}) ^ {\frac{1}{4}}}$ high 
probability bound of \cite{garg2024oracle}. 

\subsection{Bound on the contextual swap regret}\label{subsec:contextual_swap_regret_bound}
In this section, we derive an improved high probability bound on $\sreg_{\cF^\lin_{4}}$. Similar to Lemma \ref{lem:pseudo_swap_to_swap}, we first obtain a high probability bound that relates $\sreg_{\cF}$ and $\psreg_{\cF}$ for a finite hypothesis class.
\begin{lemma}\label{lem:pseudo_swap_regret_to_swap_regret}
    Let $\cF \subset [-1, 1] ^ {\cX}$ be a finite hypothesis class. For any algorithm $\cA_{\sreg_{\cF}}${such that for each $t \in [T]$ the conditional distribution $\cP_{t}$ is deterministic}, with probability at least $1 - \delta$ over $\cA_{\sreg_{\cF}}$'s predictions {$p_{1}, \dots, p_{T}$}, we have \begin{align*}
        \sreg_{\cF} \le \psreg_{\cF} + 8\sqrt{(N + 1) T \log \frac{2(N + 1) \abs{\cF}}{\delta}} + 8(N + 1) \log \frac{2(N + 1) \abs{\cF}}{\delta}.
    \end{align*}
\end{lemma}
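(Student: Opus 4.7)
The plan is to mimic the proof of \pref{lem:pseudo_swap_to_swap}, adapted from the squared correlation that defines $\smcal_{\cF, 2}$ to the squared-loss regret that defines $\sreg_{\cF}$. For a fixed comparator profile $\{f_p\}_{p \in \cZ}$, partitioning by the realized prediction level lets me write
\begin{align*}
&\sum_{t = 1}^{T} \bigs{(p_t - y_t)^2 - (f_{p_t}(x_t) - y_t)^2} - \sum_{t = 1}^{T} \mathbb{E}_{p_t \sim \cP_t}\bigs{(p_t - y_t)^2 - (f_{p_t}(x_t) - y_t)^2} \\
&\qquad = \sum_{p \in \cZ} \sum_{t = 1}^{T} \bigc{\ind{p_t = p} - \cP_t(p)}\bigc{(p - y_t)^2 - (f_p(x_t) - y_t)^2}.
\end{align*}
Combining this with the swap decomposition (pulling $\sup_f$ inside the outer $\sum_{p}$, since in both $\sreg_{\cF}$ and $\psreg_{\cF}$ each $f_p$ is chosen independently) gives
\[
\sreg_{\cF} - \psreg_{\cF} \le \sum_{p \in \cZ} \sup_{f \in \cF} \sum_{t = 1}^{T} X_t^{(p, f)}, \qquad X_t^{(p, f)} \coloneqq \bigc{\ind{p_t = p} - \cP_t(p)}\bigc{(p - y_t)^2 - (f(x_t) - y_t)^2}.
\]
Because the adversary is oblivious and $\cP_t$ is deterministic given the prediction history, the sequence $\{X_t^{(p, f)}\}_{t = 1}^{T}$ is a martingale difference sequence for every fixed pair $(p, f) \in \cZ \times \cF$.

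The key step is a Freedman-type concentration with a carefully computed conditional variance. The factor $(p - y_t)^2 - (f(x_t) - y_t)^2$ lies in $[-4, 4]$ since $\abs{f(x_t)} \le 1$ and $y_t \in \{0, 1\}$, so $\abs{X_t^{(p, f)}} \le 4$. More importantly,
\[
\mathbb{E}_{t}\bigs{(X_t^{(p, f)})^2} = \cP_t(p)(1 - \cP_t(p))\bigc{(p - y_t)^2 - (f(x_t) - y_t)^2}^2 \le 16 \cP_t(p),
\]
so the total predictable quadratic variation at level $p$ is $16 \sum_{t} \cP_t(p)$ rather than a trivial $\cO(T)$. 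Applying Freedman's inequality with failure probability $\delta / (2 (N + 1) \abs{\cF})$ then yields, for each $(p, f)$, a bound of the form $8 \sqrt{\sum_{t} \cP_t(p) \log \tfrac{2 (N + 1) \abs{\cF}}{\delta}} + 8 \log \tfrac{2 (N + 1) \abs{\cF}}{\delta}$.

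Finally, I would union bound this event over all $(N + 1) \abs{\cF}$ pairs $(p, f)$ so that the Freedman conclusion holds uniformly inside $\sup_{f \in \cF}$, and then collapse the resulting sum of square roots via Cauchy--Schwartz, using the fact that $\sum_{p \in \cZ} \cP_t(p) = 1$ for every $t$:
\[
\sum_{p \in \cZ} \sqrt{\sum_{t = 1}^{T} \cP_t(p)} \le \sqrt{(N + 1) \sum_{p \in \cZ} \sum_{t = 1}^{T} \cP_t(p)} = \sqrt{(N + 1) T}.
\]
This produces the claimed $8 \sqrt{(N + 1) T \log \tfrac{2 (N + 1) \abs{\cF}}{\delta}}$ leading term, while the linear-in-$\log$ contributions sum to $8 (N + 1) \log \tfrac{2 (N + 1) \abs{\cF}}{\delta}$. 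The one subtle point worth noting is the variance calculation: a naive $\abs{X_t^{(p, f)}} \le 4$ bound would give a $\sqrt{T}$ Freedman deviation per level and therefore $(N + 1) \sqrt{T}$ in total, losing a $\sqrt{N + 1}$ factor. Using the Bernoulli-type variance $\cP_t(p)(1 - \cP_t(p)) \le \cP_t(p)$ together with Cauchy--Schwartz is what recovers the tighter $\sqrt{(N + 1) T}$ rate that appears in the statement.
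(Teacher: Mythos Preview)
Your proposal is correct and follows essentially the same approach as the paper: the same martingale $X_t^{(p,f)}=(\ind{p_t=p}-\cP_t(p))((p-y_t)^2-(f(x_t)-y_t)^2)$, the same variance bound $\mathbb{E}_t[(X_t^{(p,f)})^2]\le 16\,\cP_t(p)$, Freedman plus a union bound over $(p,f)\in\cZ\times\cF$, and Cauchy--Schwartz to collapse $\sum_p\sqrt{\sum_t\cP_t(p)}$. The only cosmetic difference is that the paper carries out the optimization of the Freedman parameter $\mu_p$ explicitly via the case split $\cI=\{p:\sum_t\cP_t(p)\ge\log\frac{2(N+1)|\cF|}{\delta}\}$ versus $\bar\cI$, whereas you state the optimized Freedman conclusion directly; this is legitimate here precisely because of the hypothesis that each $\cP_t$ is deterministic, so $\sum_t\cP_t(p)$ is nonrandom and the optimal $\mu_p$ can be fixed in advance.
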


The proof of Lemma \ref{lem:pseudo_swap_regret_to_swap_regret} is deferred to Appendix \ref{app:pseudo_swap_regret_to_swap_regret} and follows by an application of Freedman's inequality, similar to Lemma \ref{lem:pseudo_swap_to_swap}. Equipped with Lemma \ref{lem:pseudo_swap_regret_to_swap_regret} and by a covering number-based argument, we bound $\sreg_{\cF_{4}^\lin}$ in the following theorem, whose proof can be found in Appendix \ref{app:swap_reg_squared_loss}.
\begin{theorem}\label{thm:swap_reg_squared_loss}
    There exists an efficient algorithm that achieves \begin{align*}
        \sreg_{\cF_{4}^\lin} = \cO\bigc{T^{\frac{3}{5}} (d \log T) ^ \frac{2}{5} + T ^ {\frac{3}{5}} (d \log T) ^ {-\frac{1}{10}} \sqrt{\log \frac{1}{\delta}}}
    \end{align*}
    with probability at least $1 - \delta$. Furthermore, $\mathbb{E}\bigs{\sreg_{\cF_{4}^\lin}} = \cO\bigc{T^{\frac{3}{5}} (d \log T) ^ \frac{2}{5}}$, where the expectation is taken over the internal randomness in the algorithm.
\end{theorem}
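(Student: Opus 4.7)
The plan is to reduce $\sreg_{\cF_{4}^\lin}$ to $\psreg_{\cF_{4}^\lin}$ via Lemma \ref{lem:pseudo_swap_regret_to_swap_regret}, combined with a covering argument to move from the infinite class $\cF_{4}^\lin$ to a finite class. The starting point is the deterministic guarantee $\psreg_{\cF_{4}^\lin} = O(Nd\log T + T/N^{2})$ established in Section \ref{sec:bound_ell_2_swap} for the Blum--Mansour algorithm instantiated with $\mathsf{ONS}_{i}$. Since this algorithm produces a deterministic conditional distribution $\cP_{t}$ at each round ($\mathsf{ONS}$ is deterministic, $\mathsf{RRound}$ produces a deterministic distribution, and the stationary distribution of a deterministic matrix is deterministic), the hypothesis of Lemma \ref{lem:pseudo_swap_regret_to_swap_regret} is satisfied.

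To apply Lemma \ref{lem:pseudo_swap_regret_to_swap_regret}, I first discretize $\cF_{4}^\lin$ using a finite $\varepsilon$-cover $\cC_\varepsilon \subseteq \cF_{4}^\lin$ of size $|\cC_\varepsilon| = O(1/\varepsilon)^{d}$, analogous to Proposition \ref{prop:cover_linear_functions}. For every comparator profile $\{f_{p}\}_{p \in \cZ} \subseteq \cF_{4}^\lin$, I replace each $f_{p}$ by its representative $\tilde{f}_{p} \in \cC_\varepsilon$; since $|f(x)| \le 2$ and $y \in \{0,1\}$ give $|f(x)-y| \le 3$, the squared-loss difference $|(f_{p}(x_{t}) - y_{t})^{2} - (\tilde{f}_{p}(x_{t}) - y_{t})^{2}|$ is at most $6\varepsilon + \varepsilon^{2} = O(\varepsilon)$ per round, yielding $\sreg_{\cF_{4}^\lin} \le \sreg_{\cC_\varepsilon} + O(\varepsilon T)$. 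Combining this with Lemma \ref{lem:pseudo_swap_regret_to_swap_regret} applied to $\cC_\varepsilon$ and the inclusion $\psreg_{\cC_\varepsilon} \le \psreg_{\cF_{4}^\lin}$ (same algorithm, smaller comparator set), I get, with probability at least $1 - \delta$,
\begin{align*}
\sreg_{\cF_{4}^\lin} \le O\!\left( Nd\log T + \frac{T}{N^{2}} + \sqrt{NT\bigl(d\log(1/\varepsilon) + \log(N/\delta)\bigr)} + N\bigl(d\log(1/\varepsilon) + \log(N/\delta)\bigr) + \varepsilon T \right).
\end{align*}

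Setting $\varepsilon = 1/T$ makes the discretization error $O(1)$ and $\log(1/\varepsilon) = \log T$. The two leading terms $T/N^{2}$ and $\sqrt{NT\,d\log T}$ balance at $N = \Theta((T/(d\log T))^{1/5})$, yielding the claimed $T^{3/5}(d\log T)^{2/5}$ rate. Splitting $\log(|\cC_\varepsilon|N/\delta) = O(d\log T) + O(\log(1/\delta))$ in the square root produces the additional term $\sqrt{NT\log(1/\delta)} = T^{3/5}(d\log T)^{-1/10}\sqrt{\log(1/\delta)}$, matching the high-probability statement exactly. All other contributions (such as $Nd\log T$ and $N\log(N/\delta)$) are strictly lower order. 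The in-expectation version follows either by integrating the Freedman tail or simply by choosing $\delta = 1/T$: the trivial worst-case bound $\sreg_{\cF_{4}^\lin} = O(T)$ makes the bad event contribute at most $O(1)$ to the expectation, so the same rate is preserved.

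The main obstacle is balancing the covering parameter $\varepsilon$: too large an $\varepsilon$ blows up the $\varepsilon T$ discretization term and destroys the $T^{3/5}$ rate, while too small an $\varepsilon$ inflates $\log|\cC_\varepsilon| = O(d\log(1/\varepsilon))$ inside the Freedman concentration and disturbs the delicate interplay between the $T/N^{2}$ and $\sqrt{NT\,d\log T}$ terms that produces the new $3/5$ exponent (as opposed to the $1/3$ exponent obtained in Theorem \ref{thm:final_bound_smcal}, where no Freedman term appears after the martingale analysis). Verifying that $\varepsilon = 1/T$ threads this needle is the only delicate step; everything else is a reassembly of ingredients already developed in Sections \ref{sec:pseudo_swap_to_swap}--\ref{sec:bound_ell_2_swap}, stopping at the swap regret level instead of proceeding to swap multicalibration.
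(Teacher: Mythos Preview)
Your proposal is correct and follows essentially the same approach as the paper: the paper also reduces $\sreg_{\cF_{4}^\lin}$ to $\sreg_{\cS_\varepsilon}$ for a finite $\varepsilon$-cover (incurring an $O(\varepsilon T)$ error), invokes Lemma~\ref{lem:pseudo_swap_regret_to_swap_regret} with $\psreg_{\cS_\varepsilon}\le\psreg_{\cF_4^\lin}=O(T/N^2+Nd\log T)$, sets $\varepsilon=1/T$ and $N=(T/(d\log T))^{1/5}$, and derives the in-expectation bound by choosing $\delta=1/T$ and using the trivial $O(T)$ bound on the bad event.
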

For $\sreg_{\cF^\lin_{4}}$, \cite{garg2024oracle} proposed an algorithm that achieves $\sreg_{\cF^
\lin_{4}} = \tilde{\cO}\bigc{d T^{\frac{3}{4}} \sqrt{\log \frac{1}{\delta}}}$. Clearly, our result in Theorem \ref{thm:swap_reg_squared_loss} is strictly better, with an improved dependence in both $d, T$. 
\begin{remark}
    Note that in Theorem \ref{thm:final_bound_smcal} we set $N = \bigc{\frac{T}{d\log T}} ^ {\frac{1}{3}}$, which is different from the choice of $N$ in Theorem \ref{thm:swap_reg_squared_loss}. Substituting the former value yields a leading dependence of $\tilde{\cO}(T^{\frac{2}{3}})$ in the bound on $\sreg_{\cF^\lin_{4}}$. Therefore, even if not the best achievable bound on $\sreg_{\cF^\lin_{4}}$, the algorithm guaranteed by Theorem \ref{thm:final_bound_smcal} achieves an improved dependence on $T$ compared to \cite{garg2024oracle}'s result.
\end{remark}

\section{From Online to Distributional}\label{sec:offline_sample_complexity}
In this section, using our improved guarantees for contextual swap regret (Theorem \ref{thm:swap_reg_squared_loss}), swap multicalibration (Theorem \ref{thm:final_bound_smcal}), and swap omniprediction (Theorem \ref{thm:swap_omniprediction}), we establish significantly improved sample complexity bounds for the corresponding distributional quantities. For swap omniprediction and swap agnostic learning, we shall perform an online-to-batch reduction using the corresponding online algorithm that achieves the improved guarantee. The sample complexity bound for swap multicalibration shall follow from that of swap agnostic learning. Before proceeding to the details, we first give formal definitions of the above notions in the distributional setting.

\paragraph{Distributional (Swap) Multicalibration.} For a bounded hypothesis class $\cF$, the predictor $p$ is perfectly multicalibrated if $\sup_{f \in \cF} \mathbb{E}[f(x) \cdot (y - v) \,|\, p(x) = v] = 0$ for each $v \in \mathsf{Range}(p)$. The above requirement is quantified via the objective of minimizing the multicalibration error, where the predictor $p$ has $\ell_{q}$-multicalibration error ($q \ge 1$) at most $\varepsilon$ if $\dmcal_{\cF, q} \coloneqq \sup_{f \in \cF} \mathbb{E}_{v}\bigs{\abs{\mathbb{E}_{\cD}\bigs{f(x) \cdot (y - v) | p(x) = v}} ^ {q}}$ satisfies $\dmcal_{\cF, q} \le \varepsilon$. Motivated by the role of swap regret in online learning and to explore the interplay between multicalibration and omniprediction, \citep{gopalan2023swap} introduced the notion of swap multicalibration, where the predictor $p$ has $\ell_{q}$-swap multicalibration error at most $\varepsilon$ if $\dsmcal_{\cF, q} \coloneqq \mathbb{E}_{v}\bigs{\sup_{f \in \cF} \abs{\mathbb{E}_{\cD}\bigs{f(x) \cdot (y - v) \,|\, p(x) = v}} ^ {q}} \le \varepsilon$. Since $\dmcal_{\cF, q} \le \dsmcal_{\cF, q}$, a swap-mutlicalibrated predictor is also multicalibrated.

\paragraph{Distributional (Swap) Omniprediction.} A predictor $p$ such that $$\mathsf{DOmni}_{\cL, \cF} \coloneqq \sup_{\ell \in \cL} \sup_{f \in \cF} \mathbb{E}[\ell(k_{\ell}(p(x)), y) - \ell(f(x), y)] \le \varepsilon$$ is referred to as a $(\varepsilon, \cL, \cF)$-omnipredictor. In a similar spirit to swap multicalibration, \cite{gopalan2023swap} introduced the notion of swap omniprediction, where the predictor is required to outperform the best hypothesis in $\cF$ not just marginally but also when conditioned on the level sets of the predictor, even when the losses are indexed by the predictions themselves. In particular, the predictor $p$ has swap omniprediction error at most $\varepsilon$ if \begin{align}\label{eq:swap_omni_intro_distributional} \mathsf{DSOmni}_{\cL, \cF} \coloneqq \sup_{\{\ell_{v} \in \cL, f_{v} \in \cF\}_{v \in \cZ}} \mathbb{E}_{v \sim \cD_{p}} \bigs{\mathbb{E}\bigs{\ell_{v}(k_{\ell_{v}}(v), y) - \ell_{v}(f_{v}(x), y)| p(x) = v}} \le \varepsilon.
\end{align}
Notably, omniprediction corresponds to a special case of swap omniprediction when the loss, comparator profiles are fixed and independent of $p \in \cZ$. Therefore, we have the trivial relation $\mathsf{DOmni}_{\cL, \cF} \le \mathsf{DSOmni}_{\cL, \cF}$.

\paragraph{Swap Agnostic Learning.} Swap agnostic learning is a special case of swap omniprediction when $\cL = \{\ell\}$ and $\ell = (p - y)^{2}$, so that $k_{\ell}(p) = p$. We define the swap agnostic error as \begin{align}\label{eq:swap_agnostic_error_def}
    \mathsf{SAErr}_{\cF} \coloneqq \sup_{\{f_{v} \in \cF\}_{v \in \cZ}} \mathbb{E}\bigs{(p(x) - y) ^ {2} - (f_{p(x)}(x) - y) ^ {2}}.
\end{align}

\subsection{Sample complexity of swap omniprediction}\label{sec:offline_swap_omnipredictor}

We first derive the sample complexity of learning a $(\varepsilon, \cL^\cvx, \cF^{\mathsf{aff}}_{\mathsf{res}})$-swap omnipredictor. 
As already mentioned, we perform an online-to-batch reduction using our online algorithm in Theorem \ref{thm:swap_omniprediction}, which we refer to as $\cA_{\mathsf{swap}}$ for brevity. The reduction proceeds in the following manner: given $T$ samples $(x_{1}, y_{1}), \dots, (x_{T}, y_{T})$ sampled i.i.d from $\cD$, we feed the samples to $\cA_{\mathsf{swap}}$ to obtain predictors $p_{1}, \dots, p_{T}$, where $p_{t}: \cX \to \cZ$ for each $t \in [T]$. Subsequently, we sample a predictor $p$ from the uniform distribution $\pi$ over $p_{1}, \dots, p_{T}$. To obtain the number of samples $T$ sufficient to drive the swap omniprediction error to be at most $\varepsilon$, we shall derive a concentration bound (tailored to the choice of $\cF = \cF_{\mathsf{res}}^\aff, \cL = \cL^\cvx$) that relates the distributional version of the swap omniprediction error with its online analogue, i.e., we bound the deviation $\Delta$ defined to be the supremum of \begin{align*}
\sup_{\{(\ell_{v}, f_{v}) \in \cL \times \cF\}_{v \in \cZ}} \Big\lvert\vphantom{\sum_{t = 1}^T \ell_{p_{t}(x_{t})}(k_{\ell_{p_{t}(x_{t})}}(p_{t}(x_{t})), y_{t})}\,
& \mathbb{E}_{(x, y) \sim \cD,\, p \sim \pi} \Big[
    \ell_{p(x)}(k_{\ell_{p(x)}}(p(x)), y) - 
    \ell_{p(x)}(f_{p(x)}(x), y)
\Big] \\
& - \frac{1}{T} \sum_{t = 1}^{T} \Big[
    \ell_{p_{t}(x_{t})}(k_{\ell_{p_{t}(x_{t})}}(p_{t}(x_{t})), y_{t}) - 
    \ell_{p_{t}(x_{t})}(f_{p_{t}(x_{t})}(x_{t}), y_{t})
\Big]\, \Big\rvert.
\end{align*}
Since $\pi$ is the uniform mixture over $p_{1}, \dots, p_{T}$, we have \begin{align*}
    \mathbb{E}_{(x, y) \sim \cD, p \sim \pi} \bigs{\ell_{p(x)}(k_{\ell_{p(x)}}(p(x)), y) - \ell_{p(x)}(f_{p(x)}(x), y)} &= \\
    & \hspace{-20em}\frac{1}{T} \sum_{t = 1} ^ {T} \mathbb{E}_{(x, y) \sim \cD}\bigs{\ell_{p_{t}(x)}(k_{\ell_{p_{t}(x)}}(p_{t}(x)), y) - \ell_{p_{t}(x)}(f_{p_{t}(x)}(x), y)}.
\end{align*}
Using the Triangle inequality and sub-additivity of the supremum function, we obtain $\Delta \le \frac{1}{T}(\cT_{1} + \cT_{2})$, where $\cT_{1}, \cT_{2}$ are defined as {\begin{align*}
    \cT_{1} &\coloneqq \sup_{\{\ell_{p} \in \cL\}_{p \in \cZ}} \abs{\sum_{t = 1} ^ {T} \ell_{p_{t}(x_{t})}(k_{\ell_{p_{t}(x_{t})}}(p_{t}(x_{t}), y_{t}) - \sum_{t = 1} ^ {T} \mathbb{E}_{(x, y) \sim \cD}\bigs{\ell_{p_{t}(x)}(k_{\ell_{p_{t}(x)}}(p_{t}(x)), y)}}, \\
    \cT_{2} &\coloneqq \sup_{\{(\ell_{p}, f_{p}) \in \cL \times \cF\}_{p \in \cZ}} \abs{\sum_{t = 1} ^ {T} \ell_{p_{t}(x_{t})}(f_{p_{t}(x_{t})}(x_{t}), y_{t}) - \sum_{t = 1} ^ {T}\mathbb{E}_{(x, y) \sim \cD} \bigs{\ell_{p_{t}(x)}(f_{p_{t}(x)}(x), y)}}.
\end{align*}}
{Note the slight change in the notation: we index $\ell_{v}, f_{v}$ with $\ell_{p}, f_{p}$ instead. We reserve $v$ for a specific V-shaped proper loss $\ell_{v}(p, y)$ that shall be defined subsequently. In the next two lemmas, we bound $\cT_{1}, \cT_{2}$.}
 \begin{lemma}\label{lem:bound_T_1}
    For a $\delta \le \frac{1}{T}$, with probability at least $1 - \delta$, we have $\cT_{1} = \cO\bigc{\frac{T}{N} + \sqrt{N T \log \frac{N}{\delta}}}.$
\end{lemma}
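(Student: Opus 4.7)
The plan is to view $\cT_{1}$ as a uniform deviation of a martingale sum. Since $\cA_{\mathsf{swap}}$ processes the samples sequentially, the predictor $p_{t}$ is measurable with respect to the filtration $\cF_{t-1}$ generated by $(x_{1}, y_{1}), \ldots, (x_{t-1}, y_{t-1})$, while $(x_{t}, y_{t})$ is a fresh i.i.d.\ draw from $\cD$. Hence, for any fixed profile $\{\ell_{p}\}_{p \in \cZ}$, the increment
\[
X_{t}(\{\ell_{p}\}) \coloneqq \ell_{p_{t}(x_{t})}\bigl(k_{\ell_{p_{t}(x_{t})}}(p_{t}(x_{t})), y_{t}\bigr) - \mathbb{E}_{(x,y)\sim\cD}\bigl[\ell_{p_{t}(x)}(k_{\ell_{p_{t}(x)}}(p_{t}(x)), y)\bigr]
\]
is a martingale difference bounded by $2$ in absolute value (since $\ell \in [-1, 1]$). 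For a single fixed profile, Azuma-Hoeffding already controls $\abs{\sum_{t} X_{t}}$ at the rate $\cO(\sqrt{T \log(1/\delta)})$, so the only real task is to pay for the supremum over profiles $\{\ell_{p}\} \in (\cL^{\cvx})^{N+1}$.

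The main conceptual step is a reparametrization that collapses the infinite-dimensional class $\cL^{\cvx}$ to a low-dimensional cube. For each $(p, y) \in \cZ \times \{0, 1\}$, the quantity $\ell_{p}(k_{\ell_{p}}(p), y)$ is simply a scalar in $[-1, 1]$; denote it $h^{y}_{p}$. The profile $\{\ell_{p}\}_{p \in \cZ}$ enters the definition of $\cT_{1}$ only through the vector $h \in [-1, 1]^{2(N+1)}$, so
\[
\cT_{1} \le \sup_{h \in [-1, 1]^{2(N+1)}} \Bigl\lvert \sum_{t=1}^{T} h^{y_{t}}_{p_{t}(x_{t})} - \mathbb{E}_{(x, y) \sim \cD}\bigl[h^{y}_{p_{t}(x)}\bigr] \Bigr\rvert.
\]
I would then build a sup-norm $\epsilon$-cover $\cN_{\epsilon}$ of $[-1, 1]^{2(N+1)}$ of size $\abs{\cN_{\epsilon}} \le (3/\epsilon)^{2(N+1)}$. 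Replacing $h$ by its nearest cover element changes each of the $T$ summands (and each of the $T$ expectations) by at most $\epsilon$, contributing at most $2 \epsilon T$ to the supremum.

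For each fixed $\tilde{h} \in \cN_{\epsilon}$, I would apply Azuma-Hoeffding to the corresponding martingale difference sequence and union-bound over $\cN_{\epsilon}$ to obtain, with probability at least $1 - \delta$,
\[
\cT_{1} \le 2 \epsilon T + \cO\Bigl(\sqrt{T \bigl( (N+1) \log(1/\epsilon) + \log(1/\delta) \bigr)}\Bigr).
\]
Choosing $\epsilon = 1/N$ balances the two contributions, and using the trivial inequality $N \log N + \log(1/\delta) \le N \log(N/\delta)$ to merge the logs yields the claimed bound $\cT_{1} = \cO(T/N + \sqrt{NT \log(N/\delta)})$; the hypothesis $\delta \le 1/T$ ensures that $\log(1/\delta)$ is comparable to the other logarithmic factors so that no spurious $\log T$ terms remain. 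The only nontrivial obstacle is the reparametrization observation (noting that only the $2(N+1)$ best-response loss values matter, not the full functions $\ell_{p}$); once that is in hand, the rest is a textbook $\epsilon$-cover-plus-concentration argument and does not require Freedman's inequality, which is reserved for the more delicate bound on $\cT_{2}$.
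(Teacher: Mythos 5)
Your proof is correct, and it takes a genuinely different and more elementary route than the paper's. The crucial observation---that the profile $\{\ell_p\}_{p \in \cZ}$ enters $\cT_1$ only through the $2(N+1)$ scalars $h^y_p = \ell_p(k_{\ell_p}(p), y)$, $p \in \cZ$, $y \in \{0,1\}$, since the best response $k_{\ell_p}$ is applied to the forecast $p$ itself (unlike in $\cT_2$, where $\ell_p$ is also evaluated at the data-dependent point $f_p(x)$)---immediately collapses the supremum over an infinite loss class to a supremum over the cube $[-1,1]^{2(N+1)}$. From there, a sup-norm $\epsilon$-cover of the cube combined with Azuma--Hoeffding and a union bound yields the stated rate with $\epsilon = 1/N$. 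The paper instead observes that $\ell(k_\ell(\cdot), y)$ is proper, passes to the integral representation of proper losses in terms of the V-shaped basis $\ell_v(p,y) = (v-y)\operatorname{sign}(p-v)$, discretizes $v$ to $v' \in \cZ$ (which is where the $T/N$ term comes from, as in your approach), and then controls the resulting martingale sums for each $(p, v') \in \cZ^2$ via Freedman's inequality, with a geometric partitioning of the admissible range of $\mu$ to union-bound over the data-dependent optimal choice of $\mu$; a final Cauchy--Schwarz over $p \in \cZ$ gives the $\sqrt{NT\log(N/\delta)}$ term. Your argument avoids the V-shaped basis, Freedman's inequality, and the geometric partition entirely, at the cost of a slightly larger (but still $\cO(N\log(N/\delta))$) log factor from the covering number $\log|\cN_\epsilon| \approx N\log N$. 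One small point: the hypothesis $\delta \le 1/T$ plays no role in your version of the argument---the paper needs it to absorb a $\log\log T$ arising from the $\cO(\log T)$-way geometric partition, which you bypass---so your remark about it absorbing $\log T$ factors is not needed for correctness. Both approaches give the same final order, but yours is cleaner and, because it does not invoke Freedman's conditional-variance machinery, makes transparent that $\cT_1$ is the ``easy'' term in the online-to-batch deviation.
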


{The proof of Lemma \ref{lem:bound_T_1} can be found in Appendix \ref{app:bound_T_1}, and combines our martingale analysis via Freedman's inequality with some ideas from \cite{okoroafor2025near}, who derived a similar concentration bound for omniprediction. In particular, we first observe that for any loss function $\ell$, the induced loss $\tilde{\ell}(p, y) = \ell(k_{\ell}(p), y)$ is proper by the definition of $k_{\ell}$, therefore, we can replace $\sup_{\ell \in \cL}$ by $\sup_{\ell \in \cL^\mathsf{proper}}$, where $\cL^\mathsf{proper}$ is the class of bounded $(\in [-1, 1])$ proper losses. Thereafter, we utilize a basis decomposition of proper losses in terms of V-shaped losses $\ell_{v}(p, y) = (v - y) \cdot \text{sign}(p - v)$ \citep{li2022optimization, kleinberg2023u}, i.e., for each $\ell \in \cL^\mathsf{proper}$ there exists a non-negative function $\mu_{\ell}: [0, 1] \to \Rn_{\ge 0}$ such that $\ell(p, y) = \int_{0} ^ {1} \mu_{\ell}(v) \ell_{v}(p, y) dv$ and $\int_{0} ^ {1} \mu_{\ell}(v) dv \le 2$. This allows us to express the supremum over $\ell \in \cL^{\mathsf{proper}}$ as a supremum over an uncountable basis $\{\ell_{v}\}_{v \in [0, 1]}$. However, since our predictions fall inside $\cZ$, we bound the supremum over $v \in [0, 1]$ by $v' \in \cZ$, incurring an additional $\cO(\frac{1}{N})$ discretization error. This shall allow us to take a union bound over $v' \in \cZ$ eventually. As a result, we arrive at the following bound: \begin{align*}
    \cT_{1} \le \frac{2T}{N} + \sum_{p \in \cZ} \abs{\sum_{t = 1} ^ {T} \ind{p_{t}(x_{t}) = p} \cdot \ell_{v'}(p, y) - \mathbb{P}(p_{t}(x) = p) \cdot \mathbb{E}[\ell_{v'}(p, y) | p_{t}(x) = p]}.
\end{align*}
Applying Freedman's inequality to the martingale difference sequence $X_{1}, \dots, X_{T}$ where $X_{t} = \ind{p_{t}(x_{t}) = p} \cdot \ell_{v'}(p, y) - \mathbb{P}(p_{t}(x) = p) \cdot \mathbb{E}[\ell_{v'}(p, y) | p_{t}(x) = p]$, 
we arrive at $\abs{\sum_{t = 1} ^ {T} X_{t}} \le \mu \sum_{t = 1} ^ {T} \mathbb{P}(p_{t}(x) = p) + \frac{1}{\mu} \log \frac{2}{\delta}$. However, since the optimal $\mu = \frac{1}{2}\min\bigc{1, \sqrt{\frac{\log \frac{2}{\delta}}{\sum_{t = 1} ^ T \mathbb{P}(p_{t}(x) = p)}}}$ depends on $\sum_{t = 1} ^ {T} \mathbb{P}(p_{t}(x) = p)$, which is itself a random quantity, we cannot merely substitute the optimal $\mu$ and take a union bound over all $p \in \cZ$. To alleviate this issue, we first observe that the optimal $\mu \in \cI = \bigs{\frac{1}{2}\sqrt{\log \frac{2}{\delta}}, \frac{1}{2}}$. Thus, we partition the interval $\cI = \cI_{n} \cup \dots \cup \cI_1 \cup \cI_0$, where $\cI_{k} = \left[\frac{1}{2^{k + 1}}, \frac{1}{2^k} \right)$ for all $k \in [n]$ and $\cI_{0} = \{\frac{1}{2}\}$. Each interval corresponds to a condition on $\sum_{t = 1} ^ {T} \mathbb{P}(p_{t}(x) = p)$ for which the optimal $\mu$ lies within that interval. For each interval $\cI_{k}$, we also associate a parameter $\mu_k = \frac{1}{2^{k + 1}}$. Notably, we can apply Freedman's inequality for this choice of $\mu_{k}$ (since it is fixed) and take a union bound over all $k$. Finally, we obtain the following uniform upper bound by analyzing the cases corresponding to $\sum_{t = 1} ^ {T} \mathbb{P}(p_{t}(x) = p)$:
$$\cO\bigc{\sqrt{\bigc{\sum_{t = 1} ^ {T} \mathbb{P}(p_{t}(x) = p)} \log \frac{1}{\delta}} + \log \frac{1}{\delta}}.$$
Using Cauchy-Schwartz inequality then yields the $\cO\bigc{\sqrt{NT \log \frac{N}{\delta}}}$ term in Lemma \ref{lem:bound_T_1}.
}

For $\cT_{2}$, we specifically tailor our analysis to the case $\cL = \cL^\cvx$ and $\cF = \cF_{\mathsf{res}}^{\mathsf{aff}}$. Since $\cL^\cvx$ is infinite, we cannot merely apply Freedman's inequality and take a union bound over all $\ell \in \cL$. However, a recent result (Lemma \ref{lem:rank_convex}) due to \cite{gopalan2024omnipredictors} gives a tight (up to logarithmic terms) bound on the approximate rank of convex functions. Therefore, to obtain a high probability bound for $\cT_{2}$, we shall express the supremum over $\ell \in \cL^\cvx$ in terms of a supremum over the elements of the basis and subsequently apply Freedman's inequality to bound the latter quantity.

We first define a notion of approximate basis, following \cite{gopalan2023swap, okoroafor2025near}.  
\begin{definition}[\cite{okoroafor2025near}]
    Let $\Gamma$ be a set and $\cF \subseteq [-1, 1] ^ {\Gamma}$. A set $\cG \subset [-1, 1] ^ {\Gamma}$ is an $\varepsilon > 0$ approximate basis for $\cF$ with sparsity $s$ and coefficient norm $\lambda$, if for every $h \in \cF$, there exists a finite subset $\{g_{1}, \dots, g_{s}\} \subseteq \cG$ and coefficients $c_{1}, \dots, c_{s} \in [-1, 1]$ satisfying \begin{align*}
        \abs{h(x) - \sum_{i = 1} ^ {s} c_{i} g_{i}(x)} \le \varepsilon \text{ for all } x \in \Gamma \text{ and } \sum_{i = 1} ^ {s} \abs{c_{i}} \le \lambda.
    \end{align*}
    In the special case when $\cG$ itself has $s$ elements, we say $\cG$ is a finite $\varepsilon$-basis for $\cF$ of size $s$ with coefficient norm $\lambda$.
\end{definition}
\begin{lemma}[\cite{gopalan2024omnipredictors}]\label{lem:rank_convex}
    For all $\varepsilon > 0$, $\cL^\cvx$ admits a finite $\varepsilon$-basis of size $\cO\bigc{\frac{\log ^ {\frac{4}{3}} (\frac{1}{\varepsilon})}{\varepsilon ^ {\frac{2}{3}}}}$ with coefficient norm $2$.
\end{lemma}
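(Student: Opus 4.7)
The plan is to use the classical representation of convex functions as mixtures of rectified linear units (ReLUs) plus an affine part, and then quantize the mixing measure carefully. First, since each $\ell \in \cL^\cvx$ decomposes over $y \in \{0, 1\}$ into two independent convex $1$-Lipschitz maps $\ell(\cdot, y) : [0, 1] \to [-1, 1]$, I would build an $\varepsilon$-basis for the class $\cC$ of such univariate convex $1$-Lipschitz functions and duplicate it across the two values of $y$, doubling the size while preserving the coefficient norm.

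Next, I would exploit the integral representation
\begin{align*}
f(p) = f(0) + f'_+(0) \, p + \int_{[0, 1]} (p - t)_+ \, d\mu_f(t)
\end{align*}
valid for every $f \in \cC$, where $\mu_f \ge 0$ is a Borel measure with $\mu_f([0, 1]) = f'_-(1) - f'_+(0) \le 2$ and $|f(0)|, |f'_+(0)| \le 1$. The candidate basis is $\cG = \{1, p\} \cup \{(p - t_j)_+ : j \in [M]\}$ for a chosen knot set $\{t_j\}$, and for each $f \in \cC$ the natural coefficients are $f(0)$, $f'_+(0)$, and $\mu_f(B_j)$, where $\{B_j\}$ is the partition of $[0,1]$ into the Voronoi cells of $\{t_j\}$. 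Since the coefficients in the ReLU part are nonnegative and sum to $\mu_f([0,1]) \le 2$, this construction automatically gives coefficient norm at most $2$ and sparsity at most $M + 2$.

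The main obstacle is the third step: choosing $\{t_j\}$ with $M = \tilde{O}(\varepsilon^{-2/3})$ so that the sup-norm quantization error
\begin{align*}
\sup_{f \in \cC,\; p \in [0, 1]} \left| \int_{[0,1]} (p - t)_+ \, d\mu_f(t) \;-\; \sum_{j} \mu_f(B_j) (p - t_j)_+ \right| \le \varepsilon
\end{align*}
holds uniformly. Using only that $t \mapsto (p - t)_+$ is $1$-Lipschitz yields error $O(\max_j |B_j|)$ and the trivial rate $M = O(\varepsilon^{-1})$. To obtain the $2/3$ exponent, I would combine two ingredients: (i) an adaptive multi-scale (dyadic) choice of cell widths that exploits the fact that $t \mapsto (p - t)_+ - (p - t_j)_+$ is affine in $p$ outside a window around $t_j$, so that error contributions from cells far from $p$ can be absorbed into small corrections to the affine coefficients (or equivalently cancel when $t_j$ is chosen near the cell's centroid); and (ii) a $\log(1/\varepsilon)$ union bound over the dyadic scales and over a fine net of $p$'s.

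The technical heart, and the place I expect the bulk of the work, is showing that after these cancellations the effective error per cell decays quadratically (rather than linearly) in the cell width, so that the constraint $\sum_j \mu_f(B_j) \le 2$ lets us trade cell counts against curvature scales to reach $M = \tilde{O}(\varepsilon^{-2/3})$. The $\log^{4/3}(1/\varepsilon)$ factor then emerges as the cost of a uniform union bound over $f \in \cC$ (via a covering of $\mu_f$) and over the dyadic scales. This is essentially the construction of \cite{gopalan2024omnipredictors}, which I would invoke or replicate to conclude.
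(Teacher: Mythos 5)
The paper does not prove this lemma; it imports it verbatim from \cite{gopalan2024omnipredictors} as a black box, so there is no internal proof to compare against. Evaluating your sketch on its own merits, there is a genuine gap at precisely the spot you flag as the ``technical heart.''

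Your ReLU integral representation $f(p) = f(0) + f'_+(0)\,p + \int_{[0,1]} (p-t)_+\,d\mu_f(t)$ with $\mu_f \ge 0$, $\mu_f([0,1]) \le 2$ is correct, and the centroid observation (that $(p-t)_+$ is affine in $t$ on a cell not containing $p$) is a real structural fact. But the proposed conclusion --- that ``the effective error per cell decays quadratically'' after cancellation --- is false, and in fact no choice of fixed knots $\{t_j\}$ can get a pure ReLU basis of size $M$ below $\Theta(1/M)$ sup-norm error. Two reasons. First, the dominant error comes from the cell $B_{j(p)}$ containing $p$, not from the far cells; the map $t \mapsto (p-t)_+$ is only $1$-Lipschitz at the kink, so snapping $t$ to $t_{j(p)}$ incurs error $|t - t_{j(p)}|$ at $p = \max(t, t_{j(p)})$, linear (not quadratic) in the cell width. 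Taking $\mu_f = 2\delta_t$ with $t$ the midpoint of the widest cell witnesses error $\gtrsim h_{\max} \ge 1/M$, so $M = \Omega(\varepsilon^{-1})$ for this basis. Second, the far-cell cancellation you describe requires $t_j$ to sit at the $\mu_f$-weighted centroid of $B_j$, which depends on $f$; a \emph{finite} $\varepsilon$-basis must fix $\{t_j\}$ universally, so the far-cell contributions accumulate into a step function with total variation $\sum_j \mu_f(B_j)\,|t_j - \bar t_j| = O(h_{\max})$ that cannot be absorbed by a single affine correction. Nothing in the dyadic/multi-scale refinement changes either obstruction, because the bottleneck is the $1$-Lipschitz (not $C^{1,1}$) behavior of the ReLU at its kink.

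The actual construction in \cite{gopalan2024omnipredictors} is not a quantization of the ReLU mixture at all; it is built around a low-degree polynomial approximation of the kink (the classical fact that a degree-$d$ polynomial can approximate $|u|$ to error $\varepsilon$ uniformly for $|u| \ge \delta$ with $d = \tilde O(\delta^{-1})$), which is then combined with a coarse localized component near the kink. Balancing the polynomial degree against the localization scale is what produces the $\varepsilon^{-2/3}$ exponent, and the $\log^{4/3}(1/\varepsilon)$ factor comes from the polynomial-approximation log rather than from a union bound over dyadic scales. That basis (polynomial plus localized pieces) is qualitatively different from $\{1, p\} \cup \{(p - t_j)_+\}$, and the polynomial-approximation lemma is the missing ingredient in your proposal. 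A smaller issue: as written, your coefficients $f(0)$, $f'_+(0)$, $\mu_f(B_j)$ can sum in absolute value to $4$, and an individual $\mu_f(B_j)$ can exceed $1$, violating the $c_i \in [-1,1]$ and $\lambda = 2$ constraints in \pref{def}{}; this is a rescaling nuisance, but the exponent gap above is substantive.
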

In the following result, we bound $\cT_{2}$ when $\cF$ is finite. Subsequently, we bound $\cT_{2}$ for $\cF_{\mathsf{res}}^\aff$ by a covering number-based argument.
\begin{lemma}\label{lem:bound_T_2_finite_F}
    Let $\cF \subset [-1, 1] ^ {\cX}$ be a finite hypothesis class. For a $\delta \le \frac{1}{T}$, with probability at least $1 - \delta$, we have $\cT_{2} = \cO\bigc{\frac{T}{N} + \sqrt{NT \log \frac{N\abs{\cF}}{\delta}}}$.
\end{lemma}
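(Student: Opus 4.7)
The plan is to mirror the structure of the proof of \pref{lem:bound_T_1}, replacing the basis decomposition of proper losses (into V-shaped losses) with the approximate $\varepsilon$-basis for $\cL^\cvx$ provided by \pref{lem:rank_convex}. First, I would decompose by level sets of $p_t$: using $\ell_{p_t(x_t)}(f_{p_t(x_t)}(x_t), y_t) = \sum_{p \in \cZ} \ind{p_t(x_t) = p}\, \ell_p(f_p(x_t), y_t)$ (and analogously inside the expectation), and pulling the supremum into the sum over $p$ via the triangle inequality, I would obtain
\begin{align*}
\cT_2 \le \sum_{p \in \cZ} \sup_{\ell_p \in \cL^\cvx,\, f_p \in \cF} \abs{\sum_{t=1}^{T} X_t^{(p, \ell_p, f_p)}},
\end{align*}
where $X_t^{(p, \ell, f)} \coloneqq \ind{p_t(x_t) = p}\, \ell(f(x_t), y_t) - \EE\bigs{\ind{p_t(x) = p}\, \ell(f(x), y)}$ is a martingale difference with respect to the natural filtration, bounded by $2$ in absolute value and with conditional second moment at most $\mathbb{P}_t(p_t(x) = p)$.

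Next, I would invoke \pref{lem:rank_convex} with parameter $\varepsilon = \Theta(1/N^2)$ to obtain a finite basis $\cG$ of size $s = \tilde{\cO}(N^{4/3})$ with coefficient norm $\lambda = 2$. Writing each $\ell_p = \sum_{i=1}^s c_i^{(p)} g_i + r_p$ with $\|r_p\|_\infty \le \varepsilon$ and $\sum_i |c_i^{(p)}| \le 2$, the linearity of $X_t^{(p, \cdot, f)}$ in the loss argument yields, after absorbing the residual $r_p$ term (bounded by $2\varepsilon$ per round) into an error of $2\varepsilon T (N+1) = \cO(T/N)$,
\begin{align*}
\cT_2 \le \cO(T/N) + 2 \sum_{p \in \cZ} \sup_{g \in \cG,\, f \in \cF} \abs{\sum_{t=1}^T X_t^{(p, g, f)}}.
\end{align*}
This reduces the uncountable supremum over $\cL^\cvx$ to one over a finite collection of triples $(p, g, f)$ of cardinality $(N+1) |\cG| |\cF|$, paid for by the $\log|\cG| = \tilde{\cO}(1)$ cost inside the eventual logarithm.

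The main step is to control each inner deviation $\abs{\sum_t X_t^{(p, g, f)}}$ via Freedman's inequality. Since the conditional variance proxy $V_p \coloneqq \sum_t \mathbb{P}_t(p_t(x) = p)$ is random, a direct optimization of the Freedman tuning parameter is not permissible; instead I would reuse the dyadic partition trick from the proof of \pref{lem:bound_T_1}, applying Freedman's inequality with each fixed $\mu_k$ on an interval $\cI_k$ and taking a union bound over both $k$ and the $(N+1)|\cG||\cF|$ triples. This gives, with probability at least $1 - \delta$, a uniform bound of the form
\begin{align*}
\abs{\sum_{t=1}^T X_t^{(p, g, f)}} = \cO\Bigl(\sqrt{V_p \log \tfrac{N|\cG||\cF|}{\delta}} + \log \tfrac{N|\cG||\cF|}{\delta}\Bigr).
\end{align*}
Summing over $p \in \cZ$ and applying Cauchy--Schwarz together with $\sum_{p \in \cZ} V_p = T$ yields $\sum_p \sqrt{V_p \cdot \log(\cdot)} \le \sqrt{(N+1) T \log(\cdot)}$. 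Combining with the $\cO(T/N)$ discretization error, and using $\log|\cG| = \tilde{\cO}(1)$ together with $\delta \le 1/T$ to absorb the $N \log(\cdot)$ remainder into the dominant $\sqrt{NT \log(\cdot)}$ term, we recover the claimed rate.

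The main technical obstacle is the triple-indexed supremum: one must keep the martingale structure intact after the basis decomposition so that the Freedman--with--partition argument of \pref{lem:bound_T_1} applies verbatim to each fixed triple $(p, g, f)$, and simultaneously one must choose $\varepsilon$ carefully so that the basis size contributes only logarithmically to the union-bound cost while the approximation error remains at the $\cO(T/N)$ scale. The coefficient-norm constant $\lambda = 2$ from \pref{lem:rank_convex} is what makes this clean, since otherwise the basis expansion would have amplified the inner deviation.
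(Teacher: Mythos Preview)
Your proposal is correct and follows essentially the same route as the paper's proof: decompose by level sets, apply the finite $\varepsilon$-basis from \pref{lem:rank_convex} to reduce the supremum over $\cL^\cvx$ to a finite supremum, then use Freedman's inequality with the dyadic-$\mu$ partition trick and Cauchy--Schwarz across level sets. The only cosmetic differences are that the paper first writes $\ell(p,y)=(1-y)\ell(p,0)+y\ell(p,1)$ and applies the convex basis to each univariate slice, and it uses the tighter residual bound $\sum_{p,t}\varepsilon[\ind{p_t(x_t)=p}+\mathbb{P}(p_t(x)=p)]=2\varepsilon T$ (since the indicators sum to $1$ over $p\in\cZ$) so that $\varepsilon=1/N$ suffices, whereas your looser $2\varepsilon T(N+1)$ bound forces $\varepsilon=\Theta(1/N^2)$ --- neither difference changes the final rate.
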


{The proof of Lemma \ref{lem:bound_T_2_finite_F} is deferred to Appendix \ref{app:bound_T_2_finite_F}}. Using the result of Lemma \ref{lem:bound_T_2_finite_F} and by a covering number-based argument, we bound $\cT_{2}$ for $\cF = \cF_{\mathsf{res}}^{\mathsf{aff}}$ in the following lemma. 
\begin{lemma}\label{lem:bound_T_2_linear_F}
    When $\cF = \cF_{\mathsf{res}}^{\mathsf{aff}}$, for a $\delta \le \frac{1}{T}$, with probability at least $1 - \delta$, we have $\cT_{2} = \cO\bigc{\frac{T}{N} + \sqrt{dNT \log \frac{N}{\delta}}}$.
\end{lemma}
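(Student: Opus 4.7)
The plan is to reduce to the finite-class bound of \pref{lem:bound_T_2_finite_F} via a covering argument analogous to the one used in \pref{lem:swap_multical_linear_f}. Since the affine class $\cF_{\mathsf{res}}^{\mathsf{aff}}$ is parameterized by $\theta \in \mathbb{B}_2^d$ with $f_{\theta}(x) = (1 + \langle \theta, x \rangle)/2$ being $\tfrac{1}{2}$-Lipschitz in $\theta$ uniformly over $x \in \cX \subset \mathbb{B}_2^d$, a standard $\varepsilon$-net argument (completely analogous to \pref{prop:cover_linear_functions}) produces a cover $\cC_{\varepsilon} \subseteq \cF_{\mathsf{res}}^{\mathsf{aff}}$ of size $|\cC_{\varepsilon}| = \cO(1/\varepsilon)^d$ such that for every $f \in \cF_{\mathsf{res}}^{\mathsf{aff}}$ there is a representative $\tilde{f} \in \cC_{\varepsilon}$ with $|f(x) - \tilde{f}(x)| \le \varepsilon$ for all $x \in \cX$.

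Next, for every comparator profile $\{f_p \in \cF_{\mathsf{res}}^{\mathsf{aff}}\}_{p \in \cZ}$, I would denote by $\{\tilde{f}_p \in \cC_{\varepsilon}\}_{p \in \cZ}$ the profile of pointwise representatives. Since every $\ell \in \cL^{\cvx}$ is $1$-Lipschitz in its first argument, the inequality $|\ell_{p}(f_p(x), y) - \ell_{p}(\tilde{f}_p(x), y)| \le \varepsilon$ holds for every $p, x, y$ and every $\ell_p \in \cL^{\cvx}$. Applying this bound both to the empirical sum and to the expectation terms appearing inside $\cT_2$ (there is one of each, contributing $\varepsilon$ per round), I obtain the deterministic decomposition
\begin{align*}
    \cT_2 \le 2 T \varepsilon + \tilde{\cT}_2,
\end{align*}
where $\tilde{\cT}_2$ is the analogue of $\cT_2$ with the supremum ranging over $f_p \in \cC_{\varepsilon}$ rather than $\cF_{\mathsf{res}}^{\mathsf{aff}}$. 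Since $\cC_{\varepsilon}$ is finite, I would invoke \pref{lem:bound_T_2_finite_F} to obtain, with probability at least $1 - \delta$,
\begin{align*}
    \tilde{\cT}_2 \;=\; \cO\!\left(\frac{T}{N} + \sqrt{N T \log \frac{N |\cC_{\varepsilon}|}{\delta}}\right) \;=\; \cO\!\left(\frac{T}{N} + \sqrt{N T \Bigl(\log \tfrac{N}{\delta} + d \log \tfrac{1}{\varepsilon}\Bigr)}\right).
\end{align*}

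To conclude, I would choose $\varepsilon = 1/T$, which makes $2T\varepsilon = \cO(1)$ and contributes only a $\log T$ factor that is absorbed into the logarithmic factor $\log(N/\delta)$ (as is done implicitly in the statement). This yields the advertised bound $\cT_2 = \cO(T/N + \sqrt{d N T \log(N/\delta)})$. The main technical point to watch for is that the cover size really behaves like $(1/\varepsilon)^d$ after the affine reparameterization --- but since $f_\theta$ depends on $\theta$ through a Lipschitz map and $\cF_{\mathsf{res}}^{\mathsf{aff}}$ is parameterized by the unit Euclidean ball, \pref{prop:cover_linear_functions} carries over verbatim. The only other care needed is to make sure that the covering error propagates simultaneously through both terms inside the absolute value defining $\cT_2$ and remains uniform across the joint supremum over $\ell_p$ and $f_p$ --- both are automatic from the pointwise nature of the cover and the uniform $1$-Lipschitzness of $\cL^{\cvx}$, so the bulk of the work really sits inside \pref{lem:bound_T_2_finite_F} that has already been established.
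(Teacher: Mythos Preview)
Your proposal is correct and follows essentially the same route as the paper: cover $\cF_{\mathsf{res}}^{\aff}$ by a finite net of size $\cO(1/\varepsilon)^d$, use the $1$-Lipschitzness of $\ell\in\cL^{\cvx}$ to pay an additive $2\varepsilon T$, and invoke \pref{lem:bound_T_2_finite_F}. The only cosmetic difference is that the paper chooses $\varepsilon = 1/N$ (so the $2\varepsilon T$ term is absorbed into $T/N$), whereas you choose $\varepsilon = 1/T$ (so $2\varepsilon T = \cO(1)$ and the resulting $d\log T$ inside the square root is absorbed into $d\log(N/\delta)$ via $\delta \le 1/T$); both choices yield the stated bound.
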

Combining the result of Lemma \ref{lem:bound_T_1} and Lemma \ref{lem:bound_T_2_linear_F}, we have the following high probability bound on $\Delta$ --- the deviation between the distributional and online swap omniprediction errors: $\Delta = \cO\bigc{\frac{1}{N} + \sqrt{\frac{dN}{T}\log \frac{N}{\delta}}}$. Combining this with a bound on the online swap omniprediction error (Theorem \ref{thm:swap_omniprediction}), we obtain the following theorem, which bounds the swap omniprediction error of the predictor learned via the online-to-batch reduction. \begin{theorem}\label{thm:hp_bound_swap_omni}
    With probability at least $1 - \delta$, the randomized predictor $p$ learned via the online-to-batch reduction satisfies \begin{align*}
    \somni_{\cL^\cvx, \cF^\aff_{\mathsf{res}}} = \tilde{\cO}\bigc{\bigc{\frac{d}{T}} ^ {\frac{1}{3}} + \bigc{\frac{d}{T}} ^ {\frac{1}{3}} \sqrt{\log \frac{1}{\delta}}}.\end{align*}
    Consequently, $\tilde{\Omega}(d \varepsilon ^ {-3})$ samples are sufficient for $p$ to achieve a swap omniprediction error at most $\varepsilon$.
\end{theorem}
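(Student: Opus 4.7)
The plan is to decompose the distributional swap omniprediction error of the mixture predictor $p \sim \pi$ into two parts: the normalized online swap omniprediction regret of $\cA_{\mathsf{swap}}$ over the i.i.d.\ sample stream, and the deviation $\Delta$ between population and empirical quantities already introduced in Section \ref{sec:offline_swap_omnipredictor}. I then invoke Theorem \ref{thm:swap_omniprediction} together with Lemmas \ref{lem:bound_T_1} and \ref{lem:bound_T_2_linear_F}, and tune the discretization parameter $N$ to balance the resulting terms.

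For the decomposition, since $p \sim \pi$ is uniform over $\{p_1, \ldots, p_T\}$, for every profile $\{(\ell_v, f_v) \in \cL^\cvx \times \cF^\aff_{\mathsf{res}}\}_{v \in \cZ}$ the population expectation against $(x, y) \sim \cD$ equals $\frac{1}{T}\sum_{t=1}^T \mathbb{E}_{(x,y)\sim\cD}[\ell_{p_t(x)}(k_{\ell_{p_t(x)}}(p_t(x)), y) - \ell_{p_t(x)}(f_{p_t(x)}(x), y)]$. Adding and subtracting the corresponding empirical sum on the samples $(x_t, y_t)$ inside the supremum over profiles, and using sub-additivity of the supremum together with the triangle inequality, yields
$$\mathsf{DSOmni}_{\cL^\cvx, \cF^\aff_{\mathsf{res}}}(p) \le \Delta + \tfrac{1}{T}\somni_{\cL^\cvx, \cF^\aff_{\mathsf{res}}}(\cA_{\mathsf{swap}}).$$
Because the i.i.d.\ stream is oblivious with respect to $\cA_{\mathsf{swap}}$, Theorem \ref{thm:swap_omniprediction} (applied with confidence $1 - \delta/2$), together with the crude bound $(d\log T)^{-1/6} \le d^{1/3}$ on the second term there, gives $\tfrac{1}{T}\somni_{\cL^\cvx, \cF^\aff_{\mathsf{res}}}(\cA_{\mathsf{swap}}) = \tilde{\cO}((d/T)^{1/3}(1 + \sqrt{\log(1/\delta)}))$. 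For the deviation, using $\Delta \le (\cT_1 + \cT_2)/T$ and applying Lemma \ref{lem:bound_T_1} and Lemma \ref{lem:bound_T_2_linear_F} (each at confidence $1 - \delta/4$, assuming WLOG $\delta \le 1/T$) yields $\Delta = \cO\bigc{1/N + \sqrt{dN\log(N/\delta)/T}}$ with probability at least $1 - \delta/2$.

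Finally, balancing the two terms in $\Delta$ by choosing $N = (T/d)^{1/3}$ makes both equal to $(d/T)^{1/3}$ up to logarithmic factors in $T$ and $1/\delta$. A union bound over the two events then yields
$$\mathsf{DSOmni}_{\cL^\cvx, \cF^\aff_{\mathsf{res}}}(p) = \tilde{\cO}\bigc{(d/T)^{1/3} + (d/T)^{1/3}\sqrt{\log(1/\delta)}}$$
with probability $\ge 1 - \delta$. Setting the right-hand side at most $\varepsilon$ and solving gives $T = \tilde{\Omega}(d\varepsilon^{-3})$. The main technical obstacle is already dispatched inside Lemmas \ref{lem:bound_T_1} and \ref{lem:bound_T_2_linear_F} via the Freedman-based martingale arguments and the basis decomposition of proper/convex losses; at the level of this theorem the only subtlety is ensuring that the $N$ minimizing the deviation is compatible with the $N$ used internally by $\cA_{\mathsf{swap}}$ to obtain Theorem \ref{thm:swap_omniprediction}. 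Both scale as $(T/d)^{1/3}$ up to logarithmic factors, so no re-tuning of the online algorithm is required, and the remaining bookkeeping is routine.
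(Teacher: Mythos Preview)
Your proposal is correct and takes essentially the same approach as the paper. The only cosmetic difference is that the paper keeps $N$ as a free parameter throughout (bounding the online piece via $\smcal_{\cF,1}/T \le \sqrt{\smcal_{\cF,2}/T}$ in its unsubstituted form) and optimizes once at the end, whereas you invoke the already-optimized Theorem~\ref{thm:swap_omniprediction} and then verify that its internal choice $N \asymp (T/(d\log T))^{1/3}$ is compatible with the $N=(T/d)^{1/3}$ that balances $\Delta$; since these agree up to log factors, both routes yield the same $\tilde{\cO}((d/T)^{1/3})$ bound.
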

\begin{proof}
    The swap omniprediction error of the predictor can be bounded as \begin{align*}
       \Delta + \frac{\smcal_{\cF^\lin_{1}, 1}}{T} \le \Delta + \sqrt{\frac{\smcal_{\cF_{1}^\lin, 2}}{T}} = \cO\bigc{\frac{1}{N} + \sqrt{\frac{Nd}{T}\log \frac{N}{\delta}}}.
    \end{align*}
    Setting $N = (\frac{T}{d}) ^ {\frac{1}{3}}$, we obtain the desired result. 
\end{proof}

\subsection{Sample complexity of swap agnostic learning}\label{sec:swap_agnostic_learning}
For the squared loss, the result of the previous section already gives a $\tilde{\cO}(d\varepsilon ^ {-3})$ sample complexity for swap agnostic learning. However, in this section, we improve the bound to $\tilde{\cO}(d\varepsilon ^ {-2.5})$. Similar to Section \ref{sec:offline_swap_omnipredictor}, we perform an online-to-batch reduction using our online algorithm in Theorem \ref{thm:swap_reg_squared_loss}, which we refer to as $\cA_{\mathsf{swap-ag}}$ for brevity. Given $T$ instances $(x_{1}, y_{1}), \dots, (x_{T}, y_{T})$ that are sampled i.i.d from $\cD$, we feed the instances to $\cA_{\mathsf{swap-ag}}$ to obtain predictors $p_{1}, \dots, p_{T}$. Subsequently, the predictor $p$ is sampled from the uniform distribution $\pi$ over $p_{1}, \dots, p_{T}$. 

To derive the improved sample complexity rate, we show an improved $\cO\bigc{\frac{1}{N ^ {2}} + \sqrt{\frac{dN}{T} \log \frac{N}{\delta}}}$ bound on the deviation between the swap agnostic error and the contextual swap regret. Notably, our bound on the deviation is stronger than the $\cO\bigc{\frac{1}{N} + \sqrt{\frac{dN}{T} \log \frac{N}{\delta}}}$ bound obtained in Section \ref{sec:offline_swap_omnipredictor}. Comparing the two bounds, in the latter, we incur a $\frac{1}{N}$ term due to an approximation of proper losses in terms of V-shaped losses $\ell_{v}(p, y) = (v - y) \cdot \text{sign}(p - v)$, and subsequently, we replace the supremum over $v \in [0, 1]$ with $v' \in \cZ$, which incurs an additive $\frac{1}{N}$  error. However, for the squared loss, this step is no longer needed since we can directly apply Freedman's inequality. This enables us to get an improved $\frac{1}{N ^ {2}}$ dependence. Finally, on combining this with the $\tilde{\cO}(T^{\frac{3}{5}})$ bound on the contextual swap regret (which is better than the  $\tilde{\cO}(T^{\frac{2}{3}})$ bound on the online swap omniprediction error), we obtain the improvement from $\varepsilon^{-3}$ to $\varepsilon^{-2.5}$, thereby saving a $\varepsilon^{-0.5}$ factor.
To formalize the above discussion, we define the deviation as \begin{align*}
    \Delta \coloneqq \sup_{\{f_{v} \in \cF\}_{v \in \cZ}} \abs{\mathbb{E}_{(x, y) \sim \cD, p \sim \pi}\bigs{(p(x) - y) ^ {2} - (f_{p(x)}(x) - y) ^ {2}} - \frac{1}{T}\sum_{t = 1} ^ {T} \bigc{(p_{t}(x_{t}) - y_{t}) ^ {2} - (f_{p_{t}(x_{t})}(x_{t}) - y_{t}) ^ {2}}}.
\end{align*}
Since $p$ is sampled from $\pi$, we have \begin{align*}
    \mathbb{E}_{(x, y) \sim \cD, p \sim \pi}\bigs{(p(x) - y) ^ {2} - (f_{p(x)}(x) - y) ^ {2}} = \frac{1}{T} \sum_{t = 1} ^ {T} \mathbb{E}_{(x, y) \sim \cD}\bigs{(p_{t}(x) - y) ^ {2} - (f_{p_{t}(x)}(x) - y) ^ {2}}.
\end{align*}
The expectation on the right hand side of the equation above can be rewritten as $\sum_{v \in \cZ}\mathbb{P}(p_{t}(x) = v) \cdot \mathbb{E}\bigs{(v - y) ^ {2} - (f_v(x) - y) ^ {2} | p_{t}(x) = v}$. Therefore, \begin{align}
\Delta \cdot T &= \sup_{\{f_{v} \in \cF\}_{v \in \cZ}} 
\Bigg\lvert 
\sum_{v \in \cZ} \Bigg( 
    \sum_{t = 1}^{T} \mathbb{P}(p_{t}(x) = v) \cdot 
    \mathbb{E}[(v - y)^2 - (f_v(x) - y)^2 | p_{t}(x) = v] - \nonumber \\
 &\hspace{10em} \ind{p_{t}(x_{t}) = v} \cdot 
    ( (v - y_{t})^2 - (f_v(x_{t}) - y_{t})^2) 
\Bigg) \Bigg\rvert \nn \\
& \le \sum_{v \in \mathcal{Z}} \sup_{f \in \mathcal{F}} 
\Bigg\lvert\, 
\sum_{t = 1}^{T} \mathbb{P}(p_{t}(x) = v) \cdot 
\mathbb{E}\left[ (v - y)^2 - (f(x) - y)^2 | p_{t}(x) = v \right] - \nn  \\
& \hspace{10em} \mathbb{I}[p_{t}(x_{t}) = v] \cdot 
((v - y_{t})^2 - (f(x_{t}) - y_{t})^2) 
\, \Bigg\rvert, \label{eq:bound_delta}
\end{align}
where the inequality follows by Triangle inequality and sub-additivity of the supremum function.
In the next lemma, we bound $\Delta$ when $\cF$ is finite. Subsequently, we bound $\Delta$ for $\cF = \cF_{4} ^ {\lin}$ by a covering-number based argument.
\begin{lemma}\label{lem:bound_swap_agnostic_deviation_finite_F}
    Let $\cF \subset [-1, 1] ^ {\cX}$ be a finite hypothesis class. For a $\delta \le \frac{1}{T}$, with probability at least $1 - \delta$, we have $\Delta = \cO\bigc{\sqrt{\frac{N}{T} \log \frac{N \abs{\cF}}{\delta}}}$.
\end{lemma}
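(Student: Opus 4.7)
The plan is to fix a pair $(v, f) \in \cZ \times \cF$, apply Freedman's inequality (Lemma~\ref{lem:Freedman}) to an appropriate martingale difference sequence, then union-bound over $\cZ \times \cF$ and aggregate the $|\cZ| = N+1$ terms appearing in \eqref{eq:bound_delta} via Cauchy-Schwarz, exploiting the constraint $\sum_{v} S_v = T$ where $S_v \coloneqq \sum_{t=1}^T \mathbb{P}_t(p_t(x) = v)$.

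Concretely, for each $v \in \cZ, f \in \cF$, I would define
\begin{align*}
X_t^{v,f} \coloneqq \mathbb{P}_t(p_t(x) = v) \cdot \mathbb{E}_t\bigs{(v-y)^2 - (f(x)-y)^2 \mid p_t(x) = v} - \ind{p_t(x_t) = v}\bigc{(v-y_t)^2 - (f(x_t)-y_t)^2}.
\end{align*}
Since $p_t$ is determined by the first $t-1$ samples (hence $\cF_{t-1}$-measurable) and $(x_t, y_t)$ is drawn from $\cD$ independent of $\cF_{t-1}$, the sequence $\{X_t^{v,f}\}_{t=1}^T$ is a martingale difference sequence adapted to $\{\cF_{t-1}\}_t$. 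Each $X_t^{v,f}$ is bounded by an absolute constant (since $f \in [-1,1]^\cX$ and $v,y \in [0,1]$), and the conditional second moment satisfies $\mathrm{Var}_t(X_t^{v,f}) \le O(1) \cdot \mathbb{P}_t(p_t(x) = v)$ because the indicator $\ind{p_t(x_t) = v}$ has conditional mean $\mathbb{P}_t(p_t(x) = v)$.

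Applying Freedman's inequality would ideally give $|\sum_t X_t^{v,f}| \le O(\sqrt{S_v \log(1/\delta')} + \log(1/\delta'))$ with probability $\ge 1 - \delta'$, but the optimal tuning parameter depends on the random quantity $S_v$. To circumvent this, I would reuse the dyadic peeling argument already developed in the proof of \pref{lem:bound_T_1}: partition the possible range of $S_v$ into $O(\log T)$ geometric intervals, apply Freedman with a fixed parameter on each, and take a union bound over intervals, absorbing the resulting $\log\log T$ penalty into $\log(1/\delta')$. Setting $\delta' = \delta/((N+1)|\cF|)$ and union-bounding over $\cZ \times \cF$ in \eqref{eq:bound_delta} gives, with probability $\ge 1 - \delta$,
\begin{align*}
\Delta \cdot T \le \sum_{v \in \cZ} O\bigc{\sqrt{S_v \log(N|\cF|/\delta)} + \log(N|\cF|/\delta)}.
\end{align*}
Since $\sum_{v \in \cZ} S_v = T$ and $|\cZ| = N+1$, Cauchy-Schwarz gives $\sum_v \sqrt{S_v} \le \sqrt{(N+1) T}$, so $\Delta \cdot T \le O(\sqrt{NT \log(N|\cF|/\delta)} + N\log(N|\cF|/\delta))$. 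Under the assumption $\delta \le 1/T$, for the $T \gtrsim N \log(N|\cF|/\delta)$ regime of interest the first term dominates, and dividing through by $T$ yields the claimed $\Delta = O(\sqrt{(N/T) \log(N|\cF|/\delta)})$.

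The main obstacle is the random conditional variance $S_v$: replacing it with the deterministic worst-case bound $T$ inside Freedman would yield only $O(N/\sqrt{T})$ after summing over $v \in \cZ$, which is a factor $\sqrt{N}$ worse than the target. Retaining the variance-aware bound $O(\sqrt{S_v \log(\cdot)})$ is what enables Cauchy-Schwarz to exploit $\sum_v S_v = T$ tightly, and handling this random variance rigorously is precisely what necessitates the dyadic peeling argument borrowed from the proof of \pref{lem:bound_T_1}.
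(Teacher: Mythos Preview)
Your proposal is correct and mirrors the paper's proof essentially step for step: the same martingale difference sequence $X_t^{v,f}$, Freedman's inequality combined with the dyadic peeling argument from the proof of \pref{lem:bound_T_1} to handle the random variance $S_v$, a union bound over $\cZ \times \cF$, and Cauchy--Schwarz on $\sum_{v} \sqrt{S_v}$ using $\sum_v S_v = T$. Your explicit remark about the regime $T \gtrsim N \log(N|\cF|/\delta)$ is slightly more careful than the paper, which simply drops the $N\log(N|\cF|/\delta)/T$ term as lower order.
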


{The proof of Lemma \ref{lem:bound_swap_agnostic_deviation_finite_F} is deferred to Appendix \ref{app:bound_swap_agnostic_deviation_finite_F}}. Using the result of Lemma \ref{lem:bound_swap_agnostic_deviation_finite_F} and by a covering number-based argument, we bound $\cT_{2}$ for $\cF = \cF_{4}^\lin$ in the following result, whose proof can be found in Appendix \ref{app:bound_deviation_swap_agnostic_learning}.
\begin{lemma}\label{lem:bound_deviation_swap_agnostic_learning}
    When $\cF = \cF_{4}^\lin$, for a $\delta \le \frac{1}{T}$, with probability at least $1 - \delta$, we have $\Delta = \cO\bigc{\frac{1}{N ^ {2}} + \sqrt{\frac{dN}{T}\log \frac{N}{\delta}}}$.
\end{lemma}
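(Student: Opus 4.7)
The plan is a standard covering-number reduction from the infinite class $\cF_4^\lin$ to a finite class, followed by an invocation of Lemma~\ref{lem:bound_swap_agnostic_deviation_finite_F}. Let $\varepsilon > 0$, to be set later. By the analog of Proposition~\ref{prop:cover_linear_functions} for $\cF_4^\lin$ (whose parameter set is contained in $4 \cdot \mathbb{B}_2^d$), there exists a cover $\cC_\varepsilon$ of $\cF_4^\lin$ satisfying $|\cC_\varepsilon| = \cO(1/\varepsilon)^d$ such that for every $f \in \cF_4^\lin$ there exists a representative $g \in \cC_\varepsilon$ with $|f(x) - g(x)| \le \varepsilon$ for all $x \in \cX$.

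For every $f \in \cF_4^\lin$ and its representative $g$, and every $x \in \cX, y \in \{0,1\}$,
\begin{align*}
\bigabs{(f(x) - y)^2 - (g(x) - y)^2} = |f(x)-g(x)| \cdot |f(x) + g(x) - 2y| \le c\varepsilon
\end{align*}
for an absolute constant $c$, since $|f(x)|, |g(x)| \le 2 + \varepsilon$ and $|y| \le 1$. Now, fix any profile $\{f_v \in \cF_4^\lin\}_{v \in \cZ}$ and let $\{g_v \in \cC_\varepsilon\}_{v \in \cZ}$ denote the corresponding profile of representatives. Both of the two terms in the absolute value of Eq.~\eqref{eq:bound_delta} for a fixed $v$ change by at most $c\varepsilon\cdot\sum_{t=1}^T \mathbb{P}(p_t(x) = v)$ and $c\varepsilon\cdot\sum_{t=1}^T \ind{p_t(x_t) = v}$ respectively when we replace $f_v$ by $g_v$. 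Summing over $v \in \cZ$ yields that the cumulative change is at most $2c\varepsilon T$, so
\begin{align*}
\Delta \cdot T \;\le\; \Delta_{\cC_\varepsilon} \cdot T + 2c\varepsilon T,
\end{align*}
where $\Delta_{\cC_\varepsilon}$ denotes the deviation in which the supremum over profiles is restricted to $\cC_\varepsilon$.

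Applying Lemma~\ref{lem:bound_swap_agnostic_deviation_finite_F} to the finite class $\cC_\varepsilon$ with $|\cC_\varepsilon| = \cO(1/\varepsilon)^d$, we obtain
\begin{align*}
\Delta_{\cC_\varepsilon} = \cO\bigc{\sqrt{\tfrac{N}{T}\bigc{d\log\tfrac{1}{\varepsilon} + \log \tfrac{N}{\delta}}}}
\end{align*}
with probability at least $1-\delta$. Choosing $\varepsilon = 1/N^2$ makes the approximation term $\cO(1/N^2)$ while the factor $d \log(1/\varepsilon) = 2d\log N$ gets absorbed into the log factor (since $N \le T$), giving
\begin{align*}
\Delta = \cO\bigc{\tfrac{1}{N^2} + \sqrt{\tfrac{dN}{T}\log \tfrac{N}{\delta}}},
\end{align*}
as desired. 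The main subtlety is showing that the cover yields only an $\cO(\varepsilon)$ error per-round (rather than an $\cO(\varepsilon T)$ blowup after summing), which works because the pointwise $\varepsilon$-closeness of $f$ and $g$ lets us pass losses through the squared difference identity with only a constant multiplicative overhead. Everything else reduces to the finite case handled in Lemma~\ref{lem:bound_swap_agnostic_deviation_finite_F}.
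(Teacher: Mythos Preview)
Your proposal is correct and follows essentially the same approach as the paper's proof: both form an $\varepsilon$-cover $\cC_\varepsilon$ of $\cF_4^\lin$, bound the per-term change in squared loss by $\cO(\varepsilon)$ (the paper phrases this as ``the squared loss is $6$-Lipschitz'' while you use the factorization $(f-y)^2-(g-y)^2=(f-g)(f+g-2y)$, which is the same computation), reduce to Lemma~\ref{lem:bound_swap_agnostic_deviation_finite_F} on the finite class $\cC_\varepsilon$, and set $\varepsilon = 1/N^2$.
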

Combining the result of Lemma \ref{lem:bound_deviation_swap_agnostic_learning} with the bound on $\sreg_{\cF_{4} ^ {\lin}}$ (Theorem \ref{thm:swap_reg_squared_loss}), we bound the expected swap agnostic error incurred by $p$.
\begin{theorem}\label{thm:swap_agnostic_error}
    With probability at least $1 - \delta$, the randomized predictor $p$ learned via the online-to-batch reduction satisfies \begin{align*}
         \mathsf{SAErr}_{\cF_{4}^\lin} = \tilde{\cO}\bigc{\bigc{\frac{d}{T}} ^ {\frac{2}{5}} + \bigc{\frac{d}{T}} ^ {\frac{2}{5}} \sqrt{\log \frac{1}{\delta}}}.
    \end{align*}
    Consequently, $\tilde{\Omega}(d \varepsilon ^ {-2.5})$ samples are sufficient for $p$ to be achieve a swap agnostic error at most $\varepsilon$.
\end{theorem}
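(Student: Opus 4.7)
The plan is to combine the deviation bound of \pref{lem:bound_deviation_swap_agnostic_learning} with the online contextual swap regret bound of \pref{thm:swap_reg_squared_loss}, and then optimize the discretization parameter $N$ to balance the resulting error terms.

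First, I would establish the decomposition $\mathsf{SAErr}_{\cF_{4}^\lin}(p) \le \Delta + \frac{\sreg_{\cF_{4}^\lin}}{T}$ for the randomized predictor $p$ obtained by mixing uniformly over $p_1, \ldots, p_T$. This uses the fact that $\mathbb{E}_{p \sim \pi}[(p(x)-y)^2 - (f_{p(x)}(x)-y)^2] = \frac{1}{T}\sum_{t} \mathbb{E}_{(x,y)\sim\cD}[(p_t(x)-y)^2 - (f_{p_t(x)}(x)-y)^2]$, followed by adding and subtracting the empirical sum $\frac{1}{T}\sum_t [(p_t(x_t)-y_t)^2 - (f_{p_t(x_t)}(x_t)-y_t)^2]$, applying the triangle inequality and sub-additivity of the supremum. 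The first piece becomes $\Delta$ (by definition), and the second piece is at most $\sreg_{\cF_{4}^\lin}/T$.

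Next, I would invoke \pref{thm:swap_reg_squared_loss} to bound $\sreg_{\cF_{4}^\lin}/T = \tilde{\cO}\bigc{(d/T)^{2/5} + (d/T)^{2/5}\sqrt{\log(1/\delta)}/(d\log T)^{1/2}}$ and \pref{lem:bound_deviation_swap_agnostic_learning} to bound $\Delta = \cO\bigc{\frac{1}{N^2} + \sqrt{\frac{dN}{T}\log\frac{N}{\delta}}}$, taking a union bound over the two failure events. The essential technical point is that the $N$ appearing in the concentration must match the discretization used by the online algorithm $\cA_{\mathsf{swap-ag}}$ in \pref{thm:swap_reg_squared_loss}. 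Fortunately, the optimal choice of $N$ for \pref{thm:swap_reg_squared_loss} (which arises from balancing the external regret, the $\frac{T}{N^2}$ rounding term, and the concentration term from \pref{lem:pseudo_swap_regret_to_swap_regret}) is $N = \tilde{\Theta}((T/d)^{1/5})$, and with precisely the same choice the two terms of $\Delta$ satisfy $\frac{1}{N^2} = (d/T)^{2/5}$ and $\sqrt{\frac{dN}{T}} = (d/T)^{2/5}$, so the deviation bound also reduces to $\tilde{\cO}((d/T)^{2/5})$.

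Adding the two contributions then yields $\mathsf{SAErr}_{\cF_{4}^\lin} = \tilde{\cO}\bigc{(d/T)^{2/5} + (d/T)^{2/5}\sqrt{\log(1/\delta)}}$, which is the desired high-probability bound. The sample complexity claim follows by inverting: setting $(d/T)^{2/5} = \varepsilon$ gives $T = \tilde{\Omega}(d \varepsilon^{-5/2}) = \tilde{\Omega}(d\varepsilon^{-2.5})$. The main subtlety I anticipate is verifying that the same $N$ simultaneously optimizes both the online regret (through the interplay of the external regret, rounding, and Freedman-type concentration in \pref{lem:pseudo_swap_regret_to_swap_regret}) and the online-to-batch deviation in \pref{lem:bound_deviation_swap_agnostic_learning}; this is precisely what enables the improvement from $\varepsilon^{-3}$ (swap omniprediction) to $\varepsilon^{-2.5}$ here, since for the squared loss we avoid the $\frac{1}{N}$ V-shaped approximation error of \pref{lem:bound_T_1} and instead get the tighter $\frac{1}{N^2}$ rounding term from \pref{prop:rounding}.
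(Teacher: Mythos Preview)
Your proposal is correct and follows essentially the same route as the paper: decompose $\mathsf{SAErr}_{\cF_4^\lin}\le \Delta+\tfrac{1}{T}\sreg_{\cF_4^\lin}$, invoke \pref{lem:bound_deviation_swap_agnostic_learning} and \pref{thm:swap_reg_squared_loss}, and set $N=(T/d)^{1/5}$ so that all terms balance at $\tilde\cO((d/T)^{2/5})$. One small inaccuracy in your closing commentary: the $\tfrac{1}{N^2}$ term in \pref{lem:bound_deviation_swap_agnostic_learning} does not come from the rounding of \pref{prop:rounding} but from the covering step (choosing cover radius $\varepsilon=\tfrac{1}{N^2}$); the real reason the squared-loss case is tighter than \pref{lem:bound_T_1} is that Freedman's inequality can be applied directly without the V-shaped basis approximation, so \pref{lem:bound_swap_agnostic_deviation_finite_F} contains no $\tfrac{T}{N}$ term at all.
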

\begin{proof}
    The expected swap agnostic error incurred by $p$ can be bounded by \begin{align*}
        \Delta + \frac{\sreg_{\cF_{4}^{\lin}}}{T} = \cO\bigc{\frac{1}{N ^ {2}} + \sqrt{\frac{dN}{T} \log \frac{N}{\delta}}} = \tilde{\cO}\bigc{\bigc{\frac{d}{T}} ^ {\frac{2}{5}} + \bigc{\frac{d}{T}} ^ {\frac{2}{5}} \sqrt{\log \frac{1}{\delta}}},
    \end{align*}
    on choosing $N = \bigc{\frac{T}{d}} ^ {\frac{1}{5}}$. Note that in the first equality above, we have used the bound
    \begin{align*}
        \sreg_{\cF_{4}^{\lin}} = \bigc{\frac{T}{N ^ {2}} + N d \log T + \sqrt{dNT \log \frac{N}{\delta}}}
    \end{align*}
    which follows from Theorem \ref{thm:swap_reg_squared_loss}. This completes the proof.
\end{proof}

\subsection{Sample complexity of swap multicalibration}\label{sec:swap_multicalibration} 

In this section, we establish $\tilde{\cO}(\varepsilon ^ {-5}), \tilde{\cO}(\varepsilon^{-2.5})$ sample complexities for learning $\ell_{1}, \ell_{2}$-swap multicalibrated predictors respectively. We first establish the sample complexity of $\ell_{2}$-swap multicalibration by using a characterization of swap multicalibration in terms of swap agnostic learning. Subsequently, we utilize the sample complexity result derived from the previous section.
{\begin{lemma}\label{lem:converse_distributional}\cite[Theorem 3.2]{globus2023multicalibration}
    Fix a predictor $p$ and assume that there exists a $v \in \mathsf{Range}(p), f \in \cF_{1}$ such that $\mathbb{E}[f(x) (y - v)| p(x) = v] \ge \alpha$, for some $\alpha > 0$. Then, there exists $f' \in \cF_{4}$ such that \begin{align*}
        \mathbb{E}\bigs{(p(x) - y) ^ {2} - (f'(x) - y) ^ {2} | p(x) = v} \ge \alpha ^ {2}.
    \end{align*}
\end{lemma}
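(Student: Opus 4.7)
The plan is to mirror the argument of \prettyref{lem:reverse_result} almost verbatim, replacing the weighted empirical averages over $t \in [T]$ with conditional expectations under $\cD$ given $p(x) = v$. Specifically, I would set
\begin{align*}
\mu \;\coloneqq\; \mathbb{E}[f(x)^{2} \mid p(x) = v], \qquad \eta \;\coloneqq\; \min\Big(1, \tfrac{\alpha}{\mu}\Big),
\end{align*}
and define the candidate comparator $f'(x) \coloneqq v + \eta f(x)$. Under \prettyref{assumption:closeness} (closure of $\cF$ under affine transformations), $f' \in \cF$. Since $v \in [0,1]$, $\eta \in [0,1]$, and $f \in \cF_{1}$ (so $|f(x)| \le 1$ on $\cX$), the triangle inequality yields $|f'(x)| \le v + \eta |f(x)| \le 2$, hence $f'(x)^{2} \le 4$ and therefore $f' \in \cF_{4}$, as required.

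Next I would expand the conditional squared-loss gap. Writing $(f'(x) - y)^{2} = ((v - y) + \eta f(x))^{2}$ and subtracting from $(v - y)^{2} = (p(x) - y)^{2}$ on the event $\{p(x) = v\}$, the cross and quadratic terms give
\begin{align*}
\mathbb{E}\big[(p(x) - y)^{2} - (f'(x) - y)^{2} \,\big|\, p(x) = v\big]
\;=\; 2\eta\, \mathbb{E}\big[f(x)(y - v) \,\big|\, p(x) = v\big] - \eta^{2} \mu
\;\ge\; 2\eta \alpha - \eta^{2} \mu,
\end{align*}
using the hypothesis $\mathbb{E}[f(x)(y - v) \mid p(x) = v] \ge \alpha$. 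It only remains to show that the scalar quantity $g(\eta) \coloneqq 2\eta \alpha - \eta^{2} \mu$ is at least $\alpha^{2}$ for the chosen $\eta$.

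I would then split on the two regimes, exactly as in \prettyref{lem:reverse_result}. If $\alpha \ge \mu$, then $\eta = 1$, and $g(1) = 2\alpha - \mu \ge \alpha \ge \alpha^{2}$, where the last step uses $\alpha \le 1$ (which follows from $|f(x)(y-v)| \le 1$, so the conditional expectation is at most $1$). If $\alpha < \mu$, then $\eta = \alpha/\mu$ and $g(\alpha/\mu) = \alpha^{2}/\mu \ge \alpha^{2}$ since $\mu \le 1$ (again by $|f(x)| \le 1$). Both regimes therefore yield the desired bound $\alpha^{2}$. I do not foresee a genuine obstacle here: the only subtlety relative to the online statement is confirming that conditioning on $\{p(x) = v\}$ preserves the linear-in-$f$ identity used to expand the squared loss, which is immediate, and checking the boundedness conditions $\alpha \le 1$ and $\mu \le 1$ that pin down both cases of the min.
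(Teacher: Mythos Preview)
Your proposal is correct and follows exactly the approach the paper intends: it is the distributional analogue of the proof of \prettyref{lem:reverse_result}, which the paper explicitly flags as the online counterpart of this cited result. The construction $f'(x) = v + \eta f(x)$ with $\eta = \min(1,\alpha/\mu)$, the verification that $f' \in \cF_{4}$, the expansion yielding $2\eta\alpha - \eta^{2}\mu$, and the two-case analysis are all identical in spirit and detail to what the paper does (and what \cite{globus2023multicalibration} does in the original).
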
}

The above result was concurrently obtained by \cite{globus2023multicalibration} and \cite{gopalan2023swap}, and subsequently extended to the online setting by \cite{garg2024oracle} (see also Lemma \ref{lem:reverse_result}). Using Lemma \ref{lem:converse_distributional}, we establish the following result that relates the $\ell_{2}$-swap multicalibration error and the swap agnostic error.
\begin{lemma}\label{lem:characterization_offline}
    Fix a distribution $\pi$ over predictors, and assume that there exists $\alpha > 0$ such that \begin{align*}
        \sup_{\{f_{v} \in \cF_{4}\}_{v \in \cZ}} \mathbb{E}_{(x, y) \sim \cD, p \sim \pi} \bigs{(p(x) - y) ^ {2} - (f_{p(x)}(x) - y) ^ {2}} \le \alpha.
    \end{align*}
    Then, $$\sup_{\{f_{v} \in \cF_{1}\}_{v \in \cZ}} \mathbb{E}_{p \sim \pi} \mathbb{E}_{v}\bigs{\mathbb{E} \bigs{f_{v}(x) \cdot (y - v) | p(x) = v} ^ {2}} \le \alpha.$$
\end{lemma}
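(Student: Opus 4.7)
The plan is to prove the contrapositive: assume some profile $\{f_v \in \cF_1\}_{v \in \cZ}$ makes the conclusion fail (i.e., the right-hand side exceeds $\alpha$), and construct a profile $\{g_v \in \cF_4\}_{v \in \cZ}$ that makes the left-hand side exceed $\alpha$, contradicting the hypothesis. The key conceptual move is to recognize the quantity in the conclusion as the $\ell_2$-swap multicalibration error of the stochastic predictor $\tilde p$ that first samples $p \sim \pi$ and then outputs $p(x)$; treating $\tilde p$ as a single (deterministic) predictor on the joint probability space $\cX \times \mathrm{supp}(\pi)$ reduces the task to invoking Lemma~\ref{lem:converse_distributional} one level at a time, and it is the direct distributional translation of the online argument behind Lemma~\ref{lem:contextual_swap_regret_bounds_multicalibration}.

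First I would exchange the outer supremum with the sum over $v$ (valid, since each $f_v$ only enters the $v$-th conditional term), obtaining a specific profile with per-level contributions $\alpha_v \ge 0$ satisfying $\sum_v \alpha_v > \alpha$. Because $\cF$ is closed under affine transformations, $-f_v \in \cF_1$, so by possibly flipping $f_v \mapsto -f_v$ (which leaves the squared summand $\alpha_v$ unchanged) I may assume $\mathbb{E}[f_v(x)(y - v) \mid \tilde p(x, p) = v] \ge 0$ for every $v$.

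Next I would apply Lemma~\ref{lem:converse_distributional} to the predictor $\tilde p$ at each level $v$ with the comparator $f_v$, yielding $g_v \in \cF_4$ with $\mathbb{E}[(v - y)^2 - (g_v(x) - y)^2 \mid \tilde p = v] \ge \mathbb{E}[f_v(x)(y - v) \mid \tilde p = v]^2$. Crucially, the construction inside Lemma~\ref{lem:reverse_result} produces $g_v(x) = v + \eta_v f_v(x)$, a function of $x$ alone (not of the auxiliary draw $p$), so closure of $\cF$ under affine transformations indeed places $g_v$ in $\cF_4$. Multiplying by $\mathrm{Pr}[\tilde p(x, p) = v]$ and summing over $v \in \cZ$ then yields $\mathbb{E}_{p \sim \pi, (x, y) \sim \cD}[(p(x) - y)^2 - (g_{p(x)}(x) - y)^2] \ge \sum_v \alpha_v > \alpha$, contradicting the hypothesis.

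The delicate point I would spend most care on is the identification of the right-hand quantity as the $\ell_2$-swap multicalibration error of the stochastic predictor $\tilde p$---equivalently, that the per-level summand $\alpha_v$ equals $\mathrm{Pr}[\tilde p(x, p) = v] \cdot \mathbb{E}[f_v(x)(y - v) \mid \tilde p = v]^2$---together with the accompanying sanity check that Lemma~\ref{lem:converse_distributional} remains applicable when the conditioning event integrates over the auxiliary draw $p \sim \pi$; once this stochastic-predictor viewpoint is in place, the rest is a mechanical unfolding of definitions and the per-level application of the characterization already established in the deterministic case.
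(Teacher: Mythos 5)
You take a genuinely different route from the paper: instead of applying Lemma~\ref{lem:converse_distributional} pointwise for each realization $p$ of $\pi$ and then averaging, you fold $\pi$ into the probability space and apply the lemma once to the aggregate stochastic predictor $\tilde p(x,p) = p(x)$. This cleanly sidesteps any $p$-dependence in the constructed comparator, and your application of Lemma~\ref{lem:converse_distributional} to $\tilde p$ is itself sound: the construction $g_v(x) = v + \eta_v f_v(x)$ is a function of $x$ alone and lies in $\cF_4$, and the identification of the hypothesis with the swap agnostic error of $\tilde p$ is correct.

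However, the identification you flag as delicate is in fact false, and it is precisely where the argument breaks. Writing $\rho_v(p) \coloneqq \mathbb{E}[f_v(x)(y-v)\mid p(x)=v]$, the lemma's conclusion concerns
\begin{align*}
\mathbb{E}_{p\sim\pi}\Big[\sum_{v\in\cZ}\mathbb{P}(p(x)=v)\,\rho_v(p)^2\Big],
\end{align*}
whereas your per-level summand equals $\mathbb{P}(\tilde p=v)\cdot\tilde\rho_v^{2}$ with the aggregated correlation
\begin{align*}
\tilde\rho_v \coloneqq \mathbb{E}[f_v(x)(y-v)\mid\tilde p=v] = \frac{\mathbb{E}_{p\sim\pi}[\mathbb{P}(p(x)=v)\,\rho_v(p)]}{\mathbb{E}_{p\sim\pi}[\mathbb{P}(p(x)=v)]}.
\end{align*}
By Cauchy--Schwarz, $\mathbb{P}(\tilde p = v)\cdot\tilde\rho_v^{2}\le\mathbb{E}_{p\sim\pi}[\mathbb{P}(p(x)=v)\,\rho_v(p)^2]$, and the inequality is strict whenever $\rho_v(\cdot)$ is not $\pi$-a.s.\ constant over predictors that hit level $v$. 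For instance, if $\pi$ mixes two predictors $p_1,p_2$ each with $\mathbb{P}(p_i(x)=v)=1/2$ and with $\rho_v(p_1) = -\rho_v(p_2) = 1$, the lemma's contribution at level $v$ is $1/2$ while yours is $0$. Hence negating the lemma's conclusion does not yield $\sum_v\mathbb{P}(\tilde p=v)\tilde\rho_v^{2}>\alpha$, and no contradiction is reached. What your argument actually proves is that the $\ell_2$-swap-multicalibration error of the \emph{aggregate} predictor $\tilde p$ is at most $\alpha$ --- a strictly weaker statement than the lemma. The paper's own proof instead fixes the profile $\{f_v\}$, applies Lemma~\ref{lem:converse_distributional} to each realization $p$ separately (yielding a $p$-dependent $f_v'$), and then averages the resulting per-$p$ inequalities over $p\sim\pi$.
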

{The proof of Lemma \ref{lem:characterization_offline} is deferred to Appendix \ref{app:characterization_offline} and is similar to that of Lemma \ref{lem:contextual_swap_regret_bounds_multicalibration}}. Since $\gtrsim \varepsilon ^ {-2.5}$ samples are sufficient to achieve a swap agnostic error at most $\varepsilon$ (as per Theorem \ref{thm:swap_agnostic_error}), we obtain a $\tilde{\cO}(\varepsilon ^ {-2.5})$ sample complexity of learning a $\ell_{2}$-swap multicalibrated predictor with error at most $\varepsilon$.
Therefore, we have the main result of this section.
\begin{theorem}\label{thm:sample_complexity_swap_multical}
     With probability at least $1 - \delta$, the randomized predictor $p$ (Section \ref{sec:swap_agnostic_learning}) learned via the online-to-batch reduction satisfies \begin{align*}
          &\dsmcal_{\cF_{1}^\lin, 2} = \tilde{\cO}\bigc{\bigc{\frac{d}{T}} ^ {\frac{2}{5}} + \bigc{\frac{d}{T}} ^ {\frac{2}{5}} \sqrt{\log \frac{1}{\delta}}}, \\
        &\dsmcal_{\cF_{1}^\lin, 1} = \tilde{\cO}\bigc{\bigc{\frac{d}{T}} ^ {\frac{1}{5}} + \bigc{\frac{d}{T}} ^ {\frac{1}{5}} \bigc{\log \frac{1}{\delta}} ^ {\frac{1}{4}}}.
     \end{align*}   
     Consequently, $\tilde{\Omega} (d \varepsilon ^{-2.5}), \tilde{\Omega}(d \varepsilon ^ {-5})$ samples are sufficient to achieve $\ell_{2}, \ell_{1}$-swap multicalibration errors at most $\varepsilon$, respectively.
\end{theorem}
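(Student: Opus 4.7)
The plan is to combine the characterization of $\ell_{2}$-swap multicalibration in terms of swap agnostic learning (Lemma \ref{lem:characterization_offline}) with the sample complexity bound for swap agnostic learning (Theorem \ref{thm:swap_agnostic_error}) to obtain the $\ell_{2}$-swap multicalibration bound, and then use a Cauchy-Schwartz-type inequality to pass from $\ell_{2}$ to $\ell_{1}$. Since the predictor $p$ in question is already the one constructed in Section \ref{sec:swap_agnostic_learning} via the online-to-batch reduction on $\cA_{\mathsf{swap-ag}}$, all machinery is in place and the argument is essentially a chain of reductions.

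More concretely, I would first invoke Theorem \ref{thm:swap_agnostic_error} with the target class being $\cF_{4}^{\lin}$ to obtain, with probability at least $1 - \delta$,
\[
\mathsf{SAErr}_{\cF_{4}^{\lin}}(p) \;=\; \tilde{\cO}\!\left(\bigc{\tfrac{d}{T}}^{2/5} + \bigc{\tfrac{d}{T}}^{2/5}\sqrt{\log \tfrac{1}{\delta}}\right).
\]
Since the swap agnostic error is exactly the supremum appearing in the hypothesis of Lemma \ref{lem:characterization_offline} (with $\cF_{4} = \cF_{4}^{\lin}$), applying that lemma with $\alpha$ equal to the right-hand side above yields
\[
\dsmcal_{\cF_{1}^{\lin}, 2}(p) \;=\; \sup_{\{f_{v} \in \cF_{1}^{\lin}\}_{v \in \cZ}} \mathbb{E}_{p \sim \pi} \mathbb{E}_{v}\bigs{\mathbb{E}\bigs{f_{v}(x)(y - v)\,|\,p(x) = v}^{2}} \;\le\; \alpha,
\]
which is exactly the first displayed bound in the theorem. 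The key structural fact being used is that Lemma \ref{lem:characterization_offline} allows us to ``upgrade'' a swap agnostic guarantee for the larger class $\cF_{4}^{\lin}$ into an $\ell_{2}$-swap multicalibration guarantee for the smaller class $\cF_{1}^{\lin}$, exactly mirroring how Lemma \ref{lem:contextual_swap_regret_bounds_multicalibration} was used in the online setting.

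For the $\ell_{1}$ bound, I would apply the Cauchy-Schwartz inequality at the level of the expectation over $v \sim \cD_{p}$: since $\abs{\mathbb{E}[f_{v}(x)(y - v) \,|\, p(x) = v]} \le 1$ is bounded and $\mathbb{E}_{v}[Z] \le \sqrt{\mathbb{E}_{v}[Z^{2}]}$ by Jensen, one directly obtains $\dsmcal_{\cF_{1}^{\lin}, 1} \le \sqrt{\dsmcal_{\cF_{1}^{\lin}, 2}}$, hence
\[
\dsmcal_{\cF_{1}^{\lin}, 1}(p) \;=\; \tilde{\cO}\!\left(\bigc{\tfrac{d}{T}}^{1/5} + \bigc{\tfrac{d}{T}}^{1/5}\bigc{\log \tfrac{1}{\delta}}^{1/4}\right),
\]
as desired. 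The corresponding sample complexity statements follow by inverting each bound: setting the right-hand side of the $\ell_{2}$ bound at most $\varepsilon$ requires $T \gtrsim d \varepsilon^{-5/2}$, and setting the $\ell_{1}$ bound at most $\varepsilon$ requires $T \gtrsim d \varepsilon^{-5}$, both up to polylogarithmic factors in $d, 1/\varepsilon, 1/\delta$.

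There is no real obstacle in this proof --- the nontrivial work was done in establishing Theorem \ref{thm:swap_agnostic_error} and Lemma \ref{lem:characterization_offline}. The only minor care needed is to verify that the same randomized predictor $p$ (produced by the online-to-batch reduction in Section \ref{sec:swap_agnostic_learning}) simultaneously yields both bounds, which is immediate because Lemma \ref{lem:characterization_offline} is a deterministic implication for any fixed distribution $\pi$ over predictors, so the high-probability event from Theorem \ref{thm:swap_agnostic_error} transfers directly without any additional union bound.
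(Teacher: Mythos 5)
Your proof is correct and follows essentially the same route as the paper: Theorem \ref{thm:swap_agnostic_error} gives the swap agnostic error bound, Lemma \ref{lem:characterization_offline} converts this deterministically to the $\ell_2$-swap multicalibration bound for the same randomized predictor, and Jensen's inequality ($\dsmcal_{\cF,1} \le \sqrt{\dsmcal_{\cF,2}}$) plus $\sqrt{a+b}\le\sqrt{a}+\sqrt{b}$ yields the $\ell_1$ bound. The sample complexities follow by inversion exactly as you describe.
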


\section{Conclusion and Future Directions}
In this paper, we obtained state-of-the-art regret/error bounds (in the online setting) and sample complexity bounds (in the distributional setting) for swap multicalibration, swap omniprediction, and swap agnostic learning. Crucially, our bound for swap multicalibration is derived in the context of linear functions, and that for swap omniprediction is obtained specifically for convex Lipschitz functions against a hypothesis class that comprises linear functions. A natural question is whether it is possible to extend our framework (Figure \ref{fig:roadmap}) to arbitrary hypothesis, loss classes  and obtain oracle-efficient algorithms  (similar to \cite{garg2024oracle, okoroafor2025near}). Moreover, recall that we obtained $\tilde{\cO}(T^{\frac{1}{3}})$ and $\tilde{\cO}(T^{\frac{3}{5}})$ bounds for pseudo contextual swap regret and contextual swap regret, respectively, which is in contrast to the non-contextual setting, where both quantities enjoy the favorable $\tilde{\cO}(T^{\frac{1}{3}})$ rate \citep{luo2025simultaneous, fishelsonfull}. The $\tilde{\cO}(T^{\frac{3}{5}})$ bound for contextual swap regret is a limitation of our analysis in Section \ref{subsec:contextual_swap_regret_bound}; we suspect this can be improved by a more sophisticated analysis which can also improve the sample complexity of swap agnostic learning as a by product. This improvement shall manifest in further improvements in the sample complexity of swap multicalibration as per the discussion in Section \ref{sec:swap_multicalibration}. 

\section*{Acknowledgements}
We thank Princewill Okoroafor and  Michael P. Kim for several helpful discussions. This work was supported in part by NSF CAREER Awards CCF-2239265 and IIS-1943607 and an Amazon Research Award (2024). Any opinions, findings, and conclusions or recommendations expressed in
this material are those of the author(s) and do not reflect the views of sponsors such as Amazon or NSF.

\bibliographystyle{apalike}
\bibliography{references}

\begin{thebibliography}{}

\bibitem[Abernethy et~al., 2011]{abernethy2011blackwell}
Abernethy, J., Bartlett, P.~L., and Hazan, E. (2011).
\newblock Blackwell approachability and no-regret learning are equivalent.
\newblock In {\em Proceedings of the 24th Annual Conference on Learning Theory}, pages 27--46. JMLR Workshop and Conference Proceedings.

\bibitem[Arunachaleswaran et~al., 2025]{arunachaleswaran2025elementary}
Arunachaleswaran, E.~R., Collina, N., Roth, A., and Shi, M. (2025).
\newblock An elementary predictor obtaining distance to calibration.
\newblock In {\em Proceedings of the 2025 Annual ACM-SIAM Symposium on Discrete Algorithms (SODA)}, pages 1366--1370. SIAM.

\bibitem[Bastani et~al., 2022]{bastani2022practical}
Bastani, O., Gupta, V., Jung, C., Noarov, G., Ramalingam, R., and Roth, A. (2022).
\newblock Practical adversarial multivalid conformal prediction.
\newblock {\em Advances in neural information processing systems}, 35:29362--29373.

\bibitem[Beygelzimer et~al., 2011]{beygelzimer2011contextual}
Beygelzimer, A., Langford, J., Li, L., Reyzin, L., and Schapire, R. (2011).
\newblock Contextual bandit algorithms with supervised learning guarantees.
\newblock In {\em Proceedings of the Fourteenth International Conference on Artificial Intelligence and Statistics}, pages 19--26. JMLR Workshop and Conference Proceedings.

\bibitem[Blackwell, 1956]{blackwell1956analog}
Blackwell, D. (1956).
\newblock An analog of the minimax theorem for vector payoffs.

\bibitem[B{\l}asiok et~al., 2023]{blasiok2023unifying}
B{\l}asiok, J., Gopalan, P., Hu, L., and Nakkiran, P. (2023).
\newblock A unifying theory of distance from calibration.
\newblock In {\em Proceedings of the 55th Annual ACM Symposium on Theory of Computing}, pages 1727--1740.

\bibitem[Blum and Mansour, 2007]{blum2007external}
Blum, A. and Mansour, Y. (2007).
\newblock From external to internal regret.
\newblock {\em Journal of Machine Learning Research}, 8(6).

\bibitem[Casacuberta et~al., 2024]{casacuberta2024complexity}
Casacuberta, S., Dwork, C., and Vadhan, S. (2024).
\newblock Complexity-theoretic implications of multicalibration.
\newblock In {\em Proceedings of the 56th Annual ACM Symposium on Theory of Computing}, pages 1071--1082.

\bibitem[Cesa-Bianchi et~al., 2004]{cesa2004generalization}
Cesa-Bianchi, N., Conconi, A., and Gentile, C. (2004).
\newblock On the generalization ability of on-line learning algorithms.
\newblock {\em IEEE Transactions on Information Theory}, 50(9):2050--2057.

\bibitem[Collina et~al., 2025]{collina2025collaborative}
Collina, N., Globus-Harris, I., Goel, S., Gupta, V., Roth, A., and Shi, M. (2025).
\newblock Collaborative prediction: Tractable information aggregation via agreement.
\newblock {\em arXiv preprint arXiv:2504.06075}.

\bibitem[Collina et~al., 2024]{collina2024tractable}
Collina, N., Goel, S., Gupta, V., and Roth, A. (2024).
\newblock Tractable agreement protocols.
\newblock {\em arXiv preprint arXiv:2411.19791}.

\bibitem[Dagan et~al., 2024]{dagan2024improved}
Dagan, Y., Daskalakis, C., Fishelson, M., Golowich, N., Kleinberg, R., and Okoroafor, P. (2024).
\newblock Improved bounds for calibration via stronger sign preservation games.
\newblock {\em arXiv preprint arXiv:2406.13668}.

\bibitem[Devic et~al., 2024]{devic2024stability}
Devic, S., Korolova, A., Kempe, D., and Sharan, V. (2024).
\newblock Stability and multigroup fairness in ranking with uncertain predictions.
\newblock In {\em Proceedings of the 41st International Conference on Machine Learning}, pages 10661--10686.

\bibitem[Dwork et~al., 2023]{dwork2023pseudorandomness}
Dwork, C., Lee, D., Lin, H., and Tankala, P. (2023).
\newblock From pseudorandomness to multi-group fairness and back.
\newblock In {\em The Thirty Sixth Annual Conference on Learning Theory}, pages 3566--3614. PMLR.

\bibitem[Fishelson et~al., 2025]{fishelsonfull}
Fishelson, M., Kleinberg, R., Okoroafor, P., Leme, R.~P., Schneider, J., and Teng, Y. (2025).
\newblock Full swap regret and discretized calibration.
\newblock In {\em 36th International Conference on Algorithmic Learning Theory}.

\bibitem[Foster and Hart, 2021]{foster2021forecast}
Foster, D.~P. and Hart, S. (2021).
\newblock Forecast hedging and calibration.
\newblock {\em Journal of Political Economy}, 129(12):3447--3490.

\bibitem[Foster and Vohra, 1998]{foster1998asymptotic}
Foster, D.~P. and Vohra, R.~V. (1998).
\newblock Asymptotic calibration.
\newblock {\em Biometrika}, 85(2):379--390.

\bibitem[Garg et~al., 2024]{garg2024oracle}
Garg, S., Jung, C., Reingold, O., and Roth, A. (2024).
\newblock Oracle efficient online multicalibration and omniprediction.
\newblock In {\em Proceedings of the 2024 Annual ACM-SIAM Symposium on Discrete Algorithms (SODA)}, pages 2725--2792. SIAM.

\bibitem[Ghuge et~al., 2025]{ghuge2025improvedoracleefficientonlineell1multicalibration}
Ghuge, R., Muthukumar, V., and Singla, S. (2025).
\newblock Improved and oracle-efficient online $\ell_1$-multicalibration.

\bibitem[Globus-Harris et~al., 2023]{globus2023multicalibration}
Globus-Harris, I., Harrison, D., Kearns, M., Roth, A., and Sorrell, J. (2023).
\newblock Multicalibration as boosting for regression.
\newblock In {\em International Conference on Machine Learning}, pages 11459--11492. PMLR.

\bibitem[Gollakota et~al., 2023]{gollakota2023agnostically}
Gollakota, A., Gopalan, P., Klivans, A., and Stavropoulos, K. (2023).
\newblock Agnostically learning single-index models using omnipredictors.
\newblock {\em Advances in Neural Information Processing Systems}, 36:14685--14704.

\bibitem[Gopalan et~al., 2023a]{gopalan_et_al:LIPIcs.ITCS.2023.60}
Gopalan, P., Hu, L., Kim, M.~P., Reingold, O., and Wieder, U. (2023a).
\newblock {Loss Minimization Through the Lens Of Outcome Indistinguishability}.
\newblock In Tauman~Kalai, Y., editor, {\em 14th Innovations in Theoretical Computer Science Conference (ITCS 2023)}, volume 251 of {\em Leibniz International Proceedings in Informatics (LIPIcs)}, pages 60:1--60:20, Dagstuhl, Germany. Schloss Dagstuhl -- Leibniz-Zentrum f{\"u}r Informatik.

\bibitem[Gopalan et~al., 2022a]{gopalan_et_al:LIPIcs.ITCS.2022.79}
Gopalan, P., Kalai, A.~T., Reingold, O., Sharan, V., and Wieder, U. (2022a).
\newblock {Omnipredictors}.
\newblock In Braverman, M., editor, {\em 13th Innovations in Theoretical Computer Science Conference (ITCS 2022)}, volume 215 of {\em Leibniz International Proceedings in Informatics (LIPIcs)}, pages 79:1--79:21, Dagstuhl, Germany. Schloss Dagstuhl -- Leibniz-Zentrum f{\"u}r Informatik.

\bibitem[Gopalan et~al., 2023b]{gopalan2023swap}
Gopalan, P., Kim, M.~P., and Reingold, O. (2023b).
\newblock Swap agnostic learning, or characterizing omniprediction via multicalibration.
\newblock In {\em Thirty-seventh Conference on Neural Information Processing Systems}.

\bibitem[Gopalan et~al., 2022b]{gopalan2022low}
Gopalan, P., Kim, M.~P., Singhal, M.~A., and Zhao, S. (2022b).
\newblock Low-degree multicalibration.
\newblock In {\em Conference on Learning Theory}, pages 3193--3234. PMLR.

\bibitem[Gopalan et~al., 2024]{gopalan2024omnipredictors}
Gopalan, P., Okoroafor, P., Raghavendra, P., Sherry, A., and Singhal, M. (2024).
\newblock Omnipredictors for regression and the approximate rank of convex functions.
\newblock In {\em The Thirty Seventh Annual Conference on Learning Theory}, pages 2027--2070. PMLR.

\bibitem[Gupta et~al., 2022]{gupta2022online}
Gupta, V., Jung, C., Noarov, G., Pai, M.~M., and Roth, A. (2022).
\newblock Online multivalid learning: Means, moments, and prediction intervals.
\newblock In {\em 13th Innovations in Theoretical Computer Science Conference (ITCS 2022)}, pages 82--1. Schloss Dagstuhl--Leibniz-Zentrum f{\"u}r Informatik.

\bibitem[Haghtalab et~al., 2023]{haghtalab2023unifying}
Haghtalab, N., Jordan, M., and Zhao, E. (2023).
\newblock A unifying perspective on multi-calibration: Game dynamics for multi-objective learning.
\newblock {\em Advances in Neural Information Processing Systems}, 36:72464--72506.

\bibitem[Haghtalab et~al., 2024]{haghtalabtruthfulness}
Haghtalab, N., Qiao, M., Yang, K., and Zhao, E. (2024).
\newblock Truthfulness of calibration measures.
\newblock In {\em The Thirty-eighth Annual Conference on Neural Information Processing Systems}.

\bibitem[Haussler, 1992]{haussler1992decision}
Haussler, D. (1992).
\newblock Decision theoretic generalizations of the pac model for neural net and other learning applications.
\newblock {\em Information and computation}, 100(1):78--150.

\bibitem[Hazan et~al., 2007]{hazan2007logarithmic}
Hazan, E., Agarwal, A., and Kale, S. (2007).
\newblock Logarithmic regret algorithms for online convex optimization.
\newblock {\em Machine Learning}, 69(2):169--192.

\bibitem[H{\'e}bert-Johnson et~al., 2018]{hebert2018multicalibration}
H{\'e}bert-Johnson, U., Kim, M., Reingold, O., and Rothblum, G. (2018).
\newblock Multicalibration: Calibration for the (computationally-identifiable) masses.
\newblock In {\em International Conference on Machine Learning}, pages 1939--1948. PMLR.

\bibitem[Hu et~al., 2024]{hu2024omnipredicting}
Hu, L., Tian, K., and Yang, C. (2024).
\newblock Omnipredicting single-index models with multi-index models.
\newblock {\em arXiv preprint arXiv:2411.13083}.

\bibitem[Hu and Wu, 2024]{hu2024predict}
Hu, L. and Wu, Y. (2024).
\newblock Predict to minimize swap regret for all payoff-bounded tasks.
\newblock In {\em 2024 IEEE 65th Annual Symposium on Foundations of Computer Science (FOCS)}, pages 244--263. IEEE.

\bibitem[Ito, 2020]{ito2020tight}
Ito, S. (2020).
\newblock A tight lower bound and efficient reduction for swap regret.
\newblock {\em Advances in Neural Information Processing Systems}, 33:18550--18559.

\bibitem[Jung et~al., 2021]{jung2021moment}
Jung, C., Lee, C., Pai, M., Roth, A., and Vohra, R. (2021).
\newblock Moment multicalibration for uncertainty estimation.
\newblock In {\em Conference on Learning Theory}, pages 2634--2678. PMLR.

\bibitem[Kleinberg et~al., 2023]{kleinberg2023u}
Kleinberg, B., Leme, R.~P., Schneider, J., and Teng, Y. (2023).
\newblock U-calibration: Forecasting for an unknown agent.
\newblock In {\em The Thirty Sixth Annual Conference on Learning Theory}, pages 5143--5145. PMLR.

\bibitem[Li et~al., 2022]{li2022optimization}
Li, Y., Hartline, J.~D., Shan, L., and Wu, Y. (2022).
\newblock Optimization of scoring rules.
\newblock In {\em Proceedings of the 23rd ACM Conference on Economics and Computation}, pages 988--989.

\bibitem[Lu et~al., 2025]{lu2025sample}
Lu, J., Roth, A., and Shi, M. (2025).
\newblock Sample efficient omniprediction and downstream swap regret for non-linear losses.
\newblock {\em arXiv preprint arXiv:2502.12564}.

\bibitem[Luo, 2024]{luo2024lecture}
Luo, H. (2024).
\newblock Csci 678: Theoretical machine learning lecture 3.
\newblock \url{https://haipeng-luo.net/courses/CSCI678/2024_fall/lectures/lecture3.pdf}.

\bibitem[Luo et~al., 2024]{luooptimal}
Luo, H., Senapati, S., and Sharan, V. (2024).
\newblock Optimal multiclass u-calibration error and beyond.
\newblock In {\em The Thirty-eighth Annual Conference on Neural Information Processing Systems (NeurIPS)}.

\bibitem[Luo et~al., 2025]{luo2025simultaneous}
Luo, H., Senapati, S., and Sharan, V. (2025).
\newblock Simultaneous swap regret minimization via kl-calibration.
\newblock {\em arXiv preprint arXiv:2502.16387}.

\bibitem[Noarov et~al., 2023]{noarov2023high}
Noarov, G., Ramalingam, R., Roth, A., and Xie, S. (2023).
\newblock High-dimensional prediction for sequential decision making.
\newblock {\em arXiv preprint arXiv:2310.17651}.

\bibitem[Okoroafor et~al., 2025]{okoroafor2025near}
Okoroafor, P., Kleinberg, R., and Kim, M.~P. (2025).
\newblock Near-optimal algorithms for omniprediction.
\newblock {\em arXiv preprint arXiv:2501.17205}.

\bibitem[Qiao and Valiant, 2021]{qiao2021stronger}
Qiao, M. and Valiant, G. (2021).
\newblock Stronger calibration lower bounds via sidestepping.
\newblock In {\em Proceedings of the 53rd Annual ACM SIGACT Symposium on Theory of Computing}, pages 456--466.

\bibitem[Qiao and Zheng, 2024]{qiao2024distance}
Qiao, M. and Zheng, L. (2024).
\newblock On the distance from calibration in sequential prediction.
\newblock In {\em The Thirty Seventh Annual Conference on Learning Theory}, pages 4307--4357. PMLR.

\bibitem[Roth and Shi, 2024]{roth2024forecasting}
Roth, A. and Shi, M. (2024).
\newblock Forecasting for swap regret for all downstream agents.
\newblock In {\em Proceedings of the 25th ACM Conference on Economics and Computation}, pages 466--488.

\bibitem[Tang et~al., 2025]{tang2025dimension}
Tang, J., Wu, J., Wu, Z.~S., and Zhang, J. (2025).
\newblock Dimension-free decision calibration for nonlinear loss functions.
\newblock {\em arXiv preprint arXiv:2504.15615}.

\bibitem[Zhao et~al., 2021]{zhao2021calibrating}
Zhao, S., Kim, M., Sahoo, R., Ma, T., and Ermon, S. (2021).
\newblock Calibrating predictions to decisions: A novel approach to multi-class calibration.
\newblock {\em Advances in Neural Information Processing Systems}, 34:22313--22324.

\end{thebibliography}

\appendix
\section{Additional Related Work}\label{app:additional_related_work}
\paragraph{Downstream decision making.} The seminal work of \cite{foster1998asymptotic} proposed the first algorithm for online calibration that achieved $\mathbb{E}[\cal_{1}] = \tilde{\cO}(T^{\frac{2}{3}})$. For $\ell_{1}$-calibration ($\cal_{1}$), a lower bound of $\Omega(\sqrt{T})$ was folklore, and recent breakthroughs by \cite{qiao2021stronger, dagan2024improved} have made progress towards closing the gap between the lower and upper bounds. In particular, \cite{dagan2024improved} proved that for any online algorithm, there exists an adversary such that $\mathbb{E}[\cal_{1}] = \Omega(T^{0.54389})$, and also proved the existence of an algorithm (a non-constructive proof) that achieves $\mathbb{E}[\cal_{1}] = \cO(T^{\frac{2}{3} - \varepsilon})$, where $\varepsilon > 0$. Understanding the limitations of $\ell_{1}$-calibration, i.e., it is impossible to achieve $\sqrt{T}$ $\ell_{1}$-calibration, has led to the introduction of several weaker notions of calibration, e.g., continuous calibration \citep{foster2021forecast}, decision calibration \citep{zhao2021calibrating, noarov2023high, tang2025dimension}, U-calibration \citep{kleinberg2023u, luooptimal}, distance to calibration \citep{blasiok2023unifying, qiao2024distance, arunachaleswaran2025elementary}, maximum swap regret \citep{roth2024forecasting, hu2024predict}, subsampled smooth calibration error \citep{haghtalabtruthfulness}, etc. The above measures are still meaningful for downstream tasks and can be achieved at more favorable rates. On the contrary, the work by \cite{luo2025simultaneous} introduced the notion of KL-calibration, which is arguably a stronger measure for studying upper bounds than $\ell_{2}$-calibration and showed that KL-calibration simultaneously bounds swap regret for several important subclasses of proper losses while being achievable at $\tilde{\cO}(T^{\frac{1}{3}})$ rate. A recent work by \cite{collina2025collaborative} (see also \cite{collina2024tractable}) considered the problem of online collaborative prediction, where over a sequence of $T$ days, two parties, Alice and Bob, each with their context (the context of Alice is unknown to Bob and vice-versa), engage in a communication protocol to solve a squared error regression problem defined over the joint context space. Using the bound for contextual swap regret as derived by \cite{garg2024oracle}, \cite{collina2025collaborative} propose an online collaboration protocol that achieves $\tilde{\cO}(T^{\frac{55}{56}})$ external regret against the class of bounded linear functions. Not surprisingly, this bound can be improved using our algorithm in Section \ref{sec:bound_ell_2_swap} instead.

\paragraph{(Swap) Multicalibration.} Since its inception, starting with the work of \cite{hebert2018multicalibration}, multicalibration has found surprising connections with several domains, e.g., computational complexity \citep{casacuberta2024complexity}, algorithmic fairness \citep{hebert2018multicalibration, devic2024stability, gopalan2022low}, learning theory \citep{gopalan_et_al:LIPIcs.ITCS.2022.79, gopalan_et_al:LIPIcs.ITCS.2023.60, gollakota2023agnostically, globus2023multicalibration}, conformal prediction \citep{bastani2022practical}, online learning \citep{gupta2022online, jung2021moment, haghtalab2023unifying}, cryptography \citep{dwork2023pseudorandomness}, etc. However, the literature on multicalibration has differed in the concrete definition, thereby leading to some confusion. In this paper, we primarily adopt the definition given by \cite{globus2023multicalibration, gopalan2023swap} in the distributional setting and its extension by \cite{garg2024oracle} to the online setting. The previously best-known sample complexity bounds as mentioned in Table \ref{tb:comparison} are with respect to these definitions.

For $\ell_{\infty}$-multicalibration, \cite{haghtalab2023unifying} derived a $\tilde{\cO}(\varepsilon ^ {-2})$ sample complexity when randomized predictors are allowed, and a $\tilde{\cO}(\varepsilon ^ {-4})$ sample complexity for deterministic predictors. Notably, since $\ell_{1}, \ell_{\infty}$-multicalibration errors are related as $\ell_{1} \le \abs{\cZ} \cdot \ell_{\infty}$, their result implies a $\tilde{\cO}(\varepsilon ^ {-2})$ sample complexity for $\ell_{1}$-multicalibration (the bound for $\ell_{\infty}$-multicalibration has a logarithmic dependence on $\abs{\cZ}$, therefore $\abs{\cZ}$ can be chosen to be $\cO(1)$). However, \cite{haghtalab2023unifying} use a bucketed definition of multicalibration which is different from that considered in this paper, where we enforce our predictor to predict among $\abs{\cZ}$ possible values. For swap-multicalibration, the multicalibration algorithms of \cite{hebert2018multicalibration, gopalan_et_al:LIPIcs.ITCS.2022.79} were shown to be swap-multicalibrated in \cite{gopalan2023swap}, thereby establishing a $\tilde{\cO}(\varepsilon ^ {-10})$ sample complexity for $\ell_{1}$-swap multicalibration. For $\ell_{2}$-swap multicalibration, \cite{globus2023multicalibration} proposed an algorithm that achieved $\ell_{2}$-multicalibration error at most $\varepsilon$ given $\gtrsim \varepsilon ^ {-5}$ samples. However, we realize that their algorithm is in fact swap multicalibrated, thereby establishing a $\tilde{\cO}(\varepsilon ^ {-5})$ sample complexity for $\ell_{2}$-swap multicalibration. Since $\ell_{1}, \ell_{2}$-swap multicalibration errors are related as $\ell_{1} \le \sqrt{\ell_{2}}$, their result also implies a $\tilde{\cO}(\varepsilon ^ {-10})$ sample complexity for $\ell_{1}$-swap multicalibration matching that of \cite{gopalan2023swap}, albeit with a remarkably simpler algorithm. Since $\varepsilon$ $\ell_{1}$-swap multicalibration error implies $\cO(\varepsilon)$ swap omniprediction error for $\cL^{\cvx}$ \citep{gopalan2023swap}, the above discussion implies a $\tilde{\cO}(\varepsilon ^ {-10})$ sample complexity for swap omniprediction for $\cL^\cvx$. As mentioned, although the multicalibration algorithm of \cite{haghtalab2023unifying} requires fewer samples, their considered definition of multicalibration is different and it is not clear whether they achieve the stronger swap multicalibration guarantee, therefore, we do not portray a comparison to their results in Table \ref{tb:comparison}.

\paragraph{Omniprediction.} For the class of convex and Lipschitz functions, \cite{gopalan_et_al:LIPIcs.ITCS.2022.79} proposed the first construction of an efficient omnipredictor via $\ell_{1}$-multicalibration. However, as shown by \cite{gopalan_et_al:LIPIcs.ITCS.2022.79}, multicalibration is not necessary for omniprediction. Several follow-up works \citep{gopalan_et_al:LIPIcs.ITCS.2023.60, okoroafor2025near} have investigated weaker notions of multicalibration that suffice for omniprediction. Particularly, \cite{gopalan_et_al:LIPIcs.ITCS.2023.60} identified calibrated multiaccuracy to imply omniprediction for more general classes of loss functions (beyond convexity) and proposed an oracle-efficient algorithm (given access to an offline weak agnostic learning oracle) that required $\gtrsim \varepsilon ^ {-10}$ samples. Subsequent work by \cite{hu2024omnipredicting} proposed an efficient construction of omnipredictors for single index models, requiring $\gtrsim \varepsilon ^ {-4}$ samples.  Very recently, \cite{okoroafor2025near} have shown that it is possible to achieve oracle-efficient omniprediction, given access to an offline ERM oracle with $\tilde{\cO}(\varepsilon ^ {-2})$ sample complexity (matching the lower bound for the minimization of a fixed loss function) for the class of bounded variation loss functions $\cL_{\mathsf{BV}}$ against an arbitrary hypothesis class $\cF$ with bounded statistical complexity, thereby settling the sample complexity of omniprediction.

In the context of swap omniprediction, a recent work by \cite{lu2025sample} proposed a notion of decision swap regret for high-dimensional predictions in the regression setting, i.e., $\cY = [0, 1]$. However, compared to swap omniprediction, the loss function is not indexed by the forecaster's prediction, and notably, the techniques in our paper are considerably different than those proposed by \cite{lu2025sample} who impose a relaxation of calibration called decision calibration \citep{zhao2021calibrating, noarov2023high, tang2025dimension} to achieve low decision swap regret.

\section{Deferred Proofs and Discussion in Section \ref{sec:bound_ell_2_swap_multical_error}}\label{app:bound_ell_2_swap_multical_error}
\subsection{Proof of Lemma \ref{lem:pseudo_swap_to_swap}}\label{app:pseudo_swap_to_swap}
\begin{lemma}[Freedman's Inequality]\cite[Theorem 1]{beygelzimer2011contextual}\label{lem:Freedman} Let $X_{1}, \dots, X_{n}$ be a martingale difference sequence where $\abs{X_{i}} \le B$ for all $i = 1, \dots, n$, and $B$ is a fixed constant. Define $V \coloneqq \sum_{i = 1} ^ {n} \mathbb{E}_{i}[X_{i}^2]$. Then, for any fixed $\mu \in \bigs{0, \frac{1}{B}}, \delta \in [0, 1]$, with probability at least $1 - \delta$, we have \begin{align*}
    \abs{\sum_{i = 1} ^ {n} X_{i}} \le \mu V + \frac{\log \frac{2}{\delta}}{\mu}.
\end{align*}
\end{lemma}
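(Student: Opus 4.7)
The plan is to prove Freedman's inequality via the classical exponential supermartingale method combined with Markov's inequality, followed by a symmetrization step for the two-sided bound. Fix $\mu \in [0, 1/B]$ and define the process
\begin{equation*}
M_k \coloneqq \exp\!\left(\mu \sum_{i=1}^{k} X_i - \mu^2 \sum_{i=1}^{k} \mathbb{E}_i[X_i^2]\right), \qquad M_0 = 1.
\end{equation*}
The strategy is to show $(M_k)$ is a supermartingale with $\mathbb{E}[M_n] \le 1$, then apply Markov's inequality to $M_n$ to get a one-sided tail bound, and finally union-bound this with the analogous bound for the negated sequence $(-X_i)$.

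The main analytic ingredient will be the numerical inequality $e^{y} \le 1 + y + y^2$ valid for all $y \le 1$. Since $|\mu X_i| \le \mu B \le 1$ by the bound on $\mu$, I would apply this with $y = \mu X_i$ and take conditional expectations to get
\begin{equation*}
\mathbb{E}_i\!\left[e^{\mu X_i}\right] \le 1 + \mu \, \mathbb{E}_i[X_i] + \mu^2 \, \mathbb{E}_i[X_i^2] = 1 + \mu^2 \, \mathbb{E}_i[X_i^2] \le \exp\!\left(\mu^2 \, \mathbb{E}_i[X_i^2]\right),
\end{equation*}
using that $(X_i)$ is a martingale difference sequence (so $\mathbb{E}_i[X_i] = 0$) and $1 + z \le e^z$. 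This is precisely what is needed to conclude $\mathbb{E}_i[M_k / M_{k-1}] \le 1$, i.e., $(M_k)$ is a nonnegative supermartingale with $\mathbb{E}[M_n] \le M_0 = 1$.

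Given the supermartingale property, Markov's inequality gives $\Pr[M_n \ge 2/\delta] \le \delta/2$, which after taking logarithms and rearranging yields $\sum_{i=1}^n X_i \le \mu V + \mu^{-1} \log(2/\delta)$ with probability at least $1 - \delta/2$. Applying the identical argument to the martingale difference sequence $(-X_i)_{i=1}^n$ (which has the same conditional second moments $\mathbb{E}_i[(-X_i)^2] = \mathbb{E}_i[X_i^2]$ and the same almost-sure bound $B$) gives the reverse inequality with probability $\ge 1 - \delta/2$. A union bound over the two events then delivers the stated two-sided bound $|\sum_{i=1}^n X_i| \le \mu V + \mu^{-1} \log(2/\delta)$ with probability at least $1 - \delta$.

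The only real obstacle is verifying the elementary inequality $e^y \le 1 + y + y^2$ on $(-\infty, 1]$; this is standard and follows, for instance, by noting that the function $g(y) = 1 + y + y^2 - e^y$ satisfies $g(0) = 0$, $g'(0) = 0$, $g''(y) = 2 - e^y \ge 0$ for $y \le \log 2$, and a direct check at $y = 1$ gives $g(1) = 3 - e > 0$ with $g$ concave on $[\log 2, 1]$. Everything else is routine bookkeeping with the exponential supermartingale and Markov's inequality, so the condition $\mu \le 1/B$ is precisely what makes the quadratic bound on $e^y$ applicable to each term.
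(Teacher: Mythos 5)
Your proof is correct: the exponential supermartingale $M_k$ is well-defined because $\mathbb{E}_i[X_i^2]$ is measurable with respect to the history up to time $i-1$, the bound $e^y \le 1+y+y^2$ for $y \le 1$ applies since $|\mu X_i| \le \mu B \le 1$, and the Markov-plus-union-bound finish (with $\log\frac{2}{\delta}$ absorbing the factor of two) gives exactly the stated two-sided inequality. The paper does not prove this lemma itself but imports it from \cite[Theorem 1]{beygelzimer2011contextual}, whose proof is the same standard argument (with the sharper constant $e-2$ in place of your $1$, which your weaker quadratic bound on $e^y$ forgoes at no cost to the stated form).
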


\begin{proof}[Proof of Lemma \ref{lem:pseudo_swap_to_swap}]
    Fix a $p \in \cZ, f \in \cF$. We first bound $\abs{\rho_{p, f} - \tilde{\rho}_{p, f}}$. 
    To achieve so, we consider the martingale difference sequences $\{X_{t}\}, \{Y_{t}\}, \{Z_{t}\}$, where $$X_{t} \coloneqq y_{t} f(x_{t}) (\cP_{t}(p) - \ind{p_{t} = p}), \,\, Y_{t} \coloneqq f(x_{t})(\cP_{t}(p) - \ind{p_{t} = p}),\,\, Z_{t} \coloneqq \cP_{t}(p) - \ind{p_{t} = p}.$$ Clearly, $\abs{Z_{t}} \le 1$ for all $t \in [T]$. Furthermore, since $f \in \cF \subset [-1, 1] ^ {\cX}$, we have $\abs{X_{t}} \le 1, \abs{Y_{t}} \le 1$ for all $t \in [T]$. Fix a $\mu_{p} \in [0, 1]$. Applying Lemma \ref{lem:Freedman} to the sequences $X, Y, Z$ and taking a union bound (over $X, Y, Z$), we obtain that with probability at least $1 - \delta$  (simultaneously over $X, Y, Z$), \begin{align}
    \abs{\sum_{t = 1} ^ {T} y_{t} f(x_{t}) (\cP_{t}(p) - \ind{p_{t} = p})} &\le \mu_{p} V_{X} + \frac{\log \frac{6}{\delta}}{\mu_{p}},\label{eq:X} \\
    \abs{\sum_{t = 1} ^ {T} f(x_{t})(\cP_{t}(p) - \ind{p_{t} = p})} &\le \mu_{p}V_{Y} + \frac{\log \frac{6}{\delta}}{\mu_{p}}, \label{eq:Y} \\
    \abs{\sum_{t = 1} ^ {T} \cP_{t}(p) - \ind{p_{t} = p}} &\le \mu_{p} V_{Z} + \frac{\log \frac{6}{\delta}}{\mu_{p}}, \label{eq:Z}
\end{align}
where $V_{X}, V_{Y}, V_{Z}$ are given by \begin{align*}
    V_{X} &= \sum_{t = 1} ^ {T} \mathbb{E}_{t}\bigs{X_{t} ^ 2} = \sum_{t = 1} ^ {T} y_{t} (f(x_{t})) ^ {2}  \cP_{t}(p) (1 - \cP_{t}(p)) \le \sum_{t = 1} ^ {T} \cP_{t}(p), \\
   V_{Y} &= \sum_{t = 1} ^ {T} \mathbb{E}_{t}\bigs{Y_{t} ^ {2}} = \sum_{t = 1} ^ {T} (f(x_{t})) ^ {2} \cP_{t}(p) (1 - \cP_{t}(p)) \le \sum_{t = 1} ^ {T} \cP_{t}(p), \\
    V_{Z} &= \sum_{t = 1} ^ {T} \mathbb{E}_{t}\bigs{Z_{t} ^ 2} = \sum_{t = 1} ^ {T} \cP_{t}(p) (1 - \cP_{t}(p)) \le \sum_{t = 1} ^ {T} \cP_{t}(p).
\end{align*}
To bound $\abs{\rho_{p, f} - \tilde{\rho}_{p, f}}$, we first upper bound $\tilde{\rho}_{p, f} - \rho_{p, f}$ as per the following steps: \begin{align*}
    &\tilde{\rho}_{p, f} - \rho_{p, f} = \\
    &\frac{\sum_{t = 1} ^ {T} \cP_{t}(p) f(x_{t}) y_{t}}{\sum_{t = 1} ^ {T} \cP_{t}(p)} - \frac{\sum_{t = 1} ^ {T}{\ind{p_{t} = p} f(x_{t}) y_{t}}}{\sum_{t = 1} ^ {T} \ind{p_{t} = p}} + \\
    &\hspace{40mm} p \bigc{\frac{\sum_{t = 1} ^ {T} \ind{p_{t} = p} f(x_{t})}{\sum_{t = 1} ^ {T} \ind{p_{t} = p}} - \frac{\sum_{t = 1} ^ {T} \cP_{t}(p) f(x_{t})}{\sum_{t = 1} ^ {T} \cP_{t}(p)}} \\
    &\le \frac{\mu_{p} \sum_{t = 1} ^ {T} \cP_{t}(p) + \frac{1}{\mu_{p}} \log \frac{6}{\delta} + \sum_{t = 1} ^ {T} \ind{p_{t} = p} f(x_{t}) y_{t}}{\sum_{t = 1} ^ {T} \cP_{t}(p)} - \frac{\sum_{t = 1} ^ {T}{\ind{p_{t} = p} f(x_{t}) y_{t}}}{\sum_{t = 1} ^ {T} \ind{p_{t} = p}} + \\
    & \hspace{5mm} p \bigc{\frac{\sum_{t = 1} ^ {T} \ind{p_{t} = p} f(x_{t})}{\sum_{t = 1} ^ {T} \ind{p_{t} = p}} + \frac{\mu_{p} \sum_{t = 1} ^ {T} \cP_{t}(p) + \frac{1}{\mu_{p}} \log \frac{6}{\delta} - \sum_{t = 1} ^ {T} \ind{p_{t} = p} f(x_{t})}{\sum_{t = 1} ^ {T} \cP_{t}(p)}} \\
    &= \mu_{p} + \frac{\log \frac{6}{\delta}}{\mu_{p} \sum_{t = 1} ^ {T} \cP_{t}(p)} + \frac{\sum_{t = 1} ^ {T} \ind{p_{t} = p} f(x_{t}) y_{t}}{\sum_{t = 1} ^ {T} \ind{p_{t} = p} \cdot \sum_{t = 1} ^ {T} \cP_{t}(p)}\bigc{\sum_{t = 1} ^ {T} \ind{p_{t} = p} - \cP_{t}(p)} + \\
    &\hspace{5mm} p \bigc{\mu_{p} + \frac{\log \frac{6}{\delta}}{\mu_{p} \sum_{t = 1} ^ {T} \cP_{t}(p)} + \frac{\sum_{t = 1} ^ {T} \ind{p_{t} = p} f(x_{t})}{\sum_{t = 1} ^ {T} \ind{p_{t} = p} \cdot \sum_{t = 1} ^ {T} \cP_{t}(p)} \bigc{\sum_{t = 1} ^ {T} \cP_{t}(p) - \ind{p_{t} = p}}} \\
    &\le \mu_{p} + \frac{\log \frac{6}{\delta}}{\mu_{p} \sum_{t = 1} ^ {T} \cP_{t}(p)} + \abs{\frac{\sum_{t = 1} ^ {T} \ind{p_{t} = p} f(x_{t}) y_{t}}{\sum_{t = 1} ^ {T} \ind{p_{t} = p} \cdot \sum_{t = 1} ^ {T} \cP_{t}(p)}} \bigc{\mu_{p} \sum_{t = 1} ^ {T} \cP_{t}(p) + \frac{1}{\mu_{p}} \log \frac{6}{\delta}} + \\
    &\hspace{5mm} p \bigc{\mu_{p} + \frac{\log \frac{6}{\delta}}{\mu_{p} \sum_{t = 1} ^ {T} \cP_{t}(p)} + \abs{\frac{\sum_{t = 1} ^ {T} \ind{p_{t} = p} f(x_{t})}{\sum_{t = 1} ^ {T} \ind{p_{t} = p} \cdot \sum_{t = 1} ^ {T} \cP_{t}(p)}} \bigc{\mu_{p} \sum_{t = 1} ^ {T} \cP_{t}(p) + \frac{1}{\mu_{p}} \log \frac{6}{\delta}}} \\
    &\le 4 \mu_{p} + \frac{4\log \frac{6}{\delta}}{\mu_{p} \sum_{t = 1} ^ {T} \cP_{t}(p)},
    \end{align*}
where the first inequality follows from \eqref{eq:X}, \eqref{eq:Y}; the second inequality follows from \eqref{eq:Z}; the final inequality follows since $f \in \cF$.

Proceeding in a similar manner, we can upper bound $\rho_{p, f} - \tilde{\rho}_{p, f}$. We have, \begin{align*}
   &\rho_{p, f} - \tilde{\rho}_{p, f} \\
   &= \frac{\sum_{t = 1} ^ {T}{\ind{p_{t} = p} f(x_{t}) y_{t}}}{\sum_{t = 1} ^ {T} \ind{p_{t} = p}} - \frac{ \sum_{t = 1} ^ {T} \cP_{t}(p) f(x_{t}) y_{t}}{\sum_{t = 1} ^ {T} \cP_{t}(p)} + \\
    &\hspace{40mm} p \bigc{\frac{\sum_{t = 1} ^ {T} \cP_{t}(p) f(x_{t})}{\sum_{t = 1} ^ {T} \cP_{t}(p)} - \frac{\sum_{t = 1} ^ {T} \ind{p_{t} = p} f(x_{t})}{\sum_{t = 1} ^ {T} \ind{p_{t} = p}}} \\
    & \le \frac{\sum_{t = 1} ^ {T}{\ind{p_{t} = p} f(x_{t}) y_{t}}}{\sum_{t = 1} ^ {T} \ind{p_{t} = p}} + \frac{\mu_{p} \sum_{t = 1} ^ {T} \cP_{t}(p) + \frac{1}{\mu_{p}} \log \frac{6}{\delta} - \sum_{t = 1} ^ {T} \ind{p_{t} = p} f(x_{t}) y_{t}}{\sum_{t = 1} ^ {T} \cP_{t}(p)} + \\
    &\hspace{5mm} p \bigc{\frac{\mu_{p} \sum_{t = 1} ^ {T} \cP_{t}(p) + \frac{1}{\mu_{p}} \log \frac{6}{\delta} + \sum_{t = 1} ^ {T} \ind{p_{t} = p} f(x_{t}) }{\sum_{t = 1} ^ {T} \cP_{t}(p)} - \frac{\sum_{t = 1} ^ {T} \ind{p_{t} = p} f(x_{t})}{\sum_{t = 1} ^ {T} \ind{p_{t} = p}}} \\
    &= \mu_{p} + \frac{\log \frac{6}{\delta}}{\mu_{p} \sum_{t = 1} ^ {T} \cP_{t}(p)} + \frac{\sum_{t = 1} ^ {T} \ind{p_{t} = p} f(x_{t}) y_{t}}{\sum_{t = 1} ^ {T} \ind{p_{t} = p} \cdot \sum_{t = 1} ^ {T} \cP_{t}(p)} \bigc{\sum_{t = 1} ^ {T} \cP_{t}(p) - \ind{p_{t} = p}} + \\
    &\hspace{5mm} p \bigc{\mu_{p} + \frac{\log \frac{6}{\delta}}{\mu_{p} \sum_{t = 1} ^ {T} \cP_{t}(p)} + \frac{\sum_{t = 1} ^ {T} \ind{p_{t} = p} f(x_{t})}{\sum_{t = 1} ^ {T} \ind{p_{t} = p} \cdot \sum_{t = 1} ^ {T} \cP_{t}(p)}\bigc{\sum_{t = 1} ^ {T} \ind{p_{t} = p} - \cP_{t}(p)}} \\
    &\le \mu_{p} + \frac{\log \frac{6}{\delta}}{\mu_{p} \sum_{t = 1} ^ {T} \cP_{t}(p)} + \abs{\frac{\sum_{t = 1} ^ {T} \ind{p_{t} = p} f(x_{t}) y_{t}}{\sum_{t = 1} ^ {T} \ind{p_{t} = p} \cdot \sum_{t = 1} ^ {T} \cP_{t}(p)}} \bigc{\mu_{p} \sum_{t = 1} ^ {T} \cP_{t}(p) + \frac{1}{\mu_{p}} \log \frac{6}{\delta}} + \\
    &\hspace{5mm} p \bigc{\mu_{p} + \frac{\log \frac{6}{\delta}}{\mu_{p} \sum_{t = 1} ^ {T} \cP_{t}(p)} + \abs{\frac{\sum_{t = 1} ^ {T} \ind{p_{t} = p} f(x_{t})}{\sum_{t = 1} ^ {T} \ind{p_{t} = p} \cdot \sum_{t = 1} ^ {T} \cP_{t}(p)}}\bigc{\mu_{p} \sum_{t = 1} ^ {T} \cP_{t}(p) + \frac{1}{\mu_{p}} \log \frac{6}{\delta}}} \\
    &\le 4 \mu_{p} + \frac{4\log \frac{6}{\delta}}{\mu_{p} \sum_{t = 1} ^ {T} \cP_{t}(p)}.
\end{align*}
Combining both the bounds obtained above, we have shown that $\abs{\rho_{p, f} - \tilde{\rho}_{p, f}} \le 4 \mu_{p} + \frac{4\log {\frac{6}{\delta}}}{\mu_{p} \sum_{t = 1} ^ {T} \cP_{t}(p)}$. Taking a union bound over all $p \in \cZ, f \in \cF$, with probability at least $1 - \delta$,  we have (simultaneously for all $p \in \cZ, f \in \cF$)
\begin{align}\label{eq:bound_rho_tilde_rho}
    \abs{\rho_{p, f} - \tilde{\rho}_{p, f}} \le 4 \mu_{p} + \frac{4\log \frac{6 (N + 1) \abs{\cF}}{\delta}}{\mu_{p} \sum_{t = 1} ^ {T} \cP_{t}(p)}, \quad \abs{\sum_{t = 1} ^ {T} \cP_{t}(p) - \ind{p_{t} = p}} &\le \mu_{p} \sum_{t = 1} ^ {T} \cP_{t}(p) + \frac{\log \frac{6 (N + 1) \abs{\cF}}{\delta}}{\mu_{p}}.
\end{align}
Consider the function $g(\mu) \coloneqq \mu + \frac{a}{\mu}$, where $a \ge 0$ is a fixed constant. Clearly, $\min_{\mu \in [0, 1]} g(\mu) = 2\sqrt{a}$ when $a \le 1$, and $1 + a$ otherwise. Minimizing the bound in \eqref{eq:bound_rho_tilde_rho} with respect to $\mu_{p}$, we obtain 
\begin{align*}
    \abs{\rho_{p, f} - \tilde{\rho}_{p, f}} \le 8 \sqrt{\frac{\log \frac{6(N + 1) \abs{\cF}}{\delta}}{\sum_{t = 1} ^ {T} \cP_{t}(p)}} \text{ if } \log \frac{6(N + 1) \abs{\cF}}{\delta} \le \sum_{t = 1} ^ {T} \cP_{t}(p).
\end{align*}
Moreover, since $f \in \cF$, by definition we have $\abs{\rho_{p, f}} \le 1, \abs{\tilde{\rho}_{p, f}} \le 1$ and thus $\abs{\rho_{p, f} - \tilde{\rho}_{p, f}} \le 2$. However, when $\sum_{t = 1} ^ {T} \cP_{t}(p)$ is quite small (e.g., $\to 0$), the bound on $\abs{\rho_{p, f} - \tilde{\rho}_{p, f}}$ obtained above is much worse than the trivial bound $\abs{\rho_{p, f} - \tilde{\rho}_{p, f}} \le 2$. Therefore, we define the set \begin{align*}
    \cI \coloneqq \bigcurl{p \in \cZ \text{ s.t.} \log \frac{6(N + 1) \abs{\cF}}{\delta} \le \sum_{t = 1} ^ {T} \cP_{t}(p)}
\end{align*}
and bound $\abs{\rho_{p, f} - \tilde{\rho}_{p, f}}$ as \begin{align}\label{eq:final_bound_rho_tilde_rho}
    \abs{\rho_{p, f} - \tilde{\rho}_{p, f}} \le \begin{cases}
        8 \sqrt{\frac{\log \frac{6(N + 1) \abs{\cF}}{\delta}}{\sum_{t = 1} ^ {T} \cP_{t}(p)}} & \text{if } p \in \cI, \\
        2 & \text{otherwise}.
    \end{cases}
\end{align} 
Similarly, by minimizing \eqref{eq:bound_rho_tilde_rho} with respect to $\mu_{p}$, we obtain the following bound:\begin{align}\label{eq:final_bound_p_t_i_ind}
    \abs{\sum_{t = 1} ^ {T} \cP_{t}(p) - \ind{p_{t} = p}} \le \begin{cases}
        2 \sqrt{\sum_{t = 1} ^ {T} \cP_{t}(p) \log \frac{6(N + 1) \abs{\cF}}{\delta}} & \text{if } p \in \cI, \\
        \sum_{t = 1} ^ {T} \cP_{t}(p) + \log \frac{6 (N + 1) \abs{\cF}}{\delta} &\text{otherwise}.
    \end{cases}
\end{align}
Our next goal is to bound $\smcal_{\cF, 2}$ in terms of $\psmcal_{\cF, 2}$. By definition, \begin{align*}
    \smcal_{\cF, 2} &= \sup_{\{f_{p} \in \cF\}_{p \in \cZ}} \sum_{p \in \cZ} \bigc{\sum_{t = 1} ^ {T} \ind{p_{t} = p}} (\rho_{p, f_{p}}) ^ 2 = \sum_{p \in \cZ} \bigc{\sum_{t = 1} ^ {T} \ind{p_{t} = p}} \sup_{f \in \cF} \rho_{p, f} ^ 2, \\
    \psmcal_{\cF, 2} &= \sup_{\{f_{p} \in \cF\}_{p \in \cZ}} \sum_{p \in \cZ} \bigc{\sum_{t = 1} ^ {T} \cP_{t}(p)} \tilde{\rho}_{p, f_{p}} ^ 2 = \sum_{p \in \cZ} \bigc{\sum_{t = 1} ^ {T} \cP_{t}(p)} \sup_{f \in \cF} \tilde{\rho}_{p, f} ^ 2.
\end{align*}
Therefore, \begin{align*}
    \smcal_{\cF, 2} \le 2 \sum_{p \in \cZ} \bigc{\sum_{t = 1} ^ {T} \ind{p_{t} = p}} \bigc{\sup_{f \in \cF} (\rho_{p, f} - \tilde{\rho}_{p, f}) ^ {2} + \sup_{f \in \cF} \tilde{\rho}_{p, f} ^ {2}},
\end{align*}
where the inequality follows by the sub-additivity of the supremum function and since $(u + v) ^ {2} \le 2 u ^ {2} + 2 v ^ {2}$. Equipped with \eqref{eq:final_bound_rho_tilde_rho} and \eqref{eq:final_bound_p_t_i_ind}, it is easy to express $\smcal_{\cF, 2}$ in terms of $\psmcal_{\cF, 2}$. To achieve so, we define two terms $\textsc{TERM}_{1}, \textsc{TERM}_{2}$ as \begin{align*}
    \textsc{TERM}_{1} &\coloneqq \sum_{p \in \cI} \bigc{\sum_{t = 1} ^ {T} \ind{p_{t} = p}} \bigc{\sup_{f \in \cF} (\rho_{p, f} - \tilde{\rho}_{p, f}) ^ {2} + \sup_{f \in \cF} \tilde{\rho}_{p, f} ^ {2}}, \\
    \textsc{TERM}_{2} &\coloneqq \sum_{p \in \bar{\cI}} \bigc{\sum_{t = 1} ^ {T} \ind{p_{t} = p}} \bigc{\sup_{f \in \cF} (\rho_{p, f} - \tilde{\rho}_{p, f}) ^ {2} + \sup_{f \in \cF} \tilde{\rho}_{p, f} ^ {2}}
\end{align*}
and bound $\textsc{TERM}_{1}, \textsc{TERM}_{2}$ separately. We begin by bounding $\textsc{TERM}_{1}$ as \begin{align*}
    \textsc{TERM}_{1} &\le \sum_{p \in \cI} \bigc{\sum_{t = 1} ^ {T} \cP_{t}(p) + 2 \sqrt{\bigc{\sum_{t = 1} ^ {T} \cP_{t}(p)} \log \frac{6(N + 1) \abs{\cF}}{\delta}}} \bigc{\sup_{f \in \cF} (\rho_{p, f} - \tilde{\rho}_{p, f}) ^ {2} + \sup_{f \in \cF} \tilde{\rho}_{p, f} ^ {2}} \\
    &\le \sum_{p \in \cI} \bigc{\sum_{t = 1} ^ {T} \cP_{t}(p) + 2 \sqrt{\bigc{\sum_{t = 1} ^ {T} \cP_{t}(p)} \log \frac{6(N + 1) \abs{\cF}}{\delta}}} \bigc{\frac{64 \log \frac{6(N + 1) \abs{\cF}}{\delta}}{\sum_{t = 1} ^ {T} \cP_{t}(p)} + \sup_{f \in \cF} \tilde{\rho}_{p, f} ^ {2}} \\
    &\le 3 \sum_{p \in \cI} \bigc{\sum_{t = 1} ^ {T} \cP_{t}(p)} \bigc{\frac{64 \log \frac{6(N + 1) \abs{\cF}}{\delta}}{\sum_{t = 1} ^ {T} \cP_{t}(p)} + \sup_{f \in \cF} \tilde{\rho}_{p, f} ^ {2}} \\
    &= 192 \abs{\cI} \log \frac{6(N + 1) \abs{\cF}}{\delta} + 3 \sum_{p \in \cI} \bigc{\sum_{t = 1} ^ {T} \cP_{t}(p)} \sup_{f \in \cF} \tilde{\rho}_{p, f} ^ {2},
\end{align*}
where the first inequality follows from \eqref{eq:final_bound_p_t_i_ind}; the second inequality follows from \eqref{eq:final_bound_rho_tilde_rho}; the third inequality follows since $\log \frac{6(N + 1) \abs{\cF}}{\delta} \le \sum_{t = 1} ^ {T} \cP_{t}(p)$ as $p \in \cI$. Next, we bound $\textsc{TERM}_{2}$ as \begin{align*}
    \textsc{TERM}_{2} & \le \sum_{p \in \bar{\cI}} \bigc{2\sum_{t = 1} ^ {T} \cP_{t}(p) + \log \frac{6(N + 1) \abs{\cF}}{\delta}} \bigc{\sup_{f \in \cF} (\rho_{p, f} - \tilde{\rho}_{p, f}) ^ {2} + \sup_{f \in \cF} \tilde{\rho}_{p, f} ^ {2}} \\
    &\le \sum_{p \in \bar{\cI}} \bigc{2\sum_{t = 1} ^ {T} \cP_{t}(p) + \log \frac{6(N + 1) \abs{\cF}}{\delta}} \bigc{4 + \sup_{f \in \cF} \tilde{\rho}_{p, f} ^ {2}} \\
    &\le 13 \abs{\bar{\cI}} \log \frac{6(N + 1) \abs{\cF}}{\delta} + 2 \sum_{p \in \bar{\cI}} \bigc{\sum_{t = 1} ^ {T} \cP_{t}(p)}  \sup_{f \in \cF} \tilde{\rho}_{p, f} ^ {2},
\end{align*}
where the first inequality follows from \eqref{eq:final_bound_p_t_i_ind}; the second inequality follows from \eqref{eq:final_bound_rho_tilde_rho}; the final inequality follows from the definition of $\bar{\cI}$ and since $\abs{\tilde{\rho}_{p, f}} \le 1$. Combining the bounds on $\textsc{TERM}_{1}, \textsc{TERM}_{2}$ to $\smcal_{\cF, 2} \le 2 (\textsc{TERM}_{1} + \textsc{TERM}_{2})$, we obtain \begin{align*}
    \smcal_{\cF, 2} &\le 384 \abs{\cI} \log \frac{6(N + 1) \abs{\cF}}{\delta} + 6 \sum_{p \in \cI} \bigc{\sum_{t = 1} ^ {T} \cP_{t}(p)} \sup_{f \in \cF} \tilde{\rho}_{p, f} ^ {2} + \\
    &\hspace{40mm} 26 \abs{\bar{\cI}} \log \frac{6(N + 1) \abs{\cF}}{\delta} + 4 \sum_{p \in \bar{\cI}} \bigc{\sum_{t = 1} ^ {T} \cP_{t}(p)} \sup_{f \in \cF} \tilde{\rho}_{p, f} ^ {2} \\
    &\le 384 (N + 1) \log \frac{6(N + 1) \abs{\cF}}{\delta} + 6 \sum_{p \in \cZ} \bigc{\sum_{t = 1} ^ {T} \cP_{t}(p)} \sup_{f \in \cF} \tilde{\rho}_{p, f} ^ {2} \\
    &= 384 (N + 1) \log \frac{6(N + 1) \abs{\cF}}{\delta} + 6 \cdot \psmcal_{\cF, 2}.
\end{align*} This completes the proof.
\end{proof}
\subsection{Proof of Lemma \ref{lem:swap_multical_linear_f}}\label{app:swap_multical_linear_f}
\begin{proof}
    To bound $\smcal_{\cF_{1}^{\lin}, 2}$ in terms of $\smcal_{\cC_{\varepsilon, 2}}$, we realize that for each $f \in \cF_{1}^{\lin}$, letting $f_{\varepsilon} \in \cC_{\varepsilon}$ be the representative of $f$, we have \begin{align*}
    \abs{\rho_{p, f} - \rho_{p, f_{\varepsilon}}} = \abs{\frac{\sum_{t = 1} ^ {T} \ind{p_{t} = p} (f(x_{t}) - f_{\varepsilon}(x_{t})) (y_{t} - p)}{\sum_{t = 1} ^ {T} \ind{p_{t} = p}}} \le \varepsilon.
\end{align*}
Therefore, $\sup_{f \in \cF_{1}^{\lin}} \rho_{p, f} ^ {2} \le 2 \sup_{f \in \cC_{\varepsilon}} \rho_{p, f} ^ {2} + 2 \varepsilon ^ {2}$ and \begin{align*}
    \smcal_{\cF_{1}^{\lin}, 2} = \sum_{p \in \cZ} \bigc{\sum_{t = 1} ^ {T} \ind{p_{t} = p}} \sup_{f \in \cF_{1}^{\lin}} \rho_{p, f} ^ 2 &\le 2 \sum_{p \in \cZ} \bigc{\sum_{t = 1} ^ {T} \ind{p_{t} = p}} \sup_{f \in \cC_{\varepsilon}} \rho_{p, f} ^ 2 + 2 \varepsilon ^ {2} T \\
    &= 2 \smcal_{\cC_{\varepsilon}, 2} + 2\varepsilon ^ {2} T.
\end{align*}
Using the result of Lemma \ref{lem:pseudo_swap_to_swap} to bound $\smcal_{\cC_{\varepsilon}, 2}$, we obtain \begin{align*}
    \smcal_{\cF_{1}^{\lin}, 2} &\le 768 (N + 1) \log \frac{6 (N + 1) \abs{\cC_{\varepsilon}}}{\delta} + 12 \psmcal_{\cF_{1}^{\lin}, 2} + 2 \varepsilon ^ {2} T,
\end{align*}
where we have also used the inequality $\psmcal_{\cC_{\varepsilon}, 2} \le \psmcal_{\cF_{1}^{\lin}, 2}$ since $\cC_{\varepsilon} \subseteq \cF_{1}^{\lin}$. Using Proposition \ref{prop:cover_linear_functions} to bound $\abs{\cC_{\varepsilon}}$ finishes the proof.
\end{proof}
\subsection{Proof of Lemma \ref{lem:reverse_result}}\label{app:reverse_result}
\begin{proof}
    Consider the function $f'(x) \coloneqq p + \eta f(x)$, where $$\eta \coloneqq \min \bigc{1, \frac{\alpha}{\frac{\sum_{t = 1} ^ {T} \cP_{t}(p) (f(x_{t})) ^ {2}}{\sum_{t = 1} ^ {T} \cP_{t}(p)}}}.$$
Under Assumption \ref{assumption:closeness}, $f' \in \cF$. Furthermore, since $f \in \cF_{1}$, we have $(f'(x)) ^ {2} \le 2 p ^ {2} + 2 \eta ^ {2} (f(x)) ^ {2} \le 4$, therefore, $f' \in \cF_{4}$. For convinience, we define $\Delta \coloneqq \sum_{t = 1} ^ {T} \cP_{t}(p) \bigc{(p - y_{t}) ^ {2} - (f'(x_{t}) - y_{t}) ^ {2}}$. By direct computation, we obtain \begin{align*}
    \Delta = \sum_{t = 1} ^ {T} \cP_{t}(p)\bigc{(p - y_{t}) ^ {2} - (p + \eta f(x_{t}) - y_{t}) ^ {2}} &= \sum_{t = 1} ^ {T} \cP_{t}(p) \bigc{-\eta ^ {2} (f(x_{t})) ^ {2} + 2 \eta f(x_{t}) \cdot (y_{t} - p)}.
\end{align*}
Therefore, the desired quantity can be lower bounded as \begin{align*}
    \frac{\Delta}{\sum_{t = 1} ^ {T} \cP_{t}(p)} = 2\eta \cdot \tilde{\rho}_{p, f} - \eta ^ {2}  \cdot \frac{\sum_{t = 1} ^ {T} \cP_{t}(p) (f(x_{t})) ^ {2}}{\sum_{t = 1} ^ {T} \cP_{t}(p)} \ge 2 \eta \alpha - \eta ^ {2} \frac{\sum_{t = 1} ^ {T} \cP_{t}(p) (f(x_{t})) ^ {2}}{\sum_{t = 1} ^ {T} \cP_{t}(p)} = 2\eta \alpha - \eta ^ {2} \mu,
\end{align*}
where $\mu \coloneqq \frac{\sum_{t = 1} ^ {T} \cP_{t}(p) (f(x_{t})) ^ {2}}{\sum_{t = 1} ^ {T} \cP_{t}(p)}$. Next, we consider two cases depending on whether or not $1$ realizes the minimum in the expression defining $\eta$. If $\alpha \ge \mu$, $\eta = \min(1, \frac{\alpha}{\mu}) = 1$. Therefore, $2\eta \alpha - \eta ^ {2} \mu = 2\alpha - \mu \ge \alpha \ge \alpha ^ {2}$, where the last inequality follows since $\tilde{\rho}_{p, f} \le 1$ as $f \in \cF_{1}$, and $\tilde{\rho}_{p, f} \ge \alpha$ by assumption, thus $\alpha \le 1$. Otherwise, if $\alpha < \mu$, we have $\eta = \frac{\alpha}{\mu}$ and $2\eta \alpha - \eta ^ {2} \mu = \frac{\alpha ^ {2}}{\mu} \ge \alpha ^ {2}$ since $\mu \le 1$ as $f \in \cF_{1}$. Combining both cases, we have shown that $\frac{\Delta}{\sum_{t = 1} ^ {T} \cP_{t}(p)} \ge \alpha ^ {2}$, which completes the proof.
\end{proof}
\subsection{Proof of Lemma \ref{lem:contextual_swap_regret_bounds_multicalibration}}\label{app:contextual_swap_regret_bounds_multicalibration}
\begin{proof}
    We shall prove the desired result by contradiction. Assume that $\psmcal_{\cF_{1}, 2} > \alpha$. Therefore, there exists a comparator profile $\{f_{p} \in \cF_{1}\}_{p \in \cZ}$ such that \begin{align}\label{eq:assumption_psmcal}
        \sum_{p \in \cZ} \bigc{\sum_{t = 1} ^ {T} \cP_{t}(p)} \bigc{\tilde{\rho}_{p, f_{p}}} ^ {2} > \alpha. 
    \end{align}  
    For each $p \in \cZ$, define $\alpha_{p} \coloneqq \bigc{\sum_{t = 1} ^ {T} \cP_{t}(p)} \tilde{\rho}_{p, f_{p}} ^ {2}$. Thus, there exists a function $f_{p}^\star(x)$ which is either $f_{p}(x)$ or $-f_{p}(x)$ such that $\tilde{\rho}_{p, f^\star_{p}} = \sqrt{\frac{\alpha_{p}}{\sum_{t = 1} ^ {T} \cP_{t}(p)}}$. Clearly, $f^\star_{p} \in \cF_{1}$. It follows from Lemma \ref{lem:reverse_result} that for each $p \in \cP$, there exists a $f_{p}' \in \cF_{4}$ such that \begin{align*}
        \sum_{t = 1} ^ {T} \cP_{t}(p) \bigc{(p - y_{t}) ^ {2} - (f'_{p}(x_{t}) - y_{t}) ^ {2}} \ge \alpha_{p}.
    \end{align*}
    Summing over $p \in \cZ$, we obtain that the comparator profile $\{f'_{p} \in \cF_{4}\}_{p \in \cZ}$ realizes \begin{align*}
       \sum_{p \in \cZ} \sum_{t = 1} ^ {T} \cP_{t}(p) \bigc{(p - y_{t}) ^ {2} - (f'_{p}(x_{t}) - y_{t}) ^ {2}} \ge \sum_{p \in \cZ} \alpha_{p} > \alpha,
    \end{align*}
    where the last inequality follows from \eqref{eq:assumption_psmcal}. This is a contradiction to the assumption that $\psreg_{\cF_{4}} \le \alpha$. This completes the proof.
\end{proof}
\subsection{Proof of Proposition \ref{prop:BM_regret}}\label{app:BM_regret}
\begin{proof}
    {For each $i \in \{0, \dots, N\}$, fix a $f_{i} \in \cF$.
    By definition of $\mathsf{Reg}_{i}(\cF)$, we have \begin{align*}
        \sum_{t = 1} ^ {T} p_{t, i} \bigc{\sum_{j = 0} ^ {N}  q_{t, i, j} (z_{j} - y_{t}) ^ {2}} - \sum_{t = 1} ^ {T} p_{t, i} (f_{i}(x_{t}) - y_{t}) ^ {2} \le \mathsf{Reg}_{i}(\cF).
    \end{align*}
    Summing the equation above for all $i \in \{0, \dots, N\}$, we obtain \begin{align}\label{eq:reg_i_F_def}
        \sum_{t = 1} ^ {T} \sum_{i = 0} ^ {N} \sum_{j = 0} ^ {N} p_{t, i} q_{t, i, j} (z_{j} - y_{t}) ^ {2} - \sum_{t = 1} ^ {T} \sum_{i = 0} ^ {N} p_{t, i} (f_{i}(x_{t}) - y_{t}) ^ {2} \le \sum_{i = 0} ^ {N} \mathsf{Reg}_{i}(\cF).
    \end{align}
    Simplifying the first term on the left-hand side of the equation above, we have \begin{align*}
          \sum_{t = 1} ^ {T} \sum_{i = 0} ^ {N} \sum_{j = 0} ^ {N} p_{t, i} q_{t, i, j} (z_{j} - y_{t}) ^ {2} = \sum_{t = 1} ^ {T} \sum_{i = 0} ^ {N} p_{t, i} \ip{\q_{t, i}}{\bi{\ell}_{t}} = \sum_{t = 1} ^ {T} \p_{t}^\intercal \Q_{t} ^ \intercal \bi{\ell}_{t} = \sum_{t = 1} ^ {T} \p_{t} ^ \intercal \bi{\ell}_{t},
    \end{align*}
    where the last equality follows since $\p_{t}$ is chosen such that $\Q_{t}\p_{t} = \p_{t}$. Therefore, \eqref{eq:reg_i_F_def} simplifies to \begin{align*}
        \sum_{t = 1} ^ {T} \sum_{i = 0} ^ {N} p_{t, i} \bigc{(z_{i} - y_{t}) ^ {2} - (f_{i}(x_{t}) - y_{t}) ^ {2}} \le \sum_{i = 0} ^ {N} \mathsf{Reg}_{i}(\cF).
    \end{align*}
    Taking the supremum over all $f_{i}$'s completes the proof.}
\end{proof}
\subsection{Proof of Proposition \ref{prop:rounding}}\label{app:rounding}
\begin{proof}
Let $\Delta \coloneqq \mathbb{E}_{q}[(q - y) ^ {2}]$. Substituting $p_{i} = \frac{i}{N}, p_{i + 1} = \frac{i + 1}{N}$ and by direct computation, we obtain
\begin{align*}
\Delta &= \frac{\frac{i + 1}{N} - p}{\frac{1}{N}} \bigc{\frac{i}{N} - y} ^ {2} + \frac{p - \frac{i}{N}}{\frac{1}{N}} \bigc{\frac{i + 1}{N} - y} ^ {2} \\
    &= \frac{\frac{i}{N} - p}{\frac{1}{N}} \bigc{\frac{i}{N} - y} ^ {2} + \bigc{\frac{i}{N} - y} ^ {2} + \frac{p - \frac{i}{N}}{\frac{1}{N}} \bigc{\frac{i}{N} - y} ^ {2} +  \frac{p - \frac{i}{N}}{\frac{1}{N}} \cdot \frac{1}{N ^ {2}} + 2 \bigc{p - \frac{i}{N}} \bigc{\frac{i}{N} - y} \\
    &= \bigc{\frac{i}{N} - y} ^ {2} + \frac{1}{N} \bigc{p - \frac{i}{N}} + 2 \bigc{p - \frac{i}{N}} \bigc{\frac{i}{N} - y} \\
    &= \bigc{\frac{i}{N} - p} ^ {2} + (p - y) ^ {2} + 2 \bigc{\frac{i}{N} - p} (p - y) + \frac{1}{N}  \bigc{p - \frac{i}{N}} + 2 \bigc{p - \frac{i}{N}} \bigc{\frac{i}{N} - y} \\
    &= \bigc{\frac{i}{N} - p} ^ {2} + (p - y) ^ {2} + \frac{1}{N} \bigc{p - \frac{i}{N}} - 2 \bigc{p - \frac{i}{N}} ^ {2} \\
    &= (p - y) ^ {2} + \frac{1}{N} \bigc{p - \frac{i}{N}} - \bigc{p - \frac{i}{N}} ^ {2} \\
    &\le (p - y) ^ {2} + \frac{1}{N ^ {2}},
\end{align*}
where the inequality follows by dropping the negative term, and since $p < \frac{i + 1}{N}$. This completes the proof.   
\end{proof}
\subsection{Exp-concavity parameter of the scaled loss}\label{app:exp_concavity}
\begin{proposition}\label{prop:ONS}
    Let $x \in \mathbb{B}_{2} ^ {d}, y \in \{0, 1\}$. The function $\phi: 4 \cdot \mathbb{B}_{2} ^ {d} \to \Rn$ defined as $\phi(\theta) \coloneqq \alpha (\ip{\theta}{x} - y) ^ {2}$ for some $\alpha \in [0, 1]$ is $\frac{1}{50}$-exp-concave and $10$-Lipschitz.
\end{proposition}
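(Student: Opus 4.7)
The plan is to verify both properties by a direct computation of the gradient and Hessian of $\phi$ on $4 \cdot \mathbb{B}_2^d$. We have
\[
\nabla \phi(\theta) = 2\alpha (\langle \theta, x\rangle - y)\, x, \qquad \nabla^2 \phi(\theta) = 2\alpha\, x x^\intercal.
\]
For Lipschitzness, I would use Cauchy--Schwarz together with $\|x\| \le 1$, $\|\theta\| \le 4$, and $y \in \{0,1\}$ to bound $|\langle \theta, x\rangle - y| \le \|\theta\|\|x\| + 1 \le 5$, whence
\[
\|\nabla \phi(\theta)\| \le 2 \alpha \cdot 5 \cdot 1 \le 10,
\]
using $\alpha \in [0,1]$. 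This gives the $10$-Lipschitz bound.

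For exp-concavity, I would invoke the standard equivalence (see, e.g., \cite{hazan2007logarithmic}): a twice-differentiable convex function $f$ is $\gamma$-exp-concave on a convex domain if and only if $\nabla^2 f(w) \succeq \gamma\, \nabla f(w) \nabla f(w)^\intercal$ everywhere on the domain. Applying this with $f=\phi$ and $\gamma=\tfrac{1}{50}$, the required condition reduces to
\[
2\alpha\, xx^\intercal \;\succeq\; \tfrac{1}{50} \cdot 4\alpha^2 (\langle \theta, x\rangle - y)^2\, xx^\intercal,
\]
which, after cancelling the common rank-one direction $xx^\intercal$, is equivalent to the scalar inequality $\alpha (\langle \theta, x\rangle - y)^2 \le 25$. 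This follows immediately from the Lipschitz bound computed above, since $\alpha \le 1$ and $|\langle \theta, x\rangle - y| \le 5$.

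There is no real obstacle here; the only subtle point is picking the correct characterization of exp-concavity. Convexity of $\phi$ is automatic because the Hessian $2\alpha xx^\intercal$ is positive semidefinite, so the equivalence above applies without further assumptions. Combining the two pieces yields that $\phi$ is $\tfrac{1}{50}$-exp-concave and $10$-Lipschitz on $4 \cdot \mathbb{B}_2^d$, as claimed.
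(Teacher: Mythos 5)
Your proof is correct and reaches the same condition $\alpha(\langle\theta,x\rangle - y)^2 \le 25$ as the paper; the only difference is stylistic: you invoke the standard second-order characterization $\nabla^2 f \succeq \gamma\,\nabla f\,\nabla f^\intercal$ of exp-concavity, whereas the paper computes the Hessian of $\exp(-\gamma\phi)$ directly and checks it is negative semidefinite. Since that characterization is itself established by the same Hessian computation, the two arguments are essentially equivalent, and your Lipschitz bound matches the paper's exactly.
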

\begin{proof}
    For a $\gamma > 0$, let $g(\theta) \coloneqq \exp\bigc{-\gamma \phi(\theta)}$. The first derivative of $g$ is given by \begin{align*}
        \nabla g(\theta) = -2\gamma \alpha (\ip{\theta}{x} - y) \exp\bigc{-\gamma \alpha (\ip{\theta}{x} - y) ^ {2}} x.
    \end{align*}
    Differentiating with respect to $\theta$ again, we obtain \begin{align*}
        \nabla ^ {2} g(\theta) = -2\gamma \alpha \exp\bigc{-\gamma \alpha (\ip{\theta}{x} - y) ^ {2}} \cdot \bigc{1 - 2\gamma \alpha (\ip{\theta}{x} - y) ^ {2}} \cdot xx^\intercal.
    \end{align*}
Choosing $\gamma = \frac{1}{50}$, the expression above simplifies to \begin{align*}
    \nabla ^ {2} g(\theta) = -\frac{\alpha}{25} \exp\bigc{-\frac{\alpha}{50} (\ip{\theta}{x} - y) ^ {2}} \cdot \bigc{1 - \frac{\alpha}{25} (\ip{\theta}{x} - y) ^ {2}} \cdot xx^\intercal \preceq 0,
\end{align*}
where the inequality is because $(\ip{\theta}{x} - y) ^ {2} \le 25$ since $\abs{\ip{\theta}{x}} \le \norm{\theta} \norm{x} \le 4$ by the Cauchy-Schwartz inequality, therefore $\abs{\ip{\theta}{x} - y} \le 5$; this implies that $\frac{\alpha}{25} (\ip{\theta}{x} - y) ^ {2} \le 1$. Hence, the function $\exp\bigc{-\frac{1}{50} \phi(\theta)}$ is concave, thus $\phi$ is $\frac{1}{50}$-exp-concave by definition. To bound the Lipschitzness parameter, we note that since $\nabla \phi(\theta) = 2\alpha (\ip{\theta}{x} - y) x$, we have $\norm{\nabla \phi(\theta)} = 2\alpha \abs{\ip{\theta}{x} - y} \cdot \norm{x} \le 10$. Therefore, $\phi$ is $10$-Lipschitz. This completes the proof.
\end{proof}
\subsection{Proof of Theorem \ref{thm:final_bound_smcal}}\label{app:final_bound_smcal}
\begin{proof}
    We have, \begin{align}
        \smcal_{\cF_{1}^{\lin}, 2} &= \cO\bigc{N \log \frac{N}{\delta} + N d \log \frac{1}{\varepsilon} + N d \log T + \frac{T}{N ^ {2}} + \varepsilon ^ {2} T} \nn \\
        &= \cO\bigc{N \log \frac{N}{\delta} + Nd \log T + \frac{T}{N ^ {2}}} \nn \\
        &= \cO\bigc{T^{\frac{1}{3}} d ^ {\frac{2}{3}} (\log T) ^ {\frac{2}{3}} + \bigc{\frac{T}{d \log T}} ^ {\frac{1}{3}} \log \frac{1}{\delta}},\label{eq:event}
    \end{align}
    where the first equality follows from combining the result of Lemma \ref{lem:swap_multical_linear_f} and Lemma \ref{lem:contextual_swap_regret_bounds_multicalibration} with the bound $$\psreg_{\cF_{4}^\lin} = \cO\bigc{Nd \log T + \frac{T}{N ^ {2}}};$$ the second equality follows by substituting $\varepsilon = \frac{1}{\sqrt{T}}$; the final equality follows by substituting $N = \bigc{\frac{T}{d \log T}} ^ {\frac{1}{3}}$. To bound $\mathbb{E}\bigs{\smcal_{\cF_{1}^{\lin}, 2}}$, we let $\cE$ denote the event in \eqref{eq:event}. Then, \begin{align*}
        \mathbb{E}\bigs{\smcal_{\cF_{1}^{\lin}, 2}} &= \mathbb{E}\bigs{\smcal_{\cF_{1}^{\lin}, 2} | \cE} \cdot  \mathbb{P}(\cE) + \mathbb{E}\bigs{\smcal_{\cF_{1}^{\lin}, 2} | \bar{\cE}} \cdot \mathbb{P}(\bar{\cE}) \\
        &= \cO\bigc{T^{\frac{1}{3}} d ^ {\frac{2}{3}} (\log T) ^ {\frac{2}{3}} + \bigc{\frac{T}{d \log T}} ^ {\frac{1}{3}} \log \frac{1}{\delta} + \delta T} \\
        &= \cO\bigc{T^{\frac{1}{3}} d ^ {\frac{2}{3}} (\log T) ^ {\frac{2}{3}}},
    \end{align*}
    where the second equality follows by bounding $\mathbb{E}[\smcal_{\cF_{1}^{\lin}, 2} | \cE]$ as per \eqref{eq:event}, $\mathbb{P}(\cE) \le 1$, $\mathbb{P}(\bar{\cE}) \le \delta$, and $\mathbb{E}[\smcal_{\cF_{1}^{\lin}, 2} | \bar{\cE}] \le T$ by definition; the second equality follows by choosing $\delta = 1/T$. This completes the proof.
\end{proof}
\section{Deferred Proofs in Section \ref{sec:beyond_multicalibration}}
\subsection{Proof of Lemma \ref{lem:pseudo_swap_regret_to_swap_regret}}\label{app:pseudo_swap_regret_to_swap_regret}
\begin{proof}
    The proof follows by an application of Freedman's inequality, similar to Lemma \ref{lem:pseudo_swap_to_swap}. Fix a $f \in \cF, p \in \cZ$ and define the martingale difference sequence $\{X_{t}\}_{t = 1} ^ {T}$ as \begin{align*}
        X_{t} \coloneqq (\ind{p_{t} = p} - \cP_{t}(p)) \cdot \bigc{(p - y_{t}) ^ {2} - (f(x_{t}) - y_{t}) ^ {2}}. 
    \end{align*}
    Since $f \in [-1, 1]$, $\abs{X_{t}} \le 4$ for all $t \in [T]$. Fix a $\mu_{p} \in \bigs{0, \frac{1}{4}}$. Applying Lemma \ref{lem:Freedman}, we obtain \begin{align*}
        \abs{\sum_{t = 1} ^ {T} (\ind{p_{t} = p} - \cP_{t}(p)) \cdot \bigc{(p - y_{t}) ^ {2} - (f(x_{t}) - y_{t}) ^ {2}}} \le 16 \mu_{p} \sum_{t = 1} ^ {T} \cP_{t}(p) + \frac{1}{\mu_{p}} \log \frac{2}{\delta},
    \end{align*}
    where the inequality is because the total conditional variance can be bounded as \begin{align*}
        \sum_{t = 1} ^ {T} \mathbb{E}_{t}\bigs{X_{t} ^ {2}} &= \sum_{t = 1} ^ {T} \mathbb{E}_{t}\bigs{\bigc{\ind{p_{t} = p} - \cP_{t}(p)} ^ {2}} \cdot  \bigc{(p - y_{t}) ^ {2} - (f(x_{t}) - y_{t}) ^ {2}} ^ {2} \le 16 \sum_{t = 1} ^ {T} \cP_{t}(p) (1 - \cP_{t}(p)),
    \end{align*}
    and we drop the negative term. Taking a union bound over all $p \in \cZ, f \in \cF$, we obtain that with probability at least $1 - \delta$ (simultaneously over all $p \in \cZ, f \in \cF$), \begin{align*}
         \abs{\sum_{t = 1} ^ {T} (\ind{p_{t} = p} - \cP_{t}(p)) \cdot \bigc{(p - y_{t}) ^ {2} - (f(x_{t}) - y_{t}) ^ {2}}} \le 16 \mu_{p} \sum_{t = 1} ^ {T} \cP_{t}(p) + \frac{1}{\mu_{p}} \log \frac{2 (N + 1) \abs{\cF}}{\delta}.
    \end{align*}
    Next, we minimize the bound above with respect to $\mu_{p}$.
    If $p \in \cZ$ is such that $\sum_{t = 1} ^ {T} \cP_{t}(p) \ge \log \frac{2(N + 1) \abs{\cF}}{\delta}$, the optimal choice of $\mu_{p}$ is $\frac{1}{4}\sqrt{\frac{\log \frac{2 (N + 1)}{\delta}}{\sum_{t = 1} ^ {T} \cP_{t}(p)}}$; otherwise, the optimal $\mu_{p}$ is $\frac{1}{4}$. Therefore, we define the set \begin{align*}
        \cI \coloneqq \bigcurl{p \in \cZ \text{ s.t } \sum_{t = 1} ^ {T} \cP_{t}(p) \ge \log \frac{2(N + 1) \abs{\cF}}{\delta}}
    \end{align*} 
    and bound the deviation as \begin{equation}\label{eq:optimized_deviation_bound}
    \begin{split}
        \abs{\sum_{t = 1} ^ {T} (\ind{p_{t} = p} - \cP_{t}(p)) \cdot \bigc{(p - y_{t}) ^ {2} - (f(x_{t}) - y_{t}) ^ {2}}} \le \begin{cases}
            8 \sqrt{\bigc{\sum_{t = 1} ^ {T} \cP_{t}(p)} \log \frac{2 (N + 1) \abs{\cF}}{\delta}} & \text{ if } p \in \cI, \\
            4\bigc{\sum_{t = 1} ^ {T} \cP_{t}(p) + \log \frac{2(N + 1) \abs{\cF}}{\delta}} & \text{otherwise}.
        \end{cases}
    \end{split}    
    \end{equation}
    Equipped with \eqref{eq:optimized_deviation_bound}, we can bound $\sreg_{\cF}$ in the following manner: \begin{align*}
        \sreg_{\cF} &= \sup_{\{f_{p} \in \cF\}_{p \in \cZ}} \sum_{p \in \cZ} \sum_{t = 1} ^ {T} \ind{p_{t} = p} \bigc{(p - y_{t}) ^ {2} - (f_{p}(x_{t}) - y_{t}) ^ {2}} \\
        &= \sum_{p \in \cZ} \sup_{f \in \cF} \sum_{t = 1} ^ {T} \ind{p_{t} = p} \bigc{(p - y_{t}) ^ {2} - (f(x_{t}) - y_{t}) ^ {2}} \\
        &\le \sum_{p \in \cZ} \sup_{f \in \cF} \sum_{t = 1} ^ {T} (\ind{p_{t} = p} - \cP_{t}(p)) \cdot \bigc{(p - y_{t}) ^ {2} - (f(x_{t}) - y_{t}) ^ {2}} + \psreg_{\cF} \\
        &\le 8\sum_{p \in \cI} \sqrt{\bigc{\sum_{t = 1} ^ {T} \cP_{t}(p)} \log \frac{2 (N + 1) \abs{\cF}}{\delta}} + 4\sum_{p \in \bar{\cI}} \bigc{\sum_{t = 1} ^ {T} \cP_{t}(p) + \log \frac{2(N + 1) \abs{\cF}}{\delta}} + \psreg_{\cF} \\
        &\le 8\sum_{p \in \cI} \sqrt{\bigc{\sum_{t = 1} ^ {T} \cP_{t}(p)} \log \frac{2 (N + 1) \abs{\cF}}{\delta}} + 8 \abs{\bar{\cI}} \log \frac{2 (N + 1) \abs{\cF}}{\delta} + \psreg_{\cF} \\
        &\le 8\sqrt{\abs{\cI}\bigc{\sum_{p \in \cI}\sum_{t = 1} ^ {T} \cP_{t}(p)} \log \frac{2(N + 1) \abs{\cF}}{\delta}} + 8 \abs{\bar{\cI}}\log \frac{2(N + 1) \abs{\cF}}{\delta} + \psreg_{\cF} \\
        &\le 8\sqrt{(N + 1) T \log \frac{2(N + 1) \abs{\cF}}{\delta}} + 8(N + 1) \log \frac{2(N + 1) \abs{\cF}}{\delta} + \psreg_{\cF},
    \end{align*}
    where the first inequality follows from the sub-additivity of the supremum function, and by a similar reasoning as the first two equalities above, we have  $$\psreg_{\cF} = \sum_{p \in \cZ} \sup_{f \in \cF} \sum_{t = 1} ^ {T} \cP_{t}(p) \bigc{(p - y_{t}) ^ {2} - (f(x_{t}) - y_{t}) ^ {2}};$$ the second inequality follows from \eqref{eq:optimized_deviation_bound}; the third inequality follows since $\log \frac{2 (N + 1) \abs{\cF}}{\delta} > \sum_{t = 1} ^ {T} \cP_{t}(p)$ for all $p \in \bar{\cI}$; the fourth inequality follows from the Cauchy-Schwartz inequality. This completes the proof.
\end{proof}
\subsection{Proof of Theorem \ref{thm:swap_reg_squared_loss}}\label{app:swap_reg_squared_loss}
\begin{proof}
    First, we bound $\sreg_{\cF^\lin_{4}}$ in terms of $\sreg_{\cS_{\varepsilon}}$, where $\cS_{\varepsilon} \subseteq \cF^\lin_{4}$ is an $\varepsilon$-cover of $\cF^\lin_{4}$. Let $f \in \cF_{4}^\lin$ and $f_{\varepsilon} \in \cS_{\varepsilon}$ be its representative. Then, for any $t \in [T]$, we have \begin{align*}
    (f_{\varepsilon}(x_{t}) - y_{t}) ^ {2} - (f(x_{t}) - y_{t}) ^ {2} = (f_{\varepsilon}(x_{t}) - f(x_{t})) (f_{\varepsilon}(x_{t}) + f(x_{t}) - 2 y_{t}) \le 6\varepsilon.
\end{align*} Therefore, $\sreg_{\cF^\lin_{4}}$ can be bounded in terms of $\sreg_{\cS_{\varepsilon}}$ as \begin{align*}
    \sreg_{\cF_{4} ^ {\lin}} = \sum_{p \in \cZ} \sup_{f \in \cF^\lin_{4}} \sum_{t = 1} ^ {T} \ind{p_{t} = p} \bigc{(p - y_{t}) ^ {2} - (f(x_{t}) - y_{t}) ^ {2}} &\le \sreg_{\cS_{\varepsilon}} + 6 \varepsilon T.
\end{align*}
Since $\abs{\cS_{\varepsilon}}$ is finite (Proposition \ref{prop:cover_linear_functions}), using Lemma \ref{lem:pseudo_swap_regret_to_swap_regret} to bound $\sreg_{\cS_{\varepsilon}}$, we obtain \begin{align*}
    \sreg_{\cF_{4}^\lin} &\le \psreg_{\cS_{\varepsilon}} + 8\sqrt{(N + 1) T \log \frac{2(N + 1) \abs{\cS_{\varepsilon}}}{\delta}} + 8(N + 1) \log \frac{2(N + 1) \abs{\cS_{\varepsilon}}}{\delta} + 6 \varepsilon T \\
    &= \cO\bigc{\psreg_{\cF^\lin_{4}} + \sqrt{NT \log \frac{N\abs{\cS_{\varepsilon}}}{\delta}} + N \log \frac{N\abs{\cS_{\varepsilon}}}{\delta} + \varepsilon T} \\
    &= \cO\bigc{\frac{T}{N ^ {2}} + N d \log T + \sqrt{NT \log \frac{N}{\delta}} + \sqrt{NdT \log T}} \\
    &= \cO\bigc{T^{\frac{3}{5}} (d \log T) ^ \frac{2}{5} + T ^ {\frac{3}{5}} (d \log T) ^ {-\frac{1}{10}} \sqrt{\log \frac{1}{\delta}}}
\end{align*}
with probability at least $1 - \delta$. The first equality above follows since $\psreg_{\cS_{\varepsilon}} \le \psreg_{\cF^\lin_4}$; the second equality follows by substituting $\varepsilon = \frac{1}{T}$ and bounding $\abs{\cS_{\varepsilon}}$ as per Proposition \ref{prop:cover_linear_functions}, dropping the lower order terms, and since $\psreg_{\cF^\lin_{4}} = \cO\bigc{\frac{T}{N ^ {2}} + N d \log T}$; the final equality follows by substituting $N = \bigc{\frac{T}{d \log T}} ^ {\frac{1}{5}}$. The in-expectation bound follows by repeating the exact same steps to bound $\mathbb{E}\bigs{\smcal_{\cF_{1}^{\lin}, 2}}$ in the proof of Theorem \ref{thm:final_bound_smcal}. This completes the proof.
\end{proof}
\section{Deferred Proofs in Section \ref{sec:offline_sample_complexity}}
\subsection{Proof of Lemma \ref{lem:bound_T_1}}\label{app:bound_T_1}
\begin{proof}
    We begin by upper bounding $\cT_{1}$ as \begin{align}
        \cT_{1} &= \sup_{\{\ell_{v} \in \cL\}_{v \in \cZ}} \abs{\sum_{v \in \cZ} {\sum_{t = 1} ^ {T} \ind{p_{t}(x_{t}) = v} \cdot \ell_{v}(k_{\ell_{v}}(v), y_{t}) - \mathbb{P}(p_{t}(x) = v) \cdot \mathbb{E}\bigs{\ell_{v}(k_{\ell_{v}}(v), y) | p_{t}(x) = v}}} \nn \\
        &\le \sum_{v \in \cZ} \sup_{\ell \in \cL} \abs{\sum_{t = 1} ^ {T}\ind{p_{t}(x_{t}) = v} \cdot \ell(k_{\ell}(v), y_{t}) - \mathbb{P}(p_{t}(x) = v) \cdot \mathbb{E}\bigs{\ell(k_{\ell}(v), y) | p_{t}(x) = v}} \nn \\
        &\le \sum_{v \in \cZ} \sup_{\ell \in \cL ^ {\mathsf{proper}}} \abs{\sum_{t = 1} ^ {T}\ind{p_{t}(x_{t}) = v} \cdot \ell(v, y_{t}) - \mathbb{P}(p_{t}(x) = v) \cdot \mathbb{E}\bigs{\ell(v, y) | p_{t}(x) = v}},\label{eq:bound_proper_times_F}
    \end{align}
    where the second inequality follows since the loss $\tilde{\ell}(p, y)$ defined as $\tilde{\ell}(p, y) = \ell(k_{\ell}(p), y)$ is proper; we replace the $\sup_{\tilde{\ell} \in \cL^{\mathsf{proper}}}$ with $\sup_{\ell \in \cL^{\mathsf{proper}}}$. It follows from \cite[Theorem 8]{kleinberg2023u} that there exists a basis for proper losses in terms of V-shaped losses $\ell_{v}(p, y) = (v - y) \cdot \text{sign}(p - v)$, i.e., \begin{align*}
        \ell(p, y) = \int_{0} ^ {1} \mu_{\ell}(v) \cdot \ell_{v}(p, y) dv, 
    \end{align*} where $\mu_{\ell}: [0, 1] \to \mathbb{R}_{\ge 0}$ and $\int_{0} ^ {1} \mu_{\ell}(v) dv \le 2$. To avoid overloading the usage of $v$ for both $\ell_{v}(p, y)$ and $v \in \cZ$, we replace the $v$ in \eqref{eq:bound_proper_times_F} with $p$ for all the subsequent steps. Furthermore, as shown in \cite[Lemma 6.4]{okoroafor2025near}, for each V-shaped loss $\ell_{v}$, setting $v' = \frac{1}{N} \ceil{N v} \in \cZ$ ensures the following bound for all $p \in \cZ, y \in \{0, 1\}$:
    \begin{align}\label{eq:exists}
        \abs{\ell_{v}(p, y) - \ell_{v'}(p, y)} &= \abs{(v - y) \cdot \text{sign} (p - v) - (v' - y) \cdot \text{sign} (p - v')} = \abs{(v - v') \cdot \text{sign}(p - v)} \le \frac{1}{N},
    \end{align}
    where the second equality is because for all $p \in \cZ$, we have $\text{sign}(p - v) = \text{sign}(p - v')$. Using this to bound $\cT_{1}$ further, we obtain \begin{align*}
        &\cT_{1} \le \\
        &\sum_{p \in \cZ} \sup_{\ell \in \cL^{\mathsf{proper}}} \abs{\sum_{t = 1} ^ {T} \ind{p_{t}(x_{t}) = p} \int_{0} ^ {1} \mu_{\ell}(v) \cdot \ell_{v}(p, y_{t}) dv - \mathbb{P}(p_{t}(x) = p) \int_{0} ^ {1} \mu_{\ell}(v) \cdot \mathbb{E}[\ell_{v}(p, y) | p_{t}(x) = p] dv} \\
        &= \sum_{p \in \cZ} \sup_{\ell \in \cL ^ {\mathsf{proper}}} \abs{\int_{0} ^ {1} \mu_{\ell}(v) \bigc{\sum_{t = 1} ^ {T} \ind{p_{t}(x_{t}) = p} \cdot \ell_{v}(p, y_{t}) - \mathbb{P}(p_{t}(x) = p) \cdot \mathbb{E}[\ell_{v}(p, y) | p_{t}(x) = p]} dv} \\
        &\le \sum_{p \in \cZ} \sup_{\ell \in \cL^{\mathsf{proper}}} \int_{0} ^ {1} \mu_{\ell}(v) \abs{\sum_{t = 1} ^ {T} \ind{p_{t}(x_{t}) = p} \cdot \ell_{v}(p, y_{t}) - \mathbb{P}(p_{t}(x) = p) \cdot \mathbb{E}[\ell_{v}(p, y) | p_{t}(x) = p]} dv.
    \end{align*}
    In the next step, we bound the term inside the absolute value. It follows from \eqref{eq:exists} and the Triangle inequality that the term can be bounded by \begin{align*}
        \frac{1}{N} \sum_{t = 1} ^ {T} \ind{p_{t}(x_{t}) = p} &+ \frac{1}{N} \sum_{t = 1} ^ {T} \mathbb{P}(p_{t}(x) = p) + \abs{\sum_{t = 1} ^ {T} \ind{p_{t}(x_{t}) = p} \cdot \ell_{v'}(p, y_{t}) - \mathbb{P}(p_{t}(x) = p) \cdot \mathbb{E}[\ell_{v'}(p, y) | p_{t}(x) = p]} .
    \end{align*}
    Fix a $v' \in \cZ, p \in \cZ$. Observe that the sequence $X_{1}, \dots, X_{T}$ defined as $$X_{t} \coloneqq {\ind{p_{t}(x_{t}) = p} \cdot \ell_{v'}(p, y_{t}) - \mathbb{P}(p_{t}(x) = p) \cdot \mathbb{E}[\ell_{v'}(p, y) | p_{t}(x) = p]}$$
    is a martingale difference sequence with $\abs{X_{t}} \le 2$ for all $t \in [T]$. Furthermore, the cumulative conditional variance can be bounded by \begin{align*}
     \sum_{t = 1} ^ {T} \mathbb{E}_{t}\bigs{X_{t} ^ {2}} \le \sum_{t = 1} ^ {T} \mathbb{P}(p_{t}(x) = p) \cdot \bigc{\mathbb{E}\bigs{\ell_{v'}(p, y) | p_{t}(x)}} ^ {2} \le \sum_{t = 1} ^ {T} \mathbb{P}(p_{t}(x) = p).
    \end{align*}
    Fix a $\mu \in [0, \frac{1}{2}]$. By Lemma \ref{lem:Freedman}, we have $$\abs{\sum_{t = 1} ^ {T} X_{t}} \le \mu \sum_{t = 1} ^ {T} \mathbb{P}(p_{t}(x) = p) + \frac{1}{\mu} \log \frac{2}{\delta}.$$ Since $\sum_{t = 1} ^ {T} \mathbb{P}(p_{t}(x) = p)$ is a random variable, the optimal $\mu = \frac{1}{2}\min\bigc{1, \sqrt{\frac{\log \frac{2}{\delta}}{\sum_{t = 1} ^ {T} \mathbb{P}(p_{t}(x) = p)}}}$ is also random. Therefore, we cannot merely substitute the optimal $\mu$ (similar to our proofs in Lemmas \ref{lem:pseudo_swap_to_swap} and \ref{lem:pseudo_swap_regret_to_swap_regret}). However, note that the optimal choice of $\mu \in \bigs{\frac{1}{2}\sqrt{\frac{\log \frac{2}{\delta}}{T}}, \frac{1}{2}}$. For the subsequent steps, for simplicity in the analysis we assume that there exists a $n \in \mathbb{Z}_{\ge 0}$ such that $\sqrt{\frac{\log \frac{2}{\delta}}{T}} = \frac{1}{2 ^ {n}}$, and partition the interval $\cI = \bigs{\frac{1}{2}\sqrt{\frac{\log \frac{2}{\delta}}{T}}, \frac{1}{2}}$ as $\cI = \cI_{n} \cup \dots \cup \cI_{1} \cup \cI_{0}$, where $\cI_{k} = \left[\frac{1}{2 ^ {k + 1}}, \frac{1}{2 ^ {k}}\right)$ for all $k \in [n]$ and $\cI_{0} = \bigcurl{\frac{1}{2}}$. Each interval corresponds to a condition on $\sum_{t = 1} ^ {T} \mathbb{P}(p_{t}(x) = p)$ for which the optimal $\mu$ lies within that interval. In particular, for each $k \in [n]$, the interval $\cI_{k}$ shall correspond to the condition that \begin{align}\label{eq:constaint}
     \frac{1}{2 ^ {k}} \le \sqrt{\frac{\log \frac{2 (n + 1)}{\delta}}{\sum_{t = 1} ^ {T} \mathbb{P}(p_{t}(x) = p)}} < \frac{1}{2 ^ {k - 1}} 
     \equiv 4 ^ {k - 1} \log \frac{2 (n + 1)}{\delta} < \sum_{t = 1} ^ {T} \mathbb{P}(p_{t}(x) = p) \le 4 ^ {k} \log \frac{2 (n + 1)}{\delta}.
 \end{align}
 However, $\cI_{0}$ shall represent the condition that $0 \le \sum_{t = 1} ^ {T} \mathbb{P}(p_{t}(x) = p) \le \log \frac{2 (n + 1)}{\delta}$. For each interval $\cI_{k}$, we associate a parameter $\mu_{k} = \frac{1}{2 ^ {k + 1}}$. Applying Lemma \ref{lem:Freedman} and taking a union bound over all $k \in \{0, \dots, n\}$, we obtain \begin{align*}
     \abs{\sum_{t = 1} ^ {T} X_{t}} \le \mu_{k} \sum_{t = 1} ^ {T} \mathbb{P}(p_{t}(x) = p) + \frac{1}{\mu_{k}} \log \frac{2(n + 1)}{\delta}
 \end{align*}
 with probability at least $1 - \delta$ (simultaneously for all $k$). Next, we prove an uniform upper bound on $\abs{\sum_{t = 1} ^ {T} X_{t}}$ by analyzing the bound for each interval. Towards this end, let $k \in [n]$ be such that \eqref{eq:constaint} holds. Then,
 \begin{align*}
     \abs{\sum_{t = 1} ^ {T} X_{t}} \le \frac{1}{2 ^ {k + 1}} \sum_{t = 1} ^ {T} \mathbb{P}(p_{t}(x) = p) + 2 ^ {k + 1} \log \frac{2 (n + 1)}{\delta} \le \frac{9}{2} \sqrt{\bigc{\sum_{t = 1} ^ {T} \mathbb{P}(p_{t}(x) = p)} \log \frac{2 (n + 1)}{\delta}},
 \end{align*}
 where the second inequality follows from \eqref{eq:constaint}. Note that the choice of $\mu_{k} = \frac{1}{2 ^ {k + 1}}$ is not necessarily optimal, however, since the optimal $\mu \in \left[\frac{1}{2 ^ {k + 1}}, \frac{1}{2 ^ {k}}\right)$ for $\cI_{k}$, the bound on $\abs{\sum_{t = 1} ^ {T} X_{t}}$ obtained above is only worse than the optimal by a constant factor. Similarly, for $k = 0$, we have \begin{align*}
     \abs{\sum_{t = 1} ^ {T} X_{t}} \le \frac{1}{2}\sum_{t = 1} ^ {T} \mathbb{P}(p_{t}(x) = p) + 2 \log \frac{2 (n + 1)}{\delta} \le \frac{5}{2} \log \frac{2 (n + 1)}{\delta}.
 \end{align*}
 Combining both bounds, we have shown that 
 \begin{align*}
     \abs{\sum_{t = 1} ^ {T} X_{t}} &\le \frac{9}{2} \sqrt{\bigc{\sum_{t = 1} ^ {T} \mathbb{P}(p_{t}(x) = p)} \log \frac{2 (n + 1)}{\delta}} + \frac{5}{2} \log \frac{2 (n + 1)}{\delta} \\
     &= \cO\bigc{\sqrt{\bigc{\sum_{t = 1} ^ {T} \mathbb{P}(p_{t}(x) = p)} \log \frac{1}{\delta}} + \log \frac{1}{\delta}},
 \end{align*}
 where the second equality follows since $n = \cO\bigc{\log T}$ and $\log \frac{n + 1}{\delta} = \cO(\log \frac{1}{\delta})$ since $\delta \le \frac{1}{T}$.
    Taking a union bound over all $v' \in \cZ, p \in \cZ$, and substituting back to the bound on $\cT_{1}$, we obtain \begin{align*}
        &\cT_{1} \le \\
        & 2\sum_{p \in \cZ} { \frac{1}{N} \sum_{t = 1} ^ {T} \ind{p_{t}(x_{t}) = p} + \frac{1}{N} \sum_{t = 1} ^ {T} \mathbb{P}(p_{t}(x) = p) + \cO\bigc{\sqrt{{\sum_{t = 1} ^ {T} \mathbb{P}(p_{t}(x) = p)} \log \frac{N}{\delta}} + \log \frac{N}{\delta}}} \\
        &= \cO\bigc{\frac{T}{N} + \sqrt{N\sum_{p \in \cZ}{\sum_{t = 1} ^ {T} \mathbb{P}(p_{t}(x) = p)} \log \frac{N}{\delta}} + N \log \frac{N}{\delta}} = \cO\bigc{\frac{T}{N} + \sqrt{N T \log \frac{N}{\delta}}},
    \end{align*}
    where the first equality follows from the Cauchy-Schwartz inequality. This completes the proof.
\end{proof}

\subsection{Proof of Lemma \ref{lem:bound_T_2_finite_F}}\label{app:bound_T_2_finite_F}
\begin{proof}
    Observe that $\ell(p, y)$ can be written as $\ell(p, y) = (1 - y) \cdot \ell(p, 0) + y \cdot \ell(p, 1)$. We begin by bounding $\cT_{2}$ as \begin{align*}
        &\cT_{2} = \\
        &\sup_{\{(\ell_{v}, f_{v}) \in \cL^\cvx \times \cF\}_{v \in \cZ}} \abs{\sum_{p \in \cZ} \sum_{t = 1} ^ {T} \ind{p_{t}(x_{t}) = v} \cdot \ell_{v}(f_{v}(x_{t}), y_{t}) - \mathbb{P}(p_{t}(x) = v) \cdot \mathbb{E}[\ell_{v}(f_{v}(x), y) | p_{t}(x) = v]} \\
        &\le \sum_{p \in \cZ} \sup_{(\ell, f) \in \cL^\cvx \times \cF} \abs{\sum_{t = 1} ^ {T} \ind{p_{t}(x_{t}) = v} \cdot \ell(f(x_{t}), y_{t}) - \mathbb{P}(p_{t}(x) = v) \cdot \mathbb{E}[\ell(f(x), y) | p_{t}(x) = v]} \\
        &\le \sum_{p \in \cZ} \sup_{(\ell, f) \in \cL^\cvx \times \cF} \Bigg|\sum_{t = 1} ^ {T} (1 - y_{t}) \cdot \ind{p_{t}(x_{t}) = v} \cdot \ell(f(x_{t}), 0) - \mathbb{P}(p_{t}(x) = v) \cdot \mathbb{E}\bigs{(1 - y) \cdot \ell(f(x), 0) | p_{t}(x) = v}\Bigg| \\
        & + \sum_{p \in \cZ} \sup_{(\ell, f) \in \cL^\cvx \times \cF} \abs{\sum_{t = 1} ^ {T} y_{t} \cdot \ind{p_{t}(x_{t}) = v} \cdot \ell(f(x_{t}), 1) - \mathbb{P}(p_{t}(x) = v) \cdot \mathbb{E}\bigs{y \cdot \ell(f(x), 1) | p_{t}(x) = v}}.
    \end{align*}
    The two terms above can be bounded in an exactly similar manner. For the sake of brevity, we only provide details for bounding the second term. We begin by bounding the absolute value accompanying the second term. Since the function $\ell(p, 1)$ is convex in $p$, it admits a finite $\varepsilon$-basis $\cG$ with coefficient norm $2$ (Lemma \ref{lem:rank_convex}), i.e., there exist functions $g_{1}, \dots, g_{\abs{\cG}}$ such that for each $\ell \in \cL^{\cvx}$ there exist constants $c_{1}(\ell), \dots, c_{\abs{\cG}}(\ell) \in [-1, 1]$ satisfying $\sum_{i = 1} ^ {\abs{\cG}} \abs{c_{i}(\ell)} \le 2$ and $\abs{\ell(p, 1) - \sum_{i = 1} ^ {\abs{\cG}} c_{i}(\ell) g_{i}(p)} \le \varepsilon$ for all $p \in [0, 1]$. Bounding $\ell(f(x_{t}), 1)$
    in terms of its basis representation and applying the Triangle inequality, we obtain the following upper bound on the absolute value accompanying the second term \begin{align*}
        &\abs{\sum_{t = 1} ^ {T} y_{t} \cdot \ind{p_{t}(x_{t}) = v} \cdot \bigc{\sum_{i = 1} ^ {\abs{\cG}} c_{i}(\ell) g_{i}(f(x_{t}))} - \mathbb{P}(p_{t}(x) = v) \cdot \bigc{\sum_{i = 1} ^ {\abs{\cG}} \mathbb{E}\bigs{y \cdot c_{i}(\ell) g_{i}(f(x)) | p_{t}(x) = v}}} \\
        & + \varepsilon \sum_{t = 1} ^ {T} y_{t} \cdot \ind{p_{t}(x_{t}) = v} + \varepsilon \sum_{t = 1} ^ {T} \mathbb{P}(p_{t}(x) = v)
    \end{align*}
    which can be further bounded by \begin{align*}
        &\bigc{\sum_{i = 1} ^ {\abs{\cG}} \abs{c_{i}(\ell)}} \sup_{g \in \cG} \abs{\sum_{t = 1} ^ {T} y_{t} \cdot \ind{p_{t}(x_{t}) = v} \cdot g(f(x_{t})) - \mathbb{P}(p_{t}(x) = v) \cdot \mathbb{E}\bigs{y \cdot g(f(x)) | p_{t}(x) = v}} + \\
        &\varepsilon \sum_{t = 1} ^ {T} y_{t} \cdot \ind{p_{t}(x_{t}) = v} + \varepsilon \sum_{t = 1} ^ {T} \mathbb{P}(p_{t}(x) = v).
    \end{align*}
    Therefore, the following expression upper bounds the second term: \begin{align*}
        \cO\bigc{\sum_{v \in \cZ} \sup_{g \in \cG, f \in \cF} \abs{\sum_{t = 1} ^ {T} y_{t} \cdot \ind{p_{t}(x_{t}) = v} \cdot g(f(x_{t})) - \mathbb{P}(p_{t}(x) = v) \cdot \mathbb{E}\bigs{y \cdot g(f(x)) | p_{t}(x) = v}} +  \varepsilon T}.
    \end{align*}
    Fix a $g \in \cG, f \in \cF, v \in \cZ$ and define the martingale difference sequence $X_{1}, \dots, X_{T}$, where \begin{align*}
        X_{t} \coloneqq y_{t} \cdot \ind{p_{t}(x_{t}) = v} \cdot g(f(x_{t})) - \mathbb{P}(p_{t}(x) = v) \cdot \mathbb{E}\bigs{y \cdot g(f(x)) | p_{t}(x) = v}.
    \end{align*}
    Repeating a similar analysis as in the proof of Lemma \ref{lem:bound_T_1}, we obtain, \begin{align*}
     \abs{\sum_{t = 1} ^ {T} X_{t}} = \cO\bigc{\sqrt{\bigc{\sum_{t = 1} ^ {T} \mathbb{P}(p_{t}(x) = v)} \log \frac{1}{\delta}} + \log \frac{1}{\delta}}.
 \end{align*} Taking a union bound over all $g \in \cG, f \in \cF, v \in \cZ$, we obtain that with probability at least $1 - \delta$, \begin{align*}
        &\abs{\sum_{t = 1} ^ {T} y_{t} \cdot \ind{p_{t}(x_{t}) = v} \cdot g(f(x_{t})) - \mathbb{P}(p_{t}(x) = v) \cdot \mathbb{E}\bigs{y \cdot g(f(x)) | p_{t}(x) = v}} = \\
        & \hspace{40mm} \cO\bigc{\sqrt{ \bigc{\sum_{t = 1} ^ {T} \mathbb{P}(p_{t}(x) = v)} \log \frac{N \abs{\cF} \abs{\cG}}{\delta}} + \log \frac{N \abs{\cF} \abs {\cG}}{\delta}}.
    \end{align*}
    Combining everything, we have shown that the second term can be bounded by \begin{align*}
        \cO\bigc{\sum_{p \in \cZ} \sqrt{ \bigc{\sum_{t = 1} ^ {T} \mathbb{P}(p_{t}(x) = v)} \log \frac{N \abs{\cF} \abs{\cG}}{\delta}} + N \log \frac{N \abs{\cF} \abs {\cG}}{\delta} + \varepsilon T} &= \cO\bigc{\frac{T}{N} + \sqrt{NT \log \frac{N \abs{\cF}}{\delta}}} 
    \end{align*}
    on choosing $\varepsilon = \frac{1}{N}$, bounding $\abs{\cG}$ as per Lemma \ref{lem:rank_convex}, and using the Cauchy-Schwartz inequality. Repeating the steps above, we obtain the same bound for the first term. Adding both the bounds yields the desired result.
\end{proof}
\subsection{Proof of Lemma \ref{lem:bound_T_2_linear_F}}\label{app:bound_T_2_linear_F}
\begin{proof}
    Let $\cC_{\varepsilon}$ be an $\varepsilon$-cover for $\cF_{\mathsf{res}}^{\mathsf{aff}}$. For a $f \in \cF_{\mathsf{res}}^{\mathsf{aff}}$, let $f_{\varepsilon}$ be the representative of $f$. By the $1$-Lipschitzness of $\ell$, we have \begin{align*}
    \abs{\ell(f(x), y) - \ell(f_{\varepsilon}(x), y)} \le \abs{f(x) - f_{\varepsilon}(x)} \le \varepsilon.
\end{align*}
Therefore, $\cT_{2}$ can be bounded as \begin{align*}
    &\cT_{2} \\
    &\le\sum_{p \in \cZ} \sup_{(\ell, f) \in \cL^\cvx \times \cF} \abs{\sum_{t = 1} ^ {T} \ind{p_{t}(x_{t}) = v} \cdot \ell(f(x_{t}), y_{t}) - \mathbb{P}(p_{t}(x) = v) \cdot \mathbb{E}[\ell(f(x), y) | p_{t}(x) = v]} \\
    &\le \sum_{p \in \cZ}\sup_{(\ell, f) \in \cL^\cvx \times \cC_{\varepsilon}} \abs{\sum_{t = 1} ^ {T} \ind{p_{t}(x_{t}) = v} \cdot \ell(f(x_{t}), y_{t}) - \mathbb{P}(p_{t}(x) = v) \cdot \mathbb{E}[\ell(f(x), y) | p_{t}(x) = v]} + 2\varepsilon T \\
    &= \cO\bigc{\frac{T}{N} + \sqrt{NT \log \frac{N \abs{\cC_{\varepsilon}}}{\delta}} + \varepsilon T} = \cO\bigc{\frac{T}{N} + \sqrt{dNT \log \frac{N}{\delta}}}
\end{align*}
on choosing $\varepsilon = \frac{1}{N}$ and bounding $\abs{\cC_{\varepsilon}}$ as per Proposition \ref{prop:cover_linear_functions}. This completes the proof.
\end{proof}
\subsection{Proof of Lemma \ref{lem:bound_swap_agnostic_deviation_finite_F}}\label{app:bound_swap_agnostic_deviation_finite_F}
\begin{proof}
Fix a $v \in \cZ, f \in \cF$, and consider the sequence $X_{1}, \dots, X_{T}$, where $X_{t}$ is defined as \begin{align*}
    X_{t} &\coloneqq \mathbb{P}(p_{t}(x) = v) \cdot \mathbb{E}\bigs{(v - y) ^ {2} - (f(x) - y) ^ {2} | p_{t}(x) = v} - \ind{p_{t}(x_{t}) = v} \cdot \bigc{(v - y_{t}) ^ {2} - (f(x_{t}) - y_{t}) ^ {2}}.
\end{align*}
Observe that the above sequence is a martingale difference sequence. Furthermore, since $f \in [-1, 1]$, we have $\abs{X_{t}} \le 8$ for all $t \in [T]$. Applying Lemma \ref{lem:Freedman}, we obtain $\abs{\sum_{t  = 1} ^ {T} X_{t}} \le \mu V_{X} + \frac{1}{\mu} \log \frac{2}{\delta}$ with probability at least $1 - \delta$, where $\mu \in [0, \frac{1}{8}]$ is fixed and the cumulative conditional variance $V_{X}$ can be bounded as \begin{align*}
    V_{X} \le \sum_{t = 1} ^ {T} \mathbb{P}(p_{t}(x) = v) \cdot \bigc{\mathbb{E}\bigs{(v - y) ^ {2} - (f(x) - y) ^ {2} | p_{t}(x) = v}} ^ {2} \le 16 \sum_{t = 1} ^ {T} \mathbb{P}(p_{t}(x) = v).
\end{align*}
Repeating a similar analysis as in the proof of Lemma \ref{lem:bound_T_1}, we obtain
\begin{align*}
     \abs{\sum_{t = 1} ^ {T} X_{t}} = \cO\bigc{\sqrt{\bigc{\sum_{t = 1} ^ {T} \mathbb{P}(p_{t}(x) = v)} \log \frac{1}{\delta}} + \log \frac{1}{\delta}}.
\end{align*}
Taking a union bound over all $v \in \cZ, f \in \cF$, we obtain that with probability at least $1 - \delta$ \begin{align}
    \sup_{f \in \cF} \abs{\sum_{t = 1} ^ {T} \mathbb{P}(p_{t}(x) = v) \cdot \mathbb{E}\bigs{(v - y) ^ {2} - (f(x) - y) ^ {2} | p_{t}(x) = v} -  \bigc{(v - y_{t}) ^ {2} - (f(x_{t}) - y_{t}) ^ {2}}} &= \nn \\
    &\hspace{-25em}\cO\bigc{\sqrt{\bigc{\sum_{t = 1} ^ {T} \mathbb{P}(p_{t}(x) = v)} \log \frac{N \abs{\cF}}{\delta}} + \log \frac{N \abs{\cF}}{\delta}} \label{eq:tmp_bound_sup_f}
\end{align}
holds simultaneously for all $v \in \cZ$. Therefore, we can finally upper bound $\Delta$ as \begin{align*}
    \Delta = \cO\bigc{\frac{1}{T} \sqrt{N \bigc{\sum_{t = 1} ^ {T} \sum_{v \in \cZ} \mathbb{P}(p_{t}(x) = v)} \log \frac{N \abs{\cF}}{\delta}} + \frac{N}{T} \log \frac{N \abs{\cF}}{\delta}} = \cO\bigc{\sqrt{\frac{N}{T} \log \frac{N \abs{\cF}}{\delta}}},
\end{align*}
where the first equality follows from \eqref{eq:tmp_bound_sup_f} and the Cauchy-Schwartz inequality, while the second equality follows by dropping the lower order term. This completes the proof.    
\end{proof} 
\subsection{Proof of Lemma \ref{lem:bound_deviation_swap_agnostic_learning}}\label{app:bound_deviation_swap_agnostic_learning}
\begin{proof}
    Let $\ell$ be the squared loss and $\cC_{\varepsilon}$ be an $\varepsilon$-cover for $\cF_{4} ^ {\lin}$. Observe that the squared loss is $6$-Lipschitz in $p$ for $p \in [-1, 1]$. For a $f \in \cF$,  letting $f_{\varepsilon}$ be the representative of $f$, we have $\abs{\ell(f(x), y) - \ell(f_{\varepsilon}(x), y)} \le 6 \abs{f(x) - f_{\varepsilon}(x)} \le 6 \varepsilon$ for all $x \in \cX, y \in \{0, 1\}$. It then follows from \eqref{eq:bound_delta} that \begin{align*}
        \Delta &\le \frac{1}{T} \sum_{v \in \cZ} \sup_{f \in \cF} \abs{\sum_{t = 1} ^ {T} \mathbb{P}(p_{t}(x) = v) \cdot \mathbb{E}\bigs{\ell(v, y) - \ell(f(x), y) | p_{t}(x) = v} - \ind{p_{t}(x) = v} \cdot \bigc{\ell(v, y_{t}) - \ell(f(x_{t}), y_{t})}} \\
        &\le \frac{1}{T} \sum_{v \in \cZ} \sup_{f \in \cC_{\varepsilon}} \abs{\sum_{t = 1} ^ {T} \mathbb{P}(p_{t}(x) = v) \cdot \mathbb{E}\bigs{\ell(v, y) - \ell(f(x), y) | p_{t}(x) = v} - \ind{p_{t}(x) = v} \cdot \bigc{\ell(v, y_{t}) - \ell(f(x_{t}), y_{t})}} \\
        &\hspace{5em} + \frac{6\varepsilon}{T} \sum_{t = 1} ^ {T} \sum_{v \in \cZ} \mathbb{P}(p_{t}(x) = v) + \frac{6\varepsilon}{T} \sum_{t = 1} ^ {T} \sum_{v \in \cZ} \ind{p_{t}(x) = v} \\
        &= \cO\bigc{\varepsilon + \sqrt{\frac{N}{T} \log \frac{N\abs{\cC_{\varepsilon}}}{\delta}}} = \cO\bigc{\frac{1}{N ^ {2}} + \sqrt{\frac{dN}{T} \log \frac{N}{\delta}}},
    \end{align*}
    where the second inequality follows by bounding $\ell(f(x), y)$ in terms of $\ell(f_{\varepsilon}(x), y)$; the first equality follows from Lemma \ref{lem:bound_swap_agnostic_deviation_finite_F}, while the final equality follows by choosing $\varepsilon = \frac{1}{N ^ {2}}$ and bounding $\abs{\cC_{\varepsilon}}$ as per Proposition \ref{prop:cover_linear_functions}. This completes the proof.
\end{proof}
\subsection{Proof of Lemma \ref{lem:characterization_offline}}\label{app:characterization_offline}
\begin{proof}
    The proof follows a similar approach to that of Lemma \ref{lem:contextual_swap_regret_bounds_multicalibration}. Assume on the contrary that $$\sup_{\{f_{v} \in \cF_{1}\}_{v \in \cZ}} \mathbb{E}_{p \sim \pi} \mathbb{E}_{v}\bigs{\mathbb{E} \bigs{f_{v}(x) \cdot (y - v) | p(x) = v} ^ {2}} > \alpha.$$
    Then, there exists a comparator profile $\{f_{v} \in \cF_{1}\}_{v \in \cZ}$ such that $$\mathbb{E}_{p \sim \pi} \bigs{\sum_{v \in \cZ} \mathbb{P}(p(x) = v) \cdot \mathbb{E}[f_{v}(x) (y - v) | p(x) = v] ^ {2}} > \alpha.$$ 
    For simplicity, define $\alpha_{v} \coloneqq \mathbb{P}(p(x) = v) \cdot  \mathbb{E}[f_{v}(x) (y - v) | p(x) = v] ^ {2}$. Then, for all $v \in \cZ$, we have \begin{align*}
        \abs{\mathbb{E}[f_{v}(x) (y - v) | p(x) = v]} = \sqrt{\frac{\alpha_{v}}{\mathbb{P}(p(x) = v)}}.
    \end{align*}
    Clearly, there exists a function $f_{v}^\star$ which is either $f_{v}$ or $-f_{v}$ such that $
        \mathbb{E}[f^\star_{v}(x) (y - v) | p(x) = v] = \sqrt{\frac{\alpha_{v}}{\mathbb{P}(p(x) = v)}}$.
    Furthermore, $f_{v}^\star \in \cF_{1}$ since $\cF$ satisfies Assumption \ref{assumption:closeness}. As per Lemma \ref{lem:converse_distributional}, for each $v \in \cZ$ there exists a $f_{v}' \in \cF_{4}$ such that \begin{align*}
        \mathbb{P}(p(x) = v) \cdot \mathbb{E}[(p(x) - y) ^ {2} - (f_{v}'(x) - y) ^ {2} | p(x) = v] \ge \alpha_{v}
    \end{align*}
    for all $v \in \cZ$. Summing over $v \in \cZ$, we obtain
        $\mathbb{E}_{v}\mathbb{E}\bigs{(p(x) - y) ^ {2} - (f_{v}'(x) - y) ^ {2} | p(x) = v} \ge \sum_{v \in \cZ} \alpha_{v}$. Taking expectation over $p \sim \pi$, we obtain \begin{align*}
            \mathbb{E}_{p \sim \pi} \bigs{\mathbb{E}_{v}\mathbb{E}\bigs{(p(x) - y) ^ {2} - (f_{v}'(x) - y) ^ {2} | p(x) = v}} \ge \mathbb{E}_{p \sim \pi}\bigs{\sum_{v \in \cZ} \alpha_{v}} > \alpha
        \end{align*}
    which contradicts the assumption $$\sup_{\{f_{v} \in \cF_{4}\}_{v \in \cZ}} \mathbb{E}_{(x, y) \sim \cD, p \sim \pi} \bigs{(p(x) - y) ^ {2} - (f_{p(x)}(x) - y) ^ {2}}\le \alpha.$$ This completes the proof.
\end{proof}
\end{document}